\def\eqref#1{equation~\ref{#1}}
\def\1{\bm{1}}
\def\vb{{\bm{b}}}
\def\vc{{\bm{c}}}
\def\vd{{\bm{d}}}
\def\vu{{\bm{u}}}
\def\vx{{\bm{x}}}
\def\vz{{\bm{z}}}
\def\evb{{b}}
\def\evd{{d}}
\def\mW{{\bm{W}}}
\DeclareMathAlphabet{\mathsfit}{\encodingdefault}{\sfdefault}{m}{sl}
\SetMathAlphabet{\mathsfit}{bold}{\encodingdefault}{\sfdefault}{bx}{n}
\def\gC{{\mathcal{C}}}
\def\gD{{\mathcal{D}}}
\def\gL{{\mathcal{L}}}
\def\gP{{\mathcal{P}}}
\def\gQ{{\mathcal{Q}}}
\def\gS{{\mathcal{S}}}
\def\gT{{\mathcal{T}}}
\def\sD{{\mathbb{D}}}
\newcommand{\R}{\mathbb{R}}
\DeclareMathOperator*{\argmax}{arg\,max}
\DeclareMathOperator*{\argmin}{arg\,min}
\DeclareMathOperator{\sign}{sign}
\newtheorem{theorem}{Theorem}[section]
\theoremstyle{definition}
\newtheorem{definition}{Definition}[section]
\newtheorem{lemma}{Lemma}[section]
\newcommand{\cmark}{\ding{52}}%
\newcommand{\xmark}{\ding{56}}%
\newcommand\blfootnote[1]{%
  \begingroup
  \renewcommand\thefootnote{}\footnote{#1}%
  \addtocounter{footnote}{-1}%
  \endgroup
}
\title{Few-shot Learning as Cluster-induced Voronoi Diagrams: A Geometric Approach}
\author[1]{\textbf{Chunwei Ma}}
\author[2]{\textbf{Ziyun Huang}}
\author[1]{\textbf{Mingchen Gao}}
\author[1]{\textbf{Jinhui Xu}}
\affil[1]{%
    Department of Computer Science and Engineering\\
    University at Buffalo % \\
    % Buffalo, NY, USA
}
\affil[2]{%
    Computer Science and Software Engineering\\
    Penn State Erie % \\
    % Erie, PA, USA
}
\affil[1]{\texttt{\{chunweim,mgao8,jinhui\}@buffalo.edu}}
\affil[2]{\texttt{\{zxh201\}@psu.edu}}
\begin{document}

\maketitle

\begin{abstract}
    Few-shot learning (FSL) is the process of rapid generalization from abundant base samples to inadequate novel samples. Despite extensive research in recent years, FSL is still not yet able to generate satisfactory solutions for a wide range of real-world applications. 
    % To achieve better learning from few samples, two critical questions need to be answered first: (1) how to sufficiently compensate for the deficiency of support samples and (2) how to make the most use of the base samples and the pretrained model. 
    To confront this challenge, we study the FSL problem from a geometric point of view in this paper. One observation is that the widely embraced ProtoNet model is essentially a Voronoi Diagram (VD) in the feature space. We retrofit it by making use of a recent advance in computational geometry called \emph{Cluster-induced Voronoi Diagram (CIVD)}. Starting from the simplest nearest neighbor model, CIVD gradually incorporates cluster-to-point and then cluster-to-cluster relationships for space subdivision, which is used to improve the accuracy and robustness %of
    at multiple stages of FSL. Specifically, we use CIVD (1) to integrate parametric and nonparametric few-shot classifiers; (2) to combine feature representation and surrogate representation; (3) and to leverage feature-level, transformation-level, and geometry-level heterogeneities for a better ensemble. Our CIVD-based workflow enables us to achieve new state-of-the-art results on mini-ImageNet, CUB, and tiered-ImagenNet datasets, with ${\sim}2\%{-}5\%$ improvements upon the next best. To summarize, CIVD provides a mathematically elegant and geometrically interpretable framework that compensates for extreme data insufficiency, prevents overfitting, and allows for fast geometric ensemble for thousands of individual VD. These together make FSL stronger. \blfootnote{All four authors are corresponding authors.} %GitHub: Our code is publicly available at https://github.com/horsepurve/DeepVoro.
\end{abstract}

% ==================== ==================== ==================== ==================== ====================
% Introduction
% ==================== ==================== ==================== ==================== ====================
%\vspace{-0.2in} % do we need \vspace here?
\section{Introduction} \label{sec:intro}

Recent years have witnessed a tremendous success of deep learning in a number of data-intensive applications; one critical reason for which is the vast collection of hand-annotated high-quality data, such as the millions of natural images for visual object recognition~\citep{deng2009imagenet}. However, in many real-world applications, such large-scale data acquisition might be difficult and comes at a premium, such as in rare disease diagnosis~\citep{yoo2021feasibility} and drug discovery~\citep{ma2021few, ma2018improved}. As a consequence, Few-shot Learning (FSL) has recently drawn growing interests~\citep{survey2020}.

Generally, few-shot learning algorithms can be categorized into two types, namely \emph{inductive} and \emph{transductive}, depending on whether estimating the distribution of query samples is allowed. A typical transductive FSL algorithm learns to propagate labels among a larger pool of query samples in a semi-supervised manner~\citep{liu2018learning}; notwithstanding its normally higher performance, in many real world scenarios a query sample (e.g. patient) also comes individually and is unique, for instance, in personalized pharmacogenomics~\citep{btaa442}. Thus, we in this paper adhere to the inductive setting and make on-the-fly prediction for each newly seen sample.

Few-shot learning is challenging and substantially different from conventional deep learning, and has been tackled by many researchers from a wide variety of angles. Despite the extensive research on the algorithmic aspects of FSL (see Sec. \ref{sec:related-work}), two challenges still pose an obstacle to successful FSL: (1) how to sufficiently compensate for the data deficiency in FSL? and (2) how to make the most use of the base samples and the pre-trained model?

For the first question, data augmentation has been a successful approach to expand the size of data, either by Generative Adversarial Networks (GANs)~\citep{goodfellow2014generative} ~\citep{li2020adversarial, zhang2018metagan} or by variational autoencoders \textcolor{black}{(VAEs)}~\citep{Kingma2014} ~\citep{zhang2019variational, chen2019multi}. However, in each way, the authenticity of either the augmented data or the feature is not guaranteed, and the out-of-distribution hallucinated samples~\citep{ma2019neural} may hinder the subsequent FSL. Recently, \citet{taskaugmentation} and \citet{ni2021data} investigate support-level, query-level, task-level, and shot-level augmentation for meta-learning, but the diversity of FSL models has not been taken into consideration. For the second question, ~\citet{free2021} borrows the top-2 nearest base classes for each novel sample to calibrate its distribution and to generate more novel samples. However, when there is no proximal base class, this calibration may utterly alter the distribution. Another line of work~\citep{sbai2020impact, Zhou_2020} learns to select and design base classes for a better discrimination on novel classes, which all introduce extra training burden. As a matter of fact, we still lack a method that makes full use of the base classes and the pretrained model effectively.

In this paper, we study the FSL problem from a geometric point of view. In metric-based FSL, despite being surprisingly simple, the nearest neighbor-like approaches, e.g. ProtoNet~\citep{snell2017prototypical} and SimpleShot~\citep{wang2019simpleshot}, have achieved remarkable performance that is even better than many sophisticatedly designed methods. Geometrically, what a nearest neighbor-based method does, under the hood, is to partition the feature space into a \emph{Voronoi Diagram (VD)} that is induced by the feature centroids of the novel classes. Although it is highly efficient and simple, Voronoi Diagrams coarsely draw the decision boundary by linear bisectors separating two centers, and may lack the ability to subtly delineate the geometric structure arises in FSL.

% save original \intextsep | see: https://tex.stackexchange.com/questions/258956/latex-wraptable-too-much-leading-space
\newlength{\oldintextsep}
\setlength{\oldintextsep}{\intextsep}

\setlength\intextsep{-8pt} % change this value
\begin{wraptable}{r}{7.5cm} % change this value
    \caption{\textcolor{black}{The underlying geometric structures for various FSL methods.}}\label{tab:geo}    
    \smallskip
    \small
    \begin{tabular}{l|l}\toprule
    \textbf{Method} &\textbf{Geometric Structure} \\\midrule
    ProtoNet~\citep{snell2017prototypical} &Voronoi Diagram \\
    S2M2\_R~\citep{charting2020} &spherical VD \\
    DC~\citep{free2021} &Power Diagram \\
    \cellcolor[HTML]{f3f3f3}DeepVoro\texttt{-{}-} (ours) &\cellcolor[HTML]{f3f3f3}CIVD \\
    \cellcolor[HTML]{f3f3f3}DeepVoro/DeepVoro++ (ours) &\cellcolor[HTML]{f3f3f3}CCVD \\
    \bottomrule
    \end{tabular}
    \vspace{3mm} % change this value
\end{wraptable}

To resolve this \textcolor{black}{issue}, we adopt a novel technique called \emph{Cluster-induced Voronoi Diagram (CIVD)} \citep{ChenHLX13, ChenHL017, HuangX20, HuangCX21}, which is a recent breakthrough in computation geometry. CIVD generalizes VD from a point-to-point distance-based diagram to a cluster-to-point influence-based structure. It enables us to determine the dominating region (or Voronoi cell) not only for a point (e.g. a class prototype) but also for a \emph{cluster} of points, guaranteed to have a $(1+\epsilon)$-approximation with a nearly linear size of diagram for a wide range of locally dominating influence functions. CIVD provides us a mathematically elegant framework to depict the feature space and draw the decision boundary more precisely than VD without losing the resistance to overfitting. 

Accordingly, in this paper, we show how CIVD is used to improve multiple stages of FSL and make several contributions as follows.
%\begin{enumerate} % \item 

\noindent\textbf{1.} We first categorize different types of few-shot classifiers as different variants of Voronoi Diagram: nearest neighbor model as Voronoi Diagram, linear classifier as Power Diagram, and cosine classifier as spherical Voronoi Diagram (Table \ref{tab:geo}). We then unify them via CIVD that enjoys the advantages of multiple models, either parametric or nonparametric \textcolor{black}{(denoted as DeepVoro\texttt{-{}-})}.

\noindent\textbf{2.} \textcolor{black}{Going from cluster-to-point to cluster-to-cluster influence, we further propose \emph{Cluster-to-cluster Voronoi Diagram (CCVD)}, as a natural extension of CIVD. Based on CCVD, we present DeepVoro which enables fast geometric ensemble of a large pool of thousands of configurations for FSL.}

\noindent\textbf{3.} \textcolor{black}{Instead of using base classes for distribution calibration and data augmentation~\citep{free2021}, we propose a novel \emph{surrogate representation}, the collection of similarities to base classes, and thus promote DeepVoro to DeepVoro++ that integrates feature-level, transformation-level, and geometry-level heterogeneities in FSL.}

Extensive experiments have shown that, although a fixed feature extractor is used without independently pretrained or epoch-wise models, our method achieves new state-of-the-art results on all three benchmark datasets including mini-ImageNet, CUB, and tiered-ImageNet, and improves by up to $2.18\%$ on 5-shot classification, $2.53\%$ on 1-shot classification, and up to $5.55\%$ with different network architectures. 
\begin{figure}
    \centering
    \subfloat{\includegraphics[width= 1.85in]{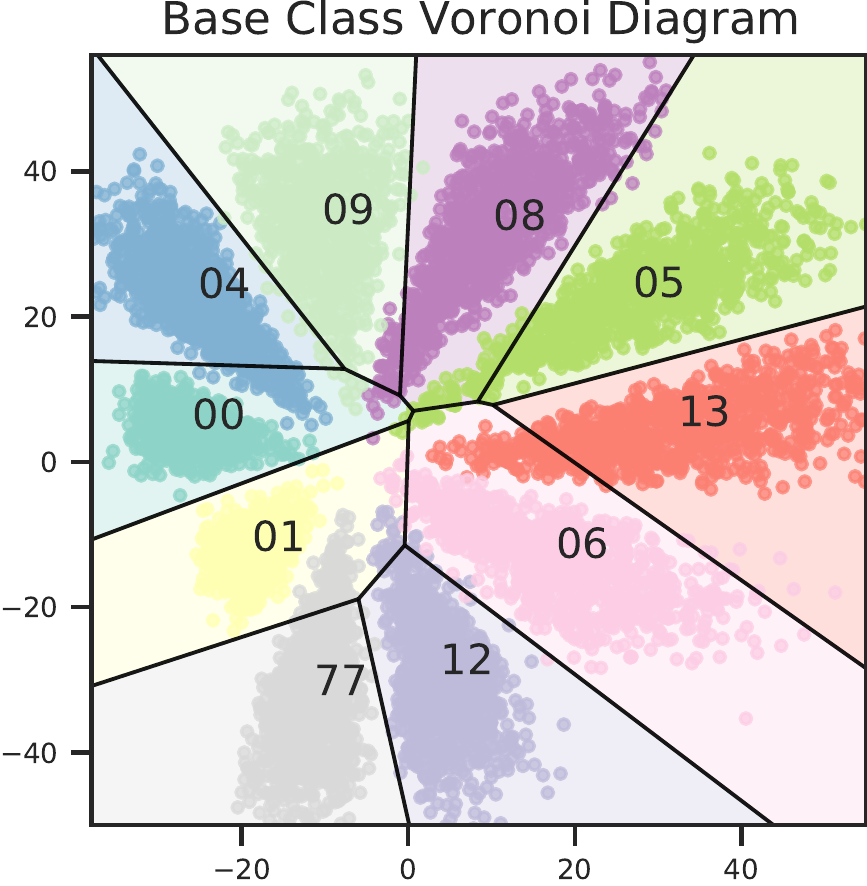}}
    \subfloat{\includegraphics[width= 1.85in]{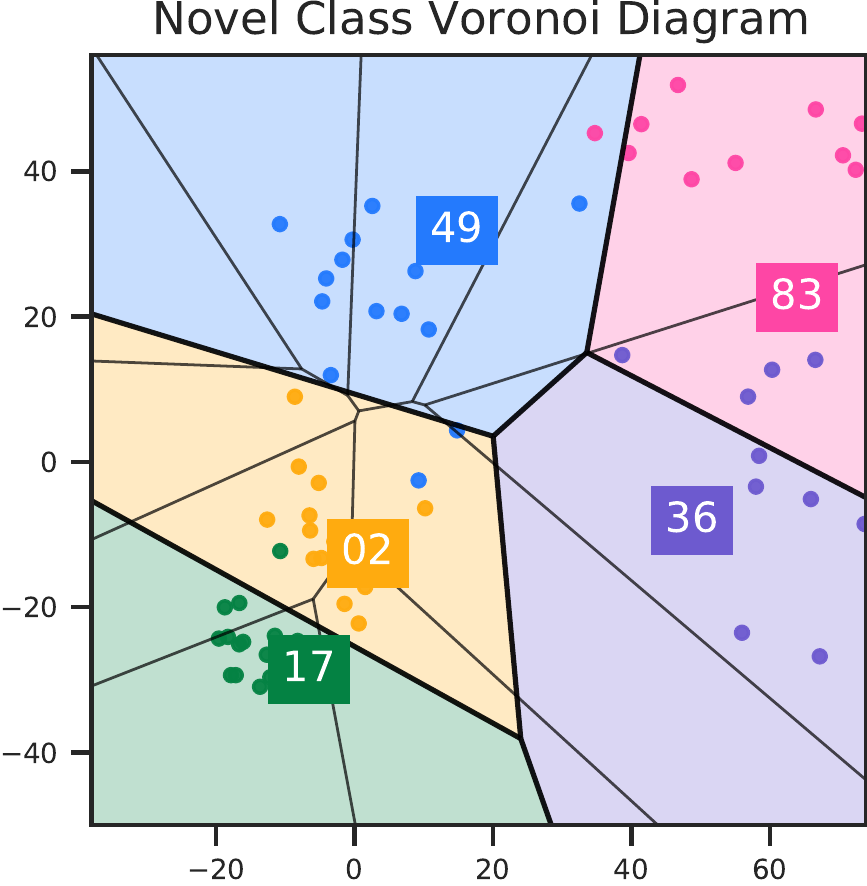}} \hspace{2mm}
    \subfloat{\includegraphics[width= 1.38in]{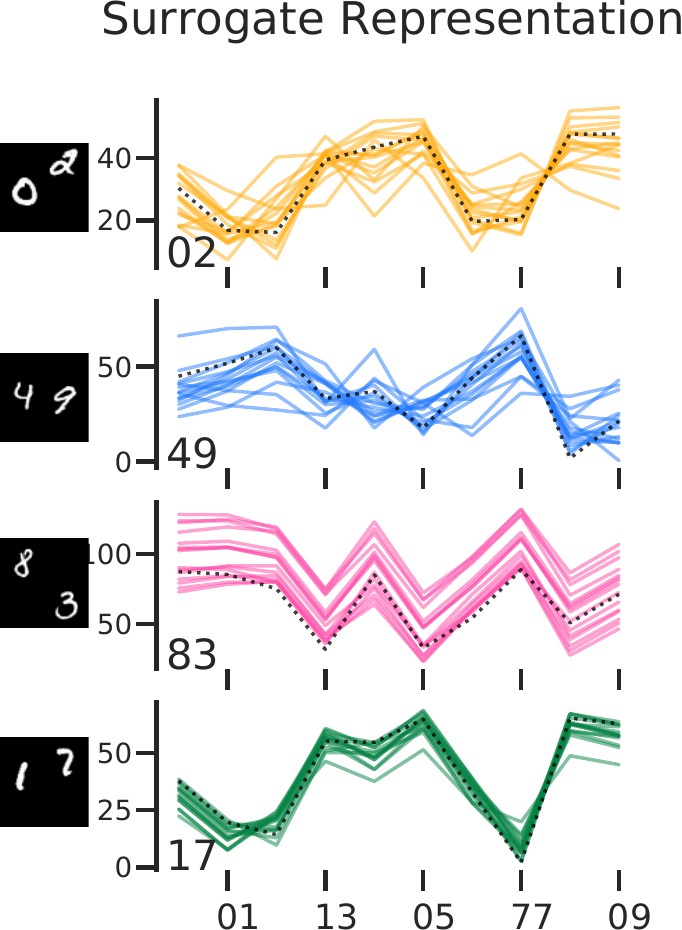}}
    \caption{Schematic illustrations of Voronoi Diagram (VD) and surrogate representation on MultiDigitMNIST dataset~\citep{mulitdigitmnist}. Left and central panels demonstrate the VD of base classes and novel classes (5-way 1-shot) in $\R^2$, respectively. The colored squares stand for the 1-shot support samples. In the right panel, for each support sample, the surrogate representation (dotted line) exhibits a unique pattern which those of the query samples (colored lines) also follow. (See Appendix \ref{supp:mnist} for details.)}\label{fig:flow}
\end{figure}

% ==================== ==================== ==================== ==================== ====================
% Related Work
% ==================== ==================== ==================== ==================== ====================
\section{Related Work} \label{sec:related-work}

\noindent\textbf{Few-Shot Learning.} There are a number of different lines of research dedicated to FSL. (1) Metric-based methods employ a certain distance function (cosine distance~\citep{charting2020, xu2021attentional}, Euclidean distance~\citep{wang2019simpleshot, snell2017prototypical}, or Earth Mover’s Distance~\citep{Zhang_2020_CVPR, zhang2020deepemdv2}) to bypass the optimization and avoid possible overfitting. (2) Optimization-based approaches~\citep{finn2017model} manages to learn a good model initialization that accelerates the optimization in the meta-testing stage. (3) Self-supervised-based~\citep{zhang2021iept, charting2020} methods incorporate supervision from data itself to learn a robuster feature extractor. (4) Ensemble method is another powerful technique that boosting the performance by integrating multiple models~\citep{ma2021improving}. For example, \citet{dvornik2019diversity} trains several networks simultaneously and encourages robustness and cooperation among them. However, due to the high computation load of training deep models, this ensemble is restricted by the number of networks which is typically ${<}20$. In \citet{liu2020ensemble}, instead, the ensemble consists of models learned at each epoch, which, may potentially limit the diversity of ensemble members. 
% In this paper, however, we include training-free data(feature)-level, transformation-level, and model(geometry)-level heterogeneities into a fast geometric ensemble containing up to ${\sim} 1800$ members, with high diversity and efficiency.

\noindent\textbf{Geometric Understanding of Deep Learning.} The geometric structure of deep neural networks is first hinted at by \citet{raghu2017expressive} who reveals that piecewise linear activations subdivide input space into convex polytopes. Then, \citet{power2019} points out that the exact structure is a Power Diagram~\citep{aurenhammer1987power} which is subsequently applied upon recurrent neural network~\citep{wang2018max} and generative model~\citep{balestriero2020max}. The Power/Voronoi Diagram subdivision, however, is not necessarily the optimal model for describing feature space. Recently, \citet{ChenHLX13, ChenHL017, HuangCX21} uses an influence function $F(\gC, \vz)$ to measure the joint influence of all objects in $\gC$ on a query $\vz$ to build a Cluster-induced Voronoi Diagram (CIVD).
% that ensures a $(1+\epsilon)$-approximation with a nearly linear size of diagram. 
In this paper, we utilize CIVD to magnify the expressivity of geometric modeling for FSL.
%\noindent\textbf{High dimension, Low Sample Size Data.} The high dimensionality, low sample size (HDLSS) problem is motivated by problems arise in genetics~\citep{zhou2015predicting} where the number of genes is much larger than the number of patients. \citet{qiangyang17} is the first attempt on HDLSS by deep models that selects subsets of high dimensional features and average over multiple dropouts to alleviate overfitting. The HDLSS problem, to our knowledge, is not well explored in the context of FSL. By several novel techniques, the HDLSS problem is much better addressed in this paper.
% ==================== ==================== ==================== ==================== ====================
% Methodology
% ==================== ==================== ==================== ==================== ====================

\section{Methodology} \label{sec:method}

% ==================== Problem Setting ====================
\subsection{Preliminaries} \label{sec:basics}

Few-shot learning aims at discriminating between novel classes $\gC^{\text{novel}}$ with the aid of a larger amount of samples from base classes $\gC^{\text{base}}, \gC^{\text{novel}} \cap \gC^{\text{base}} = \emptyset$. The whole learning process usually follows the \emph{meta-learning} scheme. Formally, given a dataset of base classes $\gD = \{(\vx_i, y_i)\}, \vx_i \in \sD, y_i \in \gC^{\text{base}}$ with $\sD$ being an arbitrary domain e.g. natural image, a deep neural network $\vz = \phi(\vx), \vz \in \R^n$, which maps from image domain $\sD$ to feature domain $\R^n$, is trained using standard gradient descent algorithm, and after which $\phi$ is fixed as a feature extractor. This process is referred to as \emph{meta-training} stage that squeezes out the commonsense knowledge from $\gD$.

For a fair evaluation of the learning performance on a few samples, the \emph{meta-testing} stage is typically formulated as a series of $K$-way $N$-shot tasks (episodes) $\{\gT\}$. Each such episode is further decomposed into a \emph{support set} $\gS = \{(\vx_i, y_i)\}_{i = 1}^{K \times N}, y_i \in \gC^{\gT}$ and a \emph{query set} $\gQ = \{(\vx_i, y_i)\}_{i = 1}^{K \times Q}, y_i \in \gC^{\gT}$, in which the episode classes $\gC^{\gT}$ is a randomly sampled subset of $\gC^{\text{novel}}$ with cardinality $K$, and each class contains only $N$ and $Q$ random samples in the support set and query set, respectively. For few-shot classification, we introduce here two widely used schemes as follows. For simplicity, all samples here are from $\gS$ and $\gQ$, without data augmentation applied.

\noindent\textbf{Nearest Neighbor Classifier (Nonparametric).} In ~\citet{snell2017prototypical, wang2019simpleshot} etc., a \emph{prototype} $\vc_k$ is acquired by averaging over all supporting features for a class $k \in \gC^{\gT}$:
\begin{equation}\label{eq:prototype}
    \vc_k = \frac{1}{N}\ {\textstyle\sum}_{\vx \in \gS, y = k}\ \phi(\vx)
\end{equation}
Then each query sample $\vx \in \gQ$ is classified by finding the nearest prototype: $\hat{y} = \mathbin{\color{black} \argmin_k} d(\vz, \vc_k) = ||\vz - \vc_k||_2^2$, in which we use Euclidean distance for distance metric $d$.

\noindent\textbf{Linear Classifier (Parametric).} Another scheme uses a linear classifier with cross-entropy loss optimized on the supporting samples:
\begin{equation}\label{eq:linear}
    \gL(\mW, \vb) = {\textstyle\sum}_{(\vx, y) \in \gS} -\log p(y|\phi(\vx);\mW,\vb) = {\textstyle\sum}_{(\vx, y) \in \gS} -\log \frac{\exp(\mW_y^T \phi(\vx) + \evb_y)}{\sum_k \exp(\mW_k^T \phi(\vx) + \evb_k)}
\end{equation}
in which $\mW_k, \evb_k$ are the linear weight and bias for class $k$, and the predicted class for query $\vx \in \gQ$ is $\hat{y} = \argmax_k p(y|\vz;\mW_k,\evb_k)$.

% ==================== CIVD ====================
\subsection{Few-shot learning as Cluster-induced Voronoi Diagrams} \label{sec:civd}

In this section, we first introduce the basic concepts of Voronoi Tessellations, and then show how parametric/nonparametric classifier heads can be unified by VD.
\begin{definition}[Power Diagram and Voronoi Diagram] \label{def:power}
    Let $\Omega = \{\omega_1,...,\omega_K\}$ be a partition of the space $\R^n$, and $\gC = \{\vc_1,...,\vc_K\}$ be a set of centers such that $\cup_{r=1}^K \omega_r = \R^n, \cap_{r=1}^K \omega_r = \emptyset$. Additionally, each center is associated with a weight $\nu_r \in \{ \nu_1,...,\nu_K \} \subseteq \R^+$. Then the set of pairs $\{ (\omega_1, \vc_1, \nu_1),...,(\omega_K, \vc_L, \nu_K) \}$ is a Power Diagram (PD), where each cell is obtained via $\omega_r = \{ \vz \in \R^n : r(\vz) = r\}, r \in \{1,..,K\}$, with
    \begin{equation}\label{eq:power}
        r(\vz) = \argmin_{k \in \{ 1,...,K \}} d(\vz, \vc_k)^2 - \nu_k.
    \end{equation}
    If the weights are equal for all $k$, i.e. $\nu_k = \nu_{k'}, \forall k,k' \in \{1,...,K\}$, then a PD collapses to a Voronoi Diagram (VD).
\end{definition}

By definition, it is easy to see that the nearest neighbor classifier naturally partitions the space into $K$ cells with centers $\{ \vc_1,...,\vc_K \}$. Here we show that the linear classifier is also a VD under a mild condition.

%\begin{lemma} \label{lem:power}
%    The vertical projection from the lower envelope of the hyperplanes $\{\Pi_k(\vz) : \mW_k^T \vz + \evb_k\}_{k=1}^K$ onto the input space $\R^n$ defines the cells of a PD.
%\end{lemma}

\begin{theorem}[Voronoi Diagram Reduction] \label{thm:thm}
    The linear classifier parameterized by $\mW, \vb$ partitions the input space $\R^n$ to a Voronoi Diagram with centers $\{ \tilde{\vc}_1,...,\tilde{\vc}_K \}$ given by $\tilde{\vc}_k = \frac{1}{2} \mW_k$ if $\evb_k = -\frac{1}{4} ||\mW_k||_2^2, k = 1,...,K$.
\end{theorem}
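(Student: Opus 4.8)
The plan is to show that the \emph{decision regions} cut out by the linear classifier coincide exactly with the Voronoi cells generated by the proposed centers, so that the induced partition is a VD in the sense of Definition~\ref{def:power}. The first step is to strip away the softmax: since $t \mapsto \exp(t)$ is strictly increasing and the normalizing denominator $\sum_k \exp(\mW_k^T\vz + \evb_k)$ is common to all classes, the prediction rule $\hat{y} = \argmax_k p(y\mid\vz;\mW_k,\evb_k)$ is identical to $\hat{y} = \argmax_k\,(\mW_k^T \vz + \evb_k)$. Thus it suffices to compare this linear score maximization against the Voronoi assignment $r(\vz) = \argmin_k \|\vz - \tilde{\vc}_k\|_2^2$.

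The core computation is to expand the squared Euclidean distance and complete the square. Writing $\|\vz - \tilde{\vc}_k\|_2^2 = \|\vz\|_2^2 - 2\tilde{\vc}_k^T\vz + \|\tilde{\vc}_k\|_2^2$ and discarding the term $\|\vz\|_2^2$, which is constant across $k$ and therefore irrelevant to the argmin, I would convert the minimization into a maximization,
\begin{equation*}
    \argmin_k \|\vz - \tilde{\vc}_k\|_2^2 = \argmax_k\,\bigl(2\tilde{\vc}_k^T\vz - \|\tilde{\vc}_k\|_2^2\bigr).
\end{equation*}
Substituting the claimed centers $\tilde{\vc}_k = \tfrac{1}{2}\mW_k$ makes the linear part $2\tilde{\vc}_k^T\vz = \mW_k^T\vz$ match the classifier's linear term exactly, while the constant part becomes $-\|\tilde{\vc}_k\|_2^2 = -\tfrac14\|\mW_k\|_2^2$. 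Comparing the two objectives term by term, the linear classifier's partition agrees with this Voronoi partition precisely when the classifier bias satisfies $\evb_k = -\tfrac14\|\mW_k\|_2^2$ for every $k$, which is the hypothesis of the theorem. Since the argmax rules then coincide pointwise on $\R^n$, the cells are identical and the partition is a VD.

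The step that deserves the most care — and the conceptual heart of the statement — is interpreting the general case correctly. For an \emph{arbitrary} bias $\vb$, the same manipulation shows the linear classifier realizes a Power Diagram with centers $\tilde{\vc}_k = \tfrac12\mW_k$ and weights $\nu_k = \evb_k + \tfrac14\|\mW_k\|_2^2$; the reduction to a genuine VD is exactly the assertion that these power weights become equal across all classes. The prescribed condition $\evb_k = -\tfrac14\|\mW_k\|_2^2$ forces $\nu_k \equiv 0$, i.e. equal weights, which by the last line of Definition~\ref{def:power} collapses the Power Diagram to a Voronoi Diagram. I do not anticipate a technical obstacle in the algebra; the only subtlety is to state that the boundary set (where the argmax is non-unique) has measure zero and to fix a consistent tie-breaking convention so that the two partitions agree as genuine cell decompositions rather than merely almost everywhere.
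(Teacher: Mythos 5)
Your proposal is correct, but it takes a genuinely different route from the paper. You work entirely at the level of decision rules: strip the softmax, complete the square in $\|\vz - \tilde{\vc}_k\|_2^2$, and match the resulting objective $2\tilde{\vc}_k^T\vz - \|\tilde{\vc}_k\|_2^2$ term by term against the classifier score $\mW_k^T\vz + \evb_k$, which immediately yields the center identification $\tilde{\vc}_k = \tfrac12\mW_k$, the power weights $\nu_k = \evb_k + \tfrac14\|\mW_k\|_2^2$ in the general case, and the reduction to a VD when the stated bias condition forces $\nu_k \equiv 0$. The paper instead invokes the classical lifting construction of computational geometry (following Aurenhammer): it cites a lemma that the vertical projection of the lower envelope of the hyperplanes $\Pi_k(\vz) = \mW_k^T\vz + \evb_k$ in $\R^{n+1}$ defines a Power Diagram, uses the bijection between spheres $S$ in $\R^n$ and hyperplanes $y = 2\vz\cdot\vu - \vu\cdot\vu + r^2$ tangent-wise associated with the paraboloid $y = \vz^2$, and reads off $\vu = \tfrac12\mW_k$, $r^2 = \evb_k + \tfrac14\|\mW_k\|_2^2$ by coefficient matching, then sets $r^2 = 0$ to collapse the PD to a VD. The two arguments identify exactly the same centers and weights; yours is more elementary and self-contained (no appeal to the envelope lemma or the sphere--hyperplane correspondence), and it handles the softmax-to-linear-score reduction explicitly, which the paper leaves implicit. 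What the paper's route buys is the structural interpretation: the weights are squared radii of the lifted spheres, and the construction connects the result to the prior geometric characterizations of deep networks as Power Diagrams, which is why the paper phrases the condition as constraining $r^2$ to zero during optimization. Your closing remark on tie-breaking on measure-zero boundaries is a point the paper glosses over; it is a harmless but welcome precision.
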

\begin{proof}
    See Appendix \ref{supp:power} for details.
  \end{proof}

\subsubsection{From Voronoi Diagram to Cluster-induced Voronoi Diagram} \label{sec:civd_ens}

Now that both nearest neighbor and linear classifier have been unified by VD, a natural idea is to integrate them together. Cluster-induced Voronoi Diagram (CIVD)~\citep{ChenHL017, HuangCX21} is a generalization of VD which allows multiple centers in a cell, and is successfully used for clinical diagnosis from biomedical images~\citep{neutrophils}, providing an ideal tool for the integration of parametric/nonparametric classifier for FSL. Formally:

\begin{definition}[Cluster-induced Voronoi Diagram (CIVD)~\citep{ChenHL017, HuangCX21}] \label{def:civd}
    Let $\Omega = \{\omega_1,...,\omega_K\}$ be a partition of the space $\R^n$, and $\gC = \{\gC_1,...,\gC_K\}$ be a set (possibly a multiset) of \emph{clusters}. The set of pairs $\{ (\omega_1, \gC_1),..., (\omega_K, \gC_K)\}$ is a Cluster-induced Voronoi Diagram (CIVD) with respect to the influence function $F(\gC_k, \vz)$, where each cell is obtained via $\omega_r = \{ \vz \in \R^n : r(\vz) = r\}, r \in \{1,..,K\}$, with 
    \begin{equation}\label{eq:civd}
        r(\vz) = \argmax_{k \in \{ 1,...,K \}} F(\gC_k, \vz).
    \end{equation} 
\end{definition}

Here $\gC$ can be either a given set of clusters or even the whole power set of a given point set, and the influence function is defined as a function over the collection of distances from each member in a cluster $\gC_k$ to a query point $\vz$:
\begin{definition}[Influence Function] \label{def:influ}
    The influence from $\gC_k, k \in \{1,...,K\}$ to $\vz \notin \gC_k$ is $F(\gC_k, \vz) = F(\{d(\vc_k^{(i)},\vz)|\vc_k^{(i)} \in \gC_k\}_{i=1}^{|\gC_k|})$. In this paper $F$ is assumed to have the following form
    \begin{equation}\label{eq:infl}
        F(\gC_k,\vz) = - \sign(\alpha)\ {\textstyle\sum}_{i=0}^{|\gC_k|}\ d(\vc_k^{(i)},\vz)^\alpha.
    \end{equation}
\end{definition}    
The $\sign$ function here makes sure that $F$ is a monotonically decreasing function with respect to distance $d$. The hyperparameter $\alpha$ controls the magnitude of the influence, for example, in gravity force $\alpha = -(n-1)$ in $n$-dimensional space and in electric force $\alpha = -2$.

Since the nearest neighbor centers $\{ \vc_k \}_{k=1}^K$ and the centers introduced by linear classifier $\{ \tilde{\vc}_k \}_{k=1}^K$ are obtained from different schemes and could both be informative, we merge the corresponding centers for a novel class $k$ to be a new cluster $\gC_k = \{ \vc_k, \tilde{\vc}_k \}$, and use the resulting $\gC = \{\gC_1,...,\gC_K\}$ to establish a CIVD. In such a way, the final partition may enjoy the advantages of both parametric and nonparametric classifier heads. \textcolor{black}{We name this approach as DeepVoro\texttt{-{}-}.}

% ==================== Surrogate ====================
\subsection{Few-shot Classification via Surrogate Representation} \label{sec:surrogate}

In nearest neighbor classifier head, the distance from a query feature $\vz$ to each of the prototypes $\{ \vc_k \}_{k=1}^K$ is the key discrimination criterion for classification. We rewrite $\{d(\vz, \vc_k)\}_{k=1}^K$ to be a vector $\vd \in \R^K$ such that $\evd_k = d(\vz, \vc_k)$. These distances are acquired by measure the distance between two points in high dimension: $\vz, \vc_k \in \R^n$. However, the notorious behavior of high dimension is that the  ratio between the nearest and farthest points in a point set $\gP$ approaches $1$~\citep{aggarwal2001surprising}, making $\{d(\vz, \vc_k)\}_{k=1}^K$ less discriminative for classification, especially for FSL problem with sample size $N \cdot K \ll n$. Hence, in this paper, we seek for a \emph{surrogate representation}.

In human perception and learning system, similarity among \emph{familiar} and \emph{unfamiliar} objects play a key role for object categorization and classification~\citep{murray2002shape}, and it has been experimentally verified by functional magnetic resonance imaging (fMRI) that a large region in occipitotemporal cortex processes the shape of both meaningful and unfamiliar objects~\citep{Beeck2008}. In our method, a connection will be built between each unfamiliar novel class in $\gC^{\text{novel}}$ and each related well-perceived familiar class in $\gC^{\text{base}}$. So the first step is to identify the most relevant base classes for a specific task $\gT$. Concretely:
\begin{definition}[Surrogate Classes] \label{def:surrogate}
    In episode $\gT$, given the set of prototypes $\{ \vc_k \}_{k=1}^K$ for the support set $\gS$ and the set of prototypes $\{ \vc'_t \}_{t=1}^{|\gC^{\text{base}}|}$ for the base set $\gD$, the surrogate classes for episode classes $\gC^{\gT}$ is given as:
    \begin{equation}\label{eq:surrogate}
        \gC^{\text{surrogate}} (\gT) = \bigcup_{k=1}^K \underset{t \in \{1,...,|\gC^{\text{base}}|\}}{\text{Top-}R} d(\vc_k, \vc'_t)
    \end{equation}
    in which the top-$R$ function returns $R$ base class indices with smallest distances to $\vc_k$, and the center for a base class $t$ is given as $\vc'_t = \frac{1}{ |\{(\vx,y)|\vx \in \gD, y=t \}| } {\textstyle\sum}_{\vx \in \gD, y=t} \phi (\vx)$. Here $R$ is a hyperparameter.
\end{definition}    
The rationale behind this selection instead of simply using the whole base classes $\gC^{\text{base}}$ is that, the episode classes $\gC^{\gT}$ are only overlapped with a portion of base classes~\citep{zhang2021prototype}, and discriminative similarities are likely to be overwhelmed by the background signal especially when the number of base classes is large. After the surrogate classes are found, we re-index their feature centers to be $\{ \vc'_j \}_{j=1}^{\tilde{R}} ,\tilde{R} \leq R \cdot K$. Then,  both support centers $\{ \vc_k \}_{k=1}^K$ and query feature $\vz$ are represented by the collection of similarities to these surrogate centers:
\begin{equation}\label{eq:surrogate2}
    \begin{aligned}
    \vd'_k &= (d(\vc_k, \vc'_1), ..., d(\vc_k, \vc'_{\tilde{R}})), k = 1,...,K \\
    \vd'   &= (d(\vz, \vc'_1), ..., d(\vz, \vc'_{\tilde{R}})) 
    \end{aligned}
\end{equation}
where $\vd'_k, \vd' \in \R^{\tilde{R}}$ are the surrogate representation for novel class $k$ and query feature $\vz$, respectively. By surrogate representation, the prediction is found through $\hat{y} = \mathbin{\color{black} \argmin_k} d(\vd', \vd'_k) = \mathbin{\color{black} \argmin_k} ||\vd' - \vd'_k||_2^2$. This set of discriminative distances is rewritten as $\vd'' \in \R^K$ such that $\evd''_k = d(\vd', \vd'_k)$. An illustration of the surrogate representation is shown in Figure \ref{fig:flow} on MultiDigitMNIST, a demonstrative dataset.

\noindent\textbf{Integrating Feature Representation and Surrogate Representation.} Until now, we have two discriminative systems, i.e., feature-based $\vd \in \R^K$ and surrogate-based $\vd'' \in \R^K$. A natural idea is to combine them to form the following final criterion:
\begin{equation}\label{eq:final}
    \tilde{\vd} = \beta \frac{\vd}{||\vd||_1} + \gamma \frac{\vd''}{||\vd''||_1},
\end{equation}
where $\vd$ and $\vd''$ are normalized by their Manhattan norm, $||\vd||_1 = {\textstyle\sum}_{k=1}^{K} \evd_k$ and $||\vd''||_1 = {\textstyle\sum}_{k=1}^{K} \evd''_k$, respectively, and $\beta$ and $\gamma$ are two hyperparameters adjusting the weights for feature representation and surrogate representation.

% ==================== Ensemble ====================
\subsection{DeepVoro: Integrating Multi-level Heterogeneity of FSL} \label{sec:ens}
%In this section we describe a fast geometric ensemble framework that unites our contributions to multiple stages of FSL. Before the construction and ensemble of geometric structures for the subdivision of the feature space, we first introduce a series of transformations applied on the features, which will increase the polymorphism of the FSL process and thus assist ensemble.
\textcolor{black}{In this section we present DeepVoro, a fast geometric ensemble framework that unites our contributions to multiple stages of FSL, and show how it can be promoted to DeepVoro++ by incorporating surrogate representation.}

\noindent\textbf{Compositional Feature Transformation.}
It is believed that FSL algorithms favor features with more Gaussian-like distributions, and thus various kinds of transformations are used to improve the normality of feature distribution, including power transformation~\citep{Hu_2021}, Tukey’s Ladder of Powers Transformation~\citep{free2021}, and L2 normalization~\citep{wang2019simpleshot}. While these transformations are normally used independently, here we propose to combine several transformations sequentially in order to enlarge the expressivity of transformation function and to increase the polymorphism of the FSL process. Specifically, for a feature vector $\vz$, three kinds of transformations are considered:
(I) \emph{L2 Normalization.} By projection onto the unit sphere in $\R^n$, the feature is normalized as: $ f(\vz) = \frac{\vz}{||\vz||_2} $.
(II) \emph{Linear Transformation.} Now since all the features are located on the unit sphere, we then can do scaling and shifting via a linear transformation: $g_{w,b}(\vz) = w\vz + b$.
(III) \emph{Tukey’s Ladder of Powers Transformation.} Finally, Tukey’s Ladder of Powers Transformation is applied on the feature:
$h_\lambda(\vz) =\left\{
    \begin{aligned}
        &\vz^\lambda \quad &\text{if}\ \lambda \neq 0 \\
        &\log(\vz)   \quad &\text{if}\ \lambda = 0
    \end{aligned}
\right.
$.
By the composition of L2 normalization, linear transformation, and Tukey’s Ladder of Powers Transformation, now the transformation function becomes $(h_\lambda \circ g_{w,b} \circ f)(\vz)$ parameterized by ${w,b,\lambda}$.

\noindent\textbf{Multi-level Heterogeneities in FSL.} Now we are ready to articulate the hierarchical heterogeneity existing in different stages of FSL. 
(I) \emph{Feature-level Heterogeneity}: Data augmentation has been exhaustively explored for expanding the data size of FSL~\citep{ni2021data}, including but not limited to rotation, flipping, cropping, erasing, solarization, color jitter, MixUp~\citep{zhang2017mixup}, etc. The modification of image $\vx$ will change the position of feature $\vz$ in the feature space. We denote all possible translations of image as a set of functions $\{ T \}$. 
(II) \emph{Transformation-level Heterogeneity}: After obtaining the feature $\vz$, a parameterized transformation is applied to it, and the resulting features can be quite different for these parameters (see Figure \ref{fig:tsne}). We denote the set of all possible transformations to be $\{ P_{w,b,\lambda} \}$.
(III) \emph{Geometry-level Heterogeneity}: Even with the provided feature, the few-shot classification model can still be diverse: whether a VD or PD-based model is used, whether the feature or the surrogate representation is adopted, and the setting of $R$ will also change the degree of locality. We denote all possible models as $\{M\}$.

\noindent\textbf{DeepVoro for Fast Geometric Ensemble of VDs.}
With the above three-layer heterogeneity, the FSL process can be encapsulated as $( M \circ P_{w,b,\lambda} \circ \phi \circ T)(\vx)$, and all possible configurations of a given episode $\gT$ with a fixed $\phi$ is the Cartesian product of these three sets: $\{ T \} \times \{ P_{w,b,\lambda} \} \times \{ M \}$. Indeed, when a hold-out validation dataset is available, it can be used to find the optimal combination, but by virtue of ensemble learning, multiple models can still contribute positively to FSL~\citep{dvornik2019diversity}. Since the cardinality of the resulting configuration set could be very large, the FSL model $M$ as well as the ensemble algorithm is required to be highly efficient. The VD is a nonparametric model and no training is needed during the meta-testing stage, making it suitable for fast geometric ensemble. While CIVD models the cluster-to-point relationship via an influence function, here we further extend it so that cluster-to-cluster relationship can be considered. This motivates us to define Cluster-to-cluster Voronoi Diagram (CCVD):
\begin{definition}[Cluster-to-cluster Voronoi Diagram] \label{def:ccvd}
    Let $\Omega = \{\omega_1,...,\omega_K\}$ be a partition of the space $\R^n$, and $\gC = \{\gC_1,...,\gC_K\}$ be a set of \emph{totally ordered sets} with the same cardinality $L$ (i.e. $|\gC_1| = |\gC_2| = ... = |\gC_K| = L$). The set of pairs $\{ (\omega_1, \gC_1),..., (\omega_K, \gC_K)\}$ is a Cluster-to-cluster Voronoi Diagram (CCVD) with respect to the influence function $F(\gC_k, \gC(\vz))$, and each cell is obtained via $\omega_r = \{ \vz \in \R^n : r(\vz) = r\}, r \in \{1,..,K\}$, with 
\begin{equation}\label{eq:ccvd_r}    
    r(\vz) = \argmax_{k \in \{ 1,...,K \}} F(\gC_k, \gC(\vz))
\end{equation}    
    where $\gC(\vz)$ is the cluster (also a totally ordered set with cardinality $L$) that query point $\vz$ belongs, which is to say, all points in this cluster (query cluster) will be assigned to the same cell. Similarly, the Influence Function is defined upon two totally ordered sets $\gC_k = \{ \vc_k^{(i)} \}_{i=1}^L$ and $\gC(\vz) = \{ \vz^{(i)} \}_{i=1}^L$: 
\begin{equation}\label{eq:ccvd_inf}    
    F(\gC_k,\gC(\vz)) = - \sign(\alpha)\ {\textstyle\sum}_{i=0}^{L}\ d(\vc_k^{(i)},\vz^{(i)})^\alpha.
\end{equation}
\end{definition}
With this definition, now we are able to streamline our aforementioned novel approaches into a single ensemble model. Suppose there are totally $L$ possible settings in our configuration pool $\{ T \} \times \{ P_{w,b,\lambda} \} \times \{ M \}$, for all configurations $\{ \rho_i \}_{i=1}^L$, we apply them onto the support set $\gS$ to generate the $K$ totally ordered clusters $\{ \{ \vc_k^{(\rho_i)} \}_{i=1}^L \}_{k=1}^K$ including each center $\vc_k^{(\rho_i)}$ derived through configuration $\rho_i$, and onto a query sample $\vx$ to generate the query cluster $\gC(\vz) = \{ \vz^{(\rho_1)},...,\vz^{(\rho_L)} \}$, and then plug these two into Definition \ref{def:ccvd} to construct the final Voronoi Diagram.

\textcolor{black}{When only the feature representation is considered in the configuration pool, i.e. $\rho_i \in \{ T \} \times \{ P_{w,b,\lambda} \}$, our FSL process is named as DeepVoro; if surrogate representation is also incorporated, i.e. $\rho_i \in \{ T \} \times \{ P_{w,b,\lambda} \} \times \{ M \}$, DeepVoro is promoted to DeepVoro++ that allows for higher geometric diversity. See Appendix \ref{supp:notation} for a summary of the notations and acronyms}

\section{Experiments} \label{sec:experiments}
\setlength\intextsep{-10pt} % change this value
\begin{wraptable}{r}{9.5cm} % change this value
    \caption{\textcolor{black}{Summarization of the datasets used in the paper.}}\label{tab:datasets}
    \smallskip
    \scriptsize
    \begin{tabular}{lcccc}\toprule
    \textbf{Datasets} &\textbf{Base classes} &\textbf{Novel classes} &\textbf{Image size} &\textbf{Images per class} \\\midrule
    MultiDigitMNIST &64 &20 &64 $\times$ 64 $\times$ 1 &1000 \\
    mini-ImageNet &64 &20 &84 $\times$ 84 $\times$ 3 &600 \\
    CUB &100 &50 &84 $\times$ 84 $\times$ 3 &44 $\sim$ 60 \\
    tiered-ImageNet &351 &160 &84 $\times$ 84 $\times$ 3 &732 $\sim$ 1300 \\
    \bottomrule
    \end{tabular}
    \vspace{3mm} % change this value
\end{wraptable}

The main goals of our experiments are to: (1)  validate the strength of CIVD to integrate parametric and nonparametric classifiers and confirm the necessity of Voronoi reduction;
(2) investigate how different levels of heterogeneity individually or collaboratively contribute to the overall result, and compare them with the state-of-art method; (3) reanalyze this ensemble when the surrogate representation comes into play, and see how it could ameliorate the extreme shortage of support samples. 
\textcolor{black}{See Table \ref{tab:datasets} for a summary and} Appendix \ref{supp:dataset} for the detailed descriptions of mini-ImageNet~\citep{vinyals2016matching}, CUB~\citep{welinder2010caltech}, and tiered-ImageNet~\citep{ren2018metalearning}, that are used in this paper.

\noindent\textbf{DeepVoro\texttt{-{}-}: Integrating Parametric and Nonparametric Methods via CIVD.} To verify our proposed CIVD model for the integration of parameter/nonparametric FSL classifiers, we first run three standalone models: Logistic Regressions with Power/Voronoi Diagrams as the underlining geometric structure (Power-LR/Voronoi-LR), and vanilla Voronoi Diagram (VD, i.e. nearest neighbor model), and then integrate VD with either Power/Voronoi-LR (see Appendix \ref{supp:civd} for details). Interestingly, VD with the Power-LR has never reached the best result, suggesting that ordinary LR cannot be integrated with VD due to their intrinsic distinct geometric structures. After the proposed Voronoi reduction (Theorem \ref{thm:thm}), however, VD+Voronoi-LR is able to improve upon both models in most cases, suggesting that CIVD can ideally integrate parameter and nonparametric models for better FSL.

%Please add the following packages if necessary:
%\usepackage{booktabs, multirow} % for borders and merged ranges
%\usepackage{soul}% for underlines
%\usepackage[table]{xcolor} % for cell colors
%\usepackage{changepage,threeparttable} % for wide tables
%If the table is too wide, replace \begin{table}[!htp]...\end{table} with
%\begin{adjustwidth}{-2.5 cm}{-2.5 cm}\centering\begin{threeparttable}[!htb]...\end{threeparttable}\end{adjustwidth}
\setlength\intextsep{0pt} % change this value
\begin{table}[!htp]\centering
    \caption{The 5-way few-shot accuracy (in $\%$) with $95\%$ confidence intervals of DeepVoro and DeepVoro++ compared against the state-of-the-art results on three benchmark datasets. $\P$ The results of DC and S2M2\_R are reproduced based on open-sourced implementations using the same random seed with DeepVoro.}\label{tab:main}
    \resizebox{\textwidth}{!}{%
    \small
    \begin{tabular}{lccccccr}\toprule
    \textbf{Methods} &\multicolumn{2}{c}{\textbf{mini-ImageNet}} &\multicolumn{2}{c}{\textbf{CUB}} &\multicolumn{2}{c}{\textbf{tiered-ImageNet}} \\\cmidrule{1-7}
    &5way 1shot &5way 5shot &5way 1shot &5way 5shot &5way 1shot &5way 5shot \\\midrule
    MAML~\citep{finn2017model} &54.69  $\pm$  0.89 &66.62  $\pm$  0.83 &71.29  $\pm$  0.95 &80.33  $\pm$  0.70 &51.67  $\pm$  1.81 &70.30  $\pm$  0.08 \\
    Meta-SGD~\citep{Meta-sgd} &50.47  $\pm$  1.87 &64.03  $\pm$  0.94 &53.34  $\pm$  0.97 &67.59  $\pm$  0.82 & $-$  & $-$  \\
    Meta Variance Transfer~\citep{Meta-Variance-Transfer} & $-$  &67.67  $\pm$  0.70 & $-$  &80.33  $\pm$  0.61 & $-$  & $-$  \\
    MetaGAN~\citep{zhang2018metagan} &52.71  $\pm$  0.64 &68.63  $\pm$  0.67 & $-$  & $-$  & $-$  & $-$  \\
    Delta-Encoder~\citep{Delta-Encoder} &59.9 &69.7 &69.8 &82.6 & $-$  & $-$  \\ 
    Matching Net~\citep{vinyals2016matching} &64.03  $\pm$  0.20 &76.32  $\pm$  0.16 &73.49  $\pm$  0.89 &84.45  $\pm$  0.58 &68.50  $\pm$  0.92 &80.60  $\pm$  0.71 \\
    Prototypical Net~\citep{snell2017prototypical} &54.16  $\pm$  0.82 &73.68  $\pm$  0.65 &72.99  $\pm$  0.88 &86.64  $\pm$  0.51 &65.65  $\pm$  0.92 &83.40  $\pm$  0.65 \\
    Baseline++~\citep{chen2018a} &57.53  $\pm$  0.10 &72.99  $\pm$  0.43 &70.40  $\pm$  0.81 &82.92  $\pm$  0.78 &60.98  $\pm$  0.21 &75.93  $\pm$  0.17 \\
    Variational Few-shot~\citep{zhang2019variational} &61.23  $\pm$  0.26 &77.69  $\pm$  0.17 & $-$  & $-$  & $-$  & $-$  \\
    TriNet~\citep{chen2019multi} &58.12  $\pm$  1.37 &76.92  $\pm$  0.69 &69.61  $\pm$  0.46 &84.10  $\pm$  0.35 & $-$  & $-$  \\
    LEO~\citep{LEO-Rusu} &61.76  $\pm$  0.08 &77.59  $\pm$  0.12 &68.22  $\pm$  0.22 &78.27  $\pm$  0.16 &66.33  $\pm$  0.05 &81.44  $\pm$  0.09 \\
    DCO~\citep{DCO-lee2019meta} &62.64  $\pm$  0.61 &78.63  $\pm$  0.46 & $-$  & $-$  &65.99  $\pm$  0.72 &81.56  $\pm$  0.53 \\
    Negative-Cosine~\citep{Negative-Cosine} &63.85  $\pm$  0.81 &81.57  $\pm$  0.56 &72.66  $\pm$  0.85 &89.40  $\pm$  0.43 & $-$  & $-$  \\
    MTL~\citep{MTL-wang2021bridging} &59.84  $\pm$  0.22 &77.72  $\pm$  0.09 & $-$  & $-$  &67.11  $\pm$  0.12 &83.69  $\pm$  0.02 \\
    ConstellationNet~\citep{xu2021attentional} &64.89  $\pm$  0.23 &79.95  $\pm$  0.17 & $-$  & $-$  & $-$  & $-$  \\
    AFHN~\citep{li2020adversarial} &62.38  $\pm$  0.72 &78.16  $\pm$  0.56 &70.53 $\pm$ 1.01 &83.95 $\pm$ 0.63 & $-$  & $-$  \\
    AM3+TRAML~\citep{AM3-li2020boosting} &67.10  $\pm$  0.52 &79.54  $\pm$  0.60 & $-$  & $-$  & $-$  & $-$  \\
    E3BM~\citep{liu2020ensemble} &63.80  $\pm$  0.40 &80.29  $\pm$  0.25 & $-$  & $-$  &71.20  $\pm$  0.40 &85.30  $\pm$  0.30 \\
    SimpleShot~\citep{wang2019simpleshot} &64.29  $\pm$  0.20 &81.50  $\pm$  0.14 & $-$  & $-$  &71.32  $\pm$  0.22 &86.66  $\pm$  0.15 \\
    R2-D2~\citep{taskaugmentation} &65.95  $\pm$  0.45 &81.96  $\pm$  0.32 & $-$  & $-$  & $-$  & $-$  \\
    Robust-dist++~\citep{dvornik2019diversity} &63.73  $\pm$  0.62 &81.19  $\pm$  0.43 & $-$  & $-$  &70.44  $\pm$  0.32 &85.43  $\pm$  0.21 \\
    IEPT~\citep{zhang2021iept} &67.05  $\pm$  0.44 &82.90  $\pm$  0.30 & $-$  & $-$  &72.24  $\pm$  0.50 &86.73  $\pm$  0.34 \\
    MELR~\citep{fei2021melr} &67.40  $\pm$  0.43 &83.40  $\pm$  0.28 &70.26  $\pm$  0.50 &85.01  $\pm$  0.32 &72.14  $\pm$  0.51 &87.01  $\pm$  0.35 \\
    S2M2\_R$\P$~\citep{charting2020} &64.65  $\pm$  0.45 &83.20  $\pm$  0.30 &80.14  $\pm$  0.45 &90.99  $\pm$  0.23 &68.12  $\pm$  0.52 &86.71  $\pm$  0.34 \\
    M-SVM+MM+ens+val~\citep{ni2021data} &67.37  $\pm$  0.32 &84.57  $\pm$  0.21 & $-$  & $-$  & $-$  & $-$  \\
    DeepEMD~\citep{Zhang_2020_CVPR} &65.91  $\pm$  0.82 &82.41  $\pm$  0.56 &75.65  $\pm$  0.83 &88.69  $\pm$  0.50 &71.16  $\pm$  0.87 &86.03  $\pm$  0.58 \\
    DeepEMD-V2~\citep{zhang2020deepemdv2} &68.77  $\pm$  0.29 &84.13  $\pm$  0.53 &79.27  $\pm$  0.29 &89.80  $\pm$  0.51 &74.29  $\pm$  0.32 &86.98  $\pm$  0.60 \\
    DC$\P$~\citep{free2021} &67.79  $\pm$  0.45 &83.69  $\pm$  0.31 &79.93  $\pm$  0.46 &90.77  $\pm$  0.24 &74.24  $\pm$  0.50 &88.38  $\pm$  0.31 \\
    PT+NCM~\citep{Hu_2021} &65.35  $\pm$  0.20 &83.87  $\pm$  0.13 &80.57  $\pm$  0.20 &91.15  $\pm$  0.10 &69.96  $\pm$  0.22 &86.45  $\pm$  0.15 \\
    \midrule
    \cellcolor[HTML]{f2f2f2}\textbf{DeepVoro} &\cellcolor[HTML]{f2f2f2}69.48 $\pm$ 0.45 &\cellcolor[HTML]{f2f2f2}\textbf{86.75 $\pm$ 0.28} &\cellcolor[HTML]{f2f2f2}\textbf{82.99 $\pm$ 0.43} &\cellcolor[HTML]{f2f2f2}\textbf{92.62 $\pm$ 0.22} &\cellcolor[HTML]{f2f2f2}74.98 $\pm$ 0.48 &\cellcolor[HTML]{f2f2f2}\textbf{89.40 $\pm$ 0.29} \\
    \cellcolor[HTML]{f2f2f2}\textbf{DeepVoro++} &\cellcolor[HTML]{f2f2f2}\textbf{71.30 $\pm$ 0.46} &\cellcolor[HTML]{f2f2f2}85.40 $\pm$ 0.30 &\cellcolor[HTML]{f2f2f2}\textbf{82.95 $\pm$ 0.43} &\cellcolor[HTML]{f2f2f2}91.21 $\pm$ 0.23 &\cellcolor[HTML]{f2f2f2}\textbf{75.38 $\pm$ 0.48} &\cellcolor[HTML]{f2f2f2}87.25 $\pm$ 0.33 \\
    \bottomrule
    \end{tabular}}
\end{table}

\noindent\textbf{DeepVoro: Improving FSL by Hierarchical Heterogeneities.}
In this section, we only consider two levels of heterogeneities for ensemble: feature-level and transformation-level. For feature-level ensemble, we utilize three kinds of image augmentations: rotation, flipping, and central cropping summing up to 64 distinct ways of data augmentation (Appendix \ref{supp:ensemble}). For transformation-level ensemble, we use the proposed compositional transformations with 8 different combinations of $\lambda$ and $b$ that encourage a diverse feature transformations (Appendix \ref{fig:tsne}) without loss of accuracy (Figure \ref{fig:beta}). The size of the resulting configuration pool becomes $512$ and DeepVoro's performance is shown in Table \ref{tab:main}. Clearly, DeepVoro outperforms all previous methods especially on 5-way 5-shot FSL. Specifically, DeepVoro is better than the next best by $2.18\%$ (than \citet{ni2021data}) on mini-ImageNet, by $1.47\%$ (than \citet{Hu_2021}) on CUB, and by $1.02\%$ (than \citet{free2021}) on tiered-ImageNet. Note that this is an estimated improvement because not all competitive methods here are tested with the same random seed and the number of episodes. More detailed results can be found in Appendix \ref{supp:ensemble}. By virtue of CCVD and using the simplest VD as the building block, DeepVoro is arguably able to yield a consistently better result by the ensemble of a massive pool of independent VD. \textcolor{black}{DeepVoro also exhibits high resistance to outliers, as shown in Figure \ref{fig:GeoV}.}

%Please add the following packages if necessary:
%\usepackage{booktabs, multirow} % for borders and merged ranges
%\usepackage{soul}% for underlines
%\usepackage[table]{xcolor} % for cell colors
%\usepackage{changepage,threeparttable} % for wide tables
%If the table is too wide, replace \begin{table}[!htp]...\end{table} with
%\begin{adjustwidth}{-2.5 cm}{-2.5 cm}\centering\begin{threeparttable}[!htb]...\end{threeparttable}\end{adjustwidth}
\setlength\intextsep{0pt} % change this value
\begin{table}[!htp]\centering
\caption{\textcolor{black}{DeepVoro ablation experiments with feature(Feat.)/transformation(Trans.)/geometry(Geo.)-level heterogeneities on mini-ImageNet 5-way few-shot dataset. $L$ denotes the size of configuration pool, i.e. the number of ensemble members. $\sharp$These lines show the average VD accuracy without CCVD integration.}}\label{tab:ablation_all}
\resizebox{\textwidth}{!}{%
\small
    \begin{tabular}{lcrrrrrr}\toprule
    \textbf{Methods} &\textbf{Geometric Structures} &\textbf{Feat.} &\textbf{Trans.} &\textbf{Geo.} &\textbf{$L$} &\textbf{5-way 1-shot} &\textbf{5-way 5-shot} \\\midrule
     &\emph{tunable parameters:} &rotation etc. &${w,b,\lambda}$ &$\beta, \gamma, R$ & & & \\\midrule
    DeepVoro\texttt{-{}-} &CIVD &{\xmark} &{\xmark} &{\xmark} &$-$ &65.85  $\pm$  0.43 &84.66  $\pm$  0.29 \\\midrule
    \multirow{4}{*}{DeepVoro} &\multirow{4}{*}{CCVD} &{\xmark} &{\xmark} &{\xmark} &1$\sharp$ &66.92  $\pm$  0.45 &84.64  $\pm$  0.30 \\
    & &{\xmark} &8 &{\xmark} &8 &66.45  $\pm$  0.44 &84.55  $\pm$  0.29 \\
    & &64 &{\xmark} &{\xmark} &64 &67.88  $\pm$  0.45 &86.39  $\pm$  0.29 \\
    & &64 &8 &{\xmark} &512 &69.48  $\pm$  0.45 &86.75  $\pm$  0.28 \\\midrule
    \multirow{4}{*}{DeepVoro++} &\multirow{4}{*}{\makecell{CCVD w/ \\ surrogate \\ representation}} &{\xmark} &{\xmark} &{\xmark} &1$\sharp$ &68.68  $\pm$  0.46 &84.28  $\pm$  0.31 \\
    & &{\xmark} &2 &10 &20 &68.38  $\pm$  0.46 &83.27  $\pm$  0.31 \\
    & &64 &{\xmark} &{\xmark} &64 &70.95  $\pm$  0.46 &84.77  $\pm$  0.30 \\
    & &64 &2 &10 &1280 &71.30  $\pm$  0.46 &85.40  $\pm$  0.30 \\
    \bottomrule
\end{tabular}}
\end{table}
\noindent\textbf{DeepVoro++: Further Improvement of FSL via Surrogate Representation.} In surrogate representation, the number of neighbors $R$ for each novel class and the weight $\beta$ balancing surrogate/feature representations are two hyperparameters. With the help of an available validation set, a natural question is that whether the hyperparameter can be found through the optimization on the validation set, which requires a good generalization of the hyperparameters across different novel classes. From Figure \ref{fig:heat_mini}, the accuracy of VD with varying hyperparameter shows a good agreement between testing and validation sets. With this in mind, we select 10 combinations of $\beta$ and $R$, guided by the validation set, in conjunction with 2 different feature transformations and 64 different image augmentations, adding up to a large pool of 1280 configurations for ensemble (denoted by DeepVoro++). As shown in Table \ref{tab:main}, DeepVoro++ achieves best results for 1-shot FSL --- $2.53\%$ higher than \citet{zhang2020deepemdv2}, $2.38\%$ higher than \citet{Hu_2021}, and $1.09\%$ higher than \citet{zhang2020deepemdv2}, on three datasets, respectively, justifying the efficacy of our surrogate representation. See Appendix \ref{supp:surrogate} for more detailed analysis.

\noindent\textbf{Ablation Experiments and Running Time.} \textcolor{black}{Table \ref{tab:ablation_all} varies the level of heterogeneity (see Table \ref{tab:no_dist} and \ref{tab:with_dist} for all datasets). The average accuracy of VDs without CCVD integration is marked by $\sharp$, and is significantly lower than the fully-fledged ensemble. Table \ref{tab:time} presents the running time of DeepVoro(++) benchmarked in a 20-core Intel$\copyright$ Core\textsuperscript{TM} i7 CPU with NumPy (v1.20.3), whose efficiency is comparable to DC/S2M2\_2, even with ${>}1000{\times}$ diversity.}

% save original \intextsep | see: https://tex.stackexchange.com/questions/258956/latex-wraptable-too-much-leading-space

\setlength\intextsep{-10pt} % change this value
\begin{wraptable}{r}{5.5cm} % change this value
    \caption{\textcolor{black}{Running time comparison.}}\label{tab:time}    
    \smallskip
    \small
    \begin{tabular}{l|r|r}\toprule
    \multicolumn{2}{c}{\textbf{Methods}} &\textbf{Time (min)} \\\midrule
    \multicolumn{2}{c}{DC} &88.29 \\
    \multicolumn{2}{c}{S2M2\_R} &33.89 \\\midrule
    \multicolumn{2}{c}{\#ensemble members:} & \\
    \multirow{2}{*}{DeepVoro} &1 &0.05 \\
    &512 &25.67 \\
    \multirow{2}{*}{DeepVoro++} &1 &0.14 \\
    &1280 &179.05 \\
    \bottomrule
    \end{tabular}
    \vspace{0mm} % change this value
\end{wraptable}

\noindent\textbf{Experiments with Different Backbones, Meta-training Protocols, and Domains.} Because different feature extraction backbones, meta-training losses, and degree of discrepancy between the source/target domains will all affect the downstream FSL, we here examine the robustness of DeepVoro/DeepVoro++ under a number of different circumstances, and details are shown in Appendices \ref{supp:backbone}, \ref{supp:precursor}, \ref{supp:cross}. Notably, DeepVoro/DeepVoro++ attains the best performance by up to $5.55\%$, and is therefore corroborated as a superior method for FSL, regardless of the backbone, training loss, or domain. 

%\vspace{0.2in} % change this value
%\begin{figure}[!htp]
%    \centering
%    \includegraphics[width=0.9\linewidth]{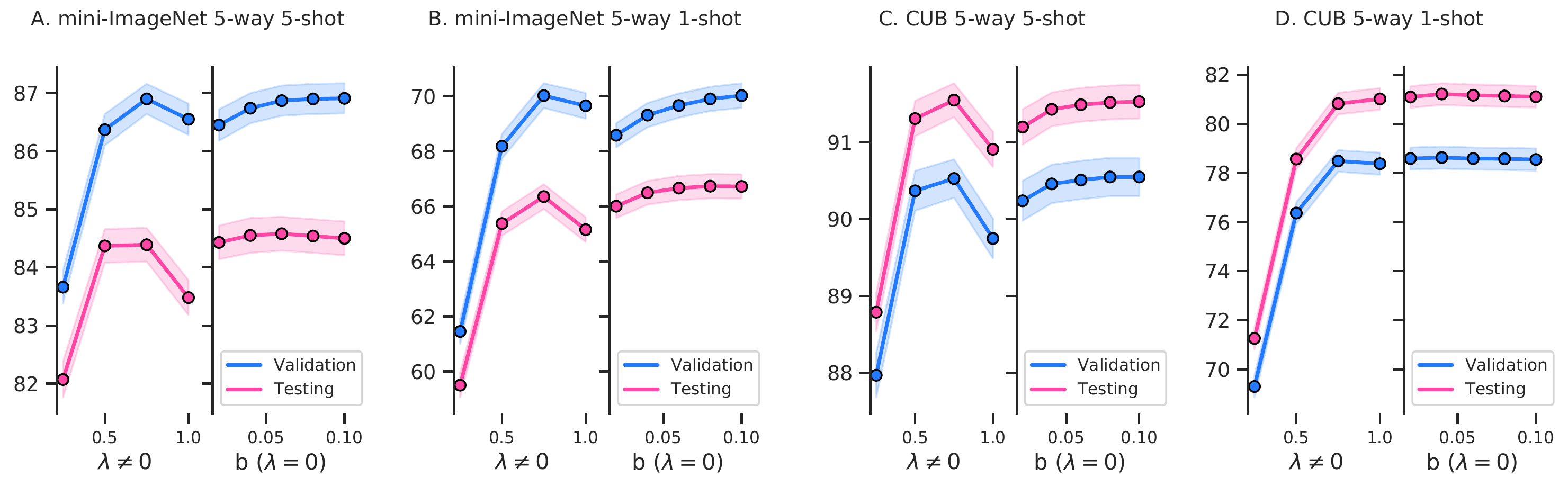}
%    \caption{The 5-way few-shot accuracy of VD with different $\lambda$ and $b$ on mini-ImageNet and CUB.}\label{fig:beta}
%\end{figure}
\begin{figure}
  \begin{minipage}[c]{0.8\textwidth}
    \includegraphics[width=\textwidth]{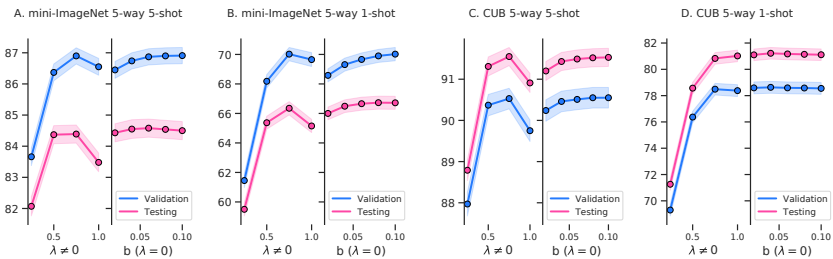}
  \end{minipage}\hfill
  \begin{minipage}[c]{0.15\textwidth}
    \caption{
       The 5-way few-shot accuracy of VD with different $\lambda$ and $b$ on mini-ImageNet and CUB Datasets.
    } \label{fig:beta}
  \end{minipage}
\end{figure}
\vspace{-0.1in}

\section{Conclusion} \label{sec:discussion}

In this paper, our contribution is threefold. We first theoretically unify parametric and nonparametric few-shot classifiers into a general geometric framework (VD) and show an improved result by virtue of this integration (CIVD). By extending CIVD to CCVD, we present a fast geometric ensemble method (DeepVoro) that takes into consideration thousands of FSL configurations with high efficiency. To deal with the extreme data insufficiency in one-shot learning, we further propose a novel surrogate representation which, when incorporated into DeepVoro, promotes the performance of one-shot learning to a higher level (DeepVoro++). \textcolor{black}{In future studies, we plan to extend our geometric approach to meta-learning-based FSL and lifelong FSL.}
% Our CIVD-based workflow enables ${\sim}2\%{-}5\%$ improvements on popular benchmark datasets, surpassing the state-of-the-art methods with either multi-level data augmentation or multiple meta-trained models. Our code is available at Github. 
% ~\citep{ni2021data} | ~\citep{dvornik2019diversity} | or sophisticatedly designed metrics~\citep{zhang2020deepemdv2}

\subsubsection*{Acknowledgments}
This research was supported in part by NSF through grant IIS-1910492.     

%\section{Ethics Statement}
%This study does not involve any human subjects. All datasets are public. There will not be potential issue concerning ethics.
\subsubsection*{Reproducibility Statement}
Our code as well as data split, random seeds, hyperparameters, scripts for reproducing the results in the paper are available at https://github.com/horsepurve/DeepVoro.

% \subsubsection*{Acknowledgements}
% We thank the anonymous reviewers for their constructive comments to this paper.

\bibliography{iclr2022_conference}
\bibliographystyle{iclr2022_conference}

\appendix
\renewcommand\thefigure{\thesection.\arabic{figure}}
\setcounter{figure}{0}
\renewcommand\thetable{\thesection.\arabic{table}}
\setcounter{table}{0}

\clearpage
% ==================== Notations and Acronyms ====================
\section{Notations and Acronyms} \label{supp:notation}
%Please add the following packages if necessary:
%\usepackage{booktabs, multirow} % for borders and merged ranges
%\usepackage{soul}% for underlines
%\usepackage[table]{xcolor} % for cell colors
%\usepackage{changepage,threeparttable} % for wide tables
%If the table is too wide, replace \begin{table}[!htp]...\end{table} with
%\begin{adjustwidth}{-2.5 cm}{-2.5 cm}\centering\begin{threeparttable}[!htb]...\end{threeparttable}\end{adjustwidth}
\setlength\intextsep{0pt} % change this value
\begin{table}[!hp]\centering
    \caption{\textcolor{black}{Notations and acronyms for VD, PD, CIVD, and CCVD, four geometric structures discussed in the paper.}}\label{tab:notationI}
    \resizebox{\textwidth}{!}{%
    %\small
    \begin{tabular}{llll}\toprule
    \textbf{Geometric Structures} &\textbf{Acronyms} &\textbf{Notations} &\textbf{Description} \\\midrule
    \multirow{2}{*}{Voronoi Diagram} &\multirow{2}{*}{VD} &$\vc_k$ &center for a Voronoi cell $\omega_k, k \in \{1,..,K\}$ \\
    & &$\omega_k$ &dominating region for center $\vc_k, k \in \{1,..,K\}$\\\midrule
    \multirow{3}{*}{Power Diagram} &\multirow{3}{*}{PD} &c &center for a Power cell $\omega_k, k \in \{1,..,K\}$ \\
    & &$\nu_k$ &weight for center $\vc_k, k \in \{1,..,K\}$ \\
    & &$\omega_k$ &dominating region for center $\vc_k, k \in \{1,..,K\}$\\\midrule
    \multirow{4}{*}{\makecell{Cluster-induced \\ Voronoi Diagram}} &\multirow{4}{*}{CIVD} &$\gC_k$ &cluster as the "center" for a CIVD cell $\omega_k, k \in \{1,..,K\}$ \\
    & &$\omega_k$ &dominating region for cluster $\gC_k$\\
    & &$F$ &influence function $F(\gC_k,\vz)$ from cluster $\gC_k$ to query point $\vz$\\
    & &$\alpha$ &magnitude of the influence \\\midrule
    \multirow{5}{*}{\makecell{Cluster-to-cluster \\ Voronoi Diagram}} &\multirow{4}{*}{CCVD} &$\gC_k$ &cluster as the "center" for a CCVD cell $\omega_k, k \in \{1,..,K\}$ \\
    & &$\omega_k$ &dominating region for cluster $\gC_k$\\
    & &$\gC(\vz)$ &the cluster that query point $\vz$ belongs \\
    & &$F$ &influence function $F(\gC_k,\gC(\vz))$ from $\gC_k$ to query cluster $\gC(\vz)$\\
    & &$\alpha$ &magnitude of the influence \\
    \bottomrule
    \end{tabular}}
    \vspace{3mm} % change this value
\end{table}

%Please add the following packages if necessary:
%\usepackage{booktabs, multirow} % for borders and merged ranges
%\usepackage{soul}% for underlines
%\usepackage[table]{xcolor} % for cell colors
%\usepackage{changepage,threeparttable} % for wide tables
%If the table is too wide, replace \begin{table}[!htp]...\end{table} with
%\begin{adjustwidth}{-2.5 cm}{-2.5 cm}\centering\begin{threeparttable}[!htb]...\end{threeparttable}\end{adjustwidth}
\definecolor{my_yellow}{rgb}{1, 0.94, 0.8}
\definecolor{my_blue}{rgb}{0.79, 0.85, 0.97}
\definecolor{my_red}{rgb}{0.96, 0.8, 0.8}
\setlength\intextsep{0pt} % change this value
\begin{table}[!hp]\centering
    \caption{\textcolor{black}{Summary and comparison of geometric structures, centers, tunable parameters, and the numbers of tunable parameters (denoted by \#) for DeepVoro\texttt{-{}-}, DeepVoro, and DeepVoro++. Parameters for \colorbox{my_yellow}{feature-level}, \colorbox{my_blue}{transformation-level}, and \colorbox{my_red}{geometry-level} heterogeneity are shown in \colorbox{my_yellow}{yellow}, \colorbox{my_blue}{blue}, and \colorbox{my_red}{red}, respectively. See Sec. \ref{supp:ensemble} for implementation details. {\dag}Here PD is reduced to VD by Theorem \ref{thm:thm}. {\ddag}For every $\lambda$ (or $R$), the $b$ (or $\beta$) value with the highest validation accuracy is introduced into the configuration pool.}}\label{tab:notationII}
    \resizebox{\textwidth}{!}{%
    \begin{tabular}{llclll}\toprule
    \textbf{Methods} &\textbf{\makecell{Geometric \\ Structures}} &\textbf{Centers} &\textbf{Tunable Param.} &\textbf{\#} &\textbf{Description} \\\midrule
    \textbf{DeepVoro\texttt{-{}-}} &CIVD &\makecell{$\gC_k = \{ \vc_k, \tilde{\vc}_k \}$ \\ $\vc_k$ from VD \\ $\tilde{\vc}_k$ from PD\dag} &$-$ &$-$ &$-$ \\\midrule
    \multirow{7}{*}{\textbf{DeepVoro}} &\multirow{7}{*}{CCVD} &\multirow{7}{*}{\makecell{$\gC_k = \{ \vc_k^{(\rho_i)} \}_{i=1}^L$ \\ $\rho_i \in \{ T \} \times \{ P_{w,b,\lambda} \}$}} &\cellcolor[HTML]{fff2cc}angle of rotation &\cellcolor[HTML]{fff2cc}4 &\cellcolor[HTML]{fff2cc}$-$ \\
    & & &\cellcolor[HTML]{fff2cc}flipping or not &\cellcolor[HTML]{fff2cc}2 &\cellcolor[HTML]{fff2cc}$-$ \\
    & & &\cellcolor[HTML]{fff2cc}scaling \& cropping &\cellcolor[HTML]{fff2cc}8 &\cellcolor[HTML]{fff2cc}$-$ \\
    & & &\cellcolor[HTML]{c9daf8}$w=1$ &\cellcolor[HTML]{c9daf8}$-$ &\cellcolor[HTML]{c9daf8}scale factor in linear transformation \\
    & & &\cellcolor[HTML]{c9daf8}$b$ &\cellcolor[HTML]{c9daf8}4 &\cellcolor[HTML]{c9daf8}shift factor in linear transformation \\
    & & &\cellcolor[HTML]{c9daf8}$\lambda$ &\cellcolor[HTML]{c9daf8}2 &\cellcolor[HTML]{c9daf8}exponent in powers transformation \\
    & & &\multicolumn{3}{c}{\#configurations $L = 512$} \\\midrule
    \multirow{10}{*}{\textbf{DeepVoro++}} &\multirow{10}{*}{CCVD} &\multirow{10}{*}{\makecell{$\gC_k = \{ \vc_k^{(\rho_i)} \}_{i=1}^L$ \\ $\rho_i \in \{ T \} \times \{ P_{w,b,\lambda} \} \times \{ M \}$}} &\cellcolor[HTML]{fff2cc}angle of rotation &\cellcolor[HTML]{fff2cc}4 &\cellcolor[HTML]{fff2cc}$-$ \\
    & & &\cellcolor[HTML]{fff2cc}flipping or not &\cellcolor[HTML]{fff2cc}2 &\cellcolor[HTML]{fff2cc}$-$ \\
    & & &\cellcolor[HTML]{fff2cc}scaling \& cropping &\cellcolor[HTML]{fff2cc}8 &\cellcolor[HTML]{fff2cc}$-$ \\
    & & &\cellcolor[HTML]{c9daf8}$w=1$ &\cellcolor[HTML]{c9daf8}$-$ &\cellcolor[HTML]{c9daf8}scale factor in linear transformation \\
    & & &\cellcolor[HTML]{c9daf8}$b$ &\cellcolor[HTML]{c9daf8}1\ddag &\cellcolor[HTML]{c9daf8}shift factor in linear transformation \\
    & & &\cellcolor[HTML]{c9daf8}$\lambda$ &\cellcolor[HTML]{c9daf8}2 &\cellcolor[HTML]{c9daf8}exponent in powers transformation \\
    & & &\cellcolor[HTML]{f4cccc}$R$ &\cellcolor[HTML]{f4cccc}10 &\cellcolor[HTML]{f4cccc}\makecell{the number of top-R nearest base \\ \hspace{0.45cm} prototypes for a novel prototype} \\
    & & &\cellcolor[HTML]{f4cccc}$\gamma=1$ &\cellcolor[HTML]{f4cccc}$-$ &\cellcolor[HTML]{f4cccc}weight for surrogate representation \\
    & & &\cellcolor[HTML]{f4cccc}$\beta$ &\cellcolor[HTML]{f4cccc}1\ddag &\cellcolor[HTML]{f4cccc}weight for feature representation \\
    & & &\multicolumn{3}{c}{\#configurations $L = 1280$} \\
    \bottomrule
    \end{tabular}}
\end{table}

% ==================== Lemma ====================
\section{Power Diagram Subdivision and Voronoi Reduction} \label{supp:power}

\subsection{Proof of Theorem 3.1}\label{supp:proof}
\begin{lemma} \label{lem:power}
    The vertical projection from the lower envelope of the hyperplanes $\{\Pi_k(\vz) : \mW_k^T \vz + \evb_k\}_{k=1}^K$ onto the input space $\R^n$ defines the cells of a PD.
\end{lemma}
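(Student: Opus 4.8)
The plan is to exhibit, for the lower envelope induced by the $K$ affine functions $\Pi_k(\vz) = \mW_k^T\vz + \evb_k$, an explicit choice of centers $\{\vc_k\}$ and weights $\{\nu_k\}$ so that the projected cells coincide verbatim with the power cells of Definition \ref{def:power}. First I would fix notation for the lower envelope: viewing each $\Pi_k$ as the graph of an affine map over $\R^n$ sitting inside $\R^{n+1}$, the lower envelope is the pointwise minimum $\vz \mapsto \min_k \Pi_k(\vz)$, and its vertical projection assigns $\vz$ to cell $\omega_k$ precisely when $\Pi_k(\vz) \le \Pi_j(\vz)$ for every $j$. The goal is then to realize each such region as $\{\vz : r(\vz) = k\}$ for the power-diagram index function $r(\vz) = \argmin_k \big( \|\vz - \vc_k\|_2^2 - \nu_k \big)$.

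The key algebraic step is to expand the power distance, $\|\vz - \vc_k\|_2^2 - \nu_k = \|\vz\|_2^2 - 2\vc_k^T\vz + \|\vc_k\|_2^2 - \nu_k$, and to observe that the quadratic term $\|\vz\|_2^2$ is identical across all $k$ and hence cancels in every pairwise comparison. What remains is an affine function of $\vz$, so the power cell of site $k$ is carved out by the same type of linear inequalities as the lower-envelope cell. Matching coefficients, I would set $\vc_k = -\tfrac12 \mW_k$ so that the linear part $-2\vc_k^T\vz$ reproduces $\mW_k^T\vz$, and $\nu_k = \|\vc_k\|_2^2 - \evb_k$ so that the constant part reproduces $\evb_k$. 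With this choice the inequality $\|\vz-\vc_k\|_2^2 - \nu_k \le \|\vz - \vc_j\|_2^2 - \nu_j$ becomes, after cancelling $\|\vz\|_2^2$, literally $\Pi_k(\vz) \le \Pi_j(\vz)$; therefore $\omega_k$ is exactly the power cell of $\vc_k$ with weight $\nu_k$, and the projected lower envelope is a PD.

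The one point that needs care—and the part I expect to be the main (though minor) obstacle—is the requirement in Definition \ref{def:power} that the weights lie in $\R^+$, whereas $\nu_k = \|\vc_k\|_2^2 - \evb_k$ may well be negative. I would dispatch this with the standard observation that a power diagram is invariant under a common additive shift of all its weights: replacing every $\nu_k$ by $\nu_k + c$ changes the objective $\|\vz-\vc_k\|_2^2 - \nu_k$ by the constant $-c$ uniformly in $k$, leaving $\argmin_k$—and hence every cell—unchanged. Choosing any $c > \max_k(\evb_k - \|\vc_k\|_2^2)$ makes all weights strictly positive without altering the partition, so the construction yields a bona fide PD in the sense of the definition. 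Equivalently, one may phrase the whole argument through the classical lifting map $\vz \mapsto (\vz, \|\vz\|_2^2)$, under which the vertical gap from a lifted point to the hyperplane dual to a weighted site equals the power distance; but the direct coefficient-matching above is self-contained and avoids invoking that duality.
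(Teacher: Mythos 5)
Your proof is correct, and it takes a genuinely different route from the paper's. You prove the lemma by bare hands: expand $\|\vz-\vc_k\|_2^2-\nu_k=\|\vz\|_2^2-2\vc_k^T\vz+\|\vc_k\|_2^2-\nu_k$, note the common quadratic term cancels in every pairwise comparison, and match coefficients ($\vc_k=-\tfrac12\mW_k$, $\nu_k=\|\vc_k\|_2^2-\evb_k$) so that the power-cell inequalities become literally $\Pi_k(\vz)\le\Pi_j(\vz)$; your uniform additive shift of the weights to meet the $\nu_k\in\R^+$ requirement of Definition 3.1 is a detail the paper never addresses at the level of the lemma. The paper instead imports Aurenhammer's machinery: the power function $p(\vz,S)=(\vz-\vu)^2-r^2$ of a sphere $S$, the bijection $\Pi(S):y=2\vz\cdot\vu-\vu\cdot\vu+r^2$ between spheres in $\R^n$ and nonvertical hyperplanes meeting the paraboloid $y=\vz^2$, and the bisector-projection lemma (Lemma 4 of Aurenhammer, 1987), then reads off $\vu=\tfrac12\mW_k$, $r^2=\evb_k+\tfrac14\|\mW_k\|_2^2$. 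What your approach buys is a self-contained, elementary argument that needs no external citation; what the paper's buys is the sphere interpretation of the weights (weight $=$ squared radius), which is exactly what Theorem 3.1 then exploits by forcing $r^2=0$, i.e.\ $\evb_k=-\tfrac14\|\mW_k\|_2^2$. One small discrepancy worth flagging: your signs ($\vc_k=-\tfrac12\mW_k$) are the ones dictated by reading ``lower envelope'' literally as a pointwise minimum, whereas the paper's matching ($+\tfrac12\mW_k$) corresponds to the upper envelope, which is the envelope actually relevant to a classifier's $\argmax$ decision rule; the two statements are equivalent (negate every hyperplane), but anyone combining your lemma proof with the paper's Theorem 3.1 should track that sign flip.
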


\noindent\textbf{Theorem 3.1} (Voronoi Diagram Reduction). The linear classifier parameterized by $\mW, \vb$ partitions the input space $\R^n$ to a Voronoi Diagram with centers $\{ \tilde{\vc}_1,...,\tilde{\vc}_K \}$ given by $\tilde{\vc}_k = \frac{1}{2} \mW_k$ if $\evb_k = -\frac{1}{4} ||\mW_k||_2^2, k = 1,...,K$.

\begin{proof}
    We first articulate Lemma \ref{lem:power} and find the exact relationship between the hyperplane $\Pi_k(\vz)$ and the center of its associated cell in $\R^n$. By Definition \ref{def:power}, the cell for a point $\vz \in \R^n$ is found by comparing $d(\vz, \vc_k)^2 - \nu_k$ for different $k$, so we define the power function $p(\vz, S)$ expressing this value
    \begin{equation}\label{eq:powerfun}
        p(\vz, S) = (\vz - \vu)^2 - r^2
    \end{equation}
    in which $S \subseteq \R^n$ is a sphere with center $\vu$ and radius $r$. In fact, the weight $\nu$ associated with a center in Definition \ref{def:power} can be intepreted as the square of the radius $r^2$.
    Next, let $U$ denote a paraboloid $y = \vz^2$, let $\Pi(S)$ be the transform that maps sphere $S$ with center $\vu$ and radius $r$ into hyperplane
    \begin{equation}\label{eq:pifun}
        \Pi(S): y = 2\vz \cdot \vu - \vu \cdot \vu + r^2.
    \end{equation}
    It can be proved that $\Pi$ is a bijective mapping between arbitrary spheres in $\R^n$ and nonvertical hyperplanes in $\R^{n+1}$ that intersect $U$~\citep{aurenhammer1987power}.
    Further, let $\vz'$ denote the vertical projection of $\vz$ onto $U$ and $\vz''$ denote its vertical projection onto $\Pi(S)$, then the power function can be written as
    \begin{equation}\label{eq:pifun2}
        p(\vz, S) = d(\vz, \vz') - d(\vz, \vz''),
    \end{equation}
    which implies the following relationships between a sphere in $\R^n$ and an associated hyperplane in $\R^{n+1}$ (Lemma 4 in \citet{aurenhammer1987power}): let $S_1$ and $S_2$ be nonco-centeric spheres in $\R^n$, then the bisector of their Power cells is the vertical projection of $\Pi(S_1) \cap \Pi(S_2)$ onto $\R^n$.
    Now, we have a direct relationship between sphere $S$, and hyperplane $\Pi(S)$, and comparing equation (\ref{eq:pifun}) with the hyperplanes used in logistic regression $\{\Pi_k(\vz) : \mW_k^T \vz + \evb_k\}_{k=1}^K$ gives us
    \begin{equation}\label{eq:mapping}
        \begin{aligned}
            \vu &= \frac{1}{2} \mW_k \\
            r^2 &= \evb_k + \frac{1}{4} ||\mW_k||_2^2.
        \end{aligned}
    \end{equation}
    Although there is no guarantee that $\evb_k + \frac{1}{4} ||\mW_k||_2^2$ is always positive for an arbitrary logistic regression model, we can impose a constraint on $r^2$ to keep it be zero during the optimization, which implies
    \begin{equation}\label{eq:b}
        \evb_k = -\frac{1}{4} ||\mW_k||_2^2.
    \end{equation}
    By this way, the radii for all $K$ spheres become identical (all zero). After the optimization of logistic regression model, the centers $\{\frac{1}{2} \mW_k\}_{k=1}^K$ will be used for CIVD integration.
\end{proof}

%\section{Notations and Acronyms} \label{supp:notation}
% ==================== MultiDigitMNIST ====================
\section{Details about the Demonstrative Example on MultiDigitMNIST Dataset} \label{supp:mnist}
MultiDigitMNIST~\citep{mulitdigitmnist} dataset is created by concatenating two (or three) digits of different classes from MNIST for few-shot image classification. Here we use DoubleMNIST Datasets (i.e. two digits in an image) consisting of 100 classes (00 to 09), 1000 images of size $64 \times 64 \times 1$ per class, and the classes are further split into 64, 20, and 16 classes for training, testing, and validation, respectively. To better embed into the $\R^2$ space, we pick a ten-classes subset (00, 01, 12, 13, 04, 05, 06, 77, 08, and 09) as the base classes for meta-training, and another five-class subset (02, 49, 83, 17, and 36) for one episode. The feature extractor is a 4-layer convolutional network with an additional fully-connected layer for 2D embedding. In Figure \ref{fig:flow} left panel, the VD is obtained by setting the centroid of each base class as the Voronoi center. For each novel class, the Voronoi center is simply the 1-shot support sample (Figure \ref{fig:flow} central panel). The surrogate representation is computed as the collection of distances from a support/query sample to each of the base classes, as shown in Figure \ref{fig:flow} right panel. Interestingly, the surrogate representations for a novel class, no matter if it is a support sample (dotted line) or a query sample (colored line) generally follow a certain pattern --- akin within a class, distinct cross class --- make them ideal surrogates for distinguishing between different novel classes. In our paper, we design a series of algorithms answering multiple questions regarding this surrogate representation: how to select base classes for the calculation of surrogate representation, how to combine it with feature representation, and how to integrate it into the overall ensemble workflow.

% ==================== Datasets ====================
\section{Main Datasets} \label{supp:dataset}
For a fair and thorough comparison with previous works, three widely-adopted benchmark datasets are used throughout this paper. 

(1) \noindent\textbf{mini-ImageNet}~\citep{vinyals2016matching} is a shrunk subset of ILSVRC-12~\citep{russakovsky2015imagenet}, consists of 100 classes in which 64 classes for training, 20 classes for testing and 16 classes for validation. Each class has 600 images of size $84 \times 84 \times 3$.

(2) \noindent\textbf{CUB}~\citep{welinder2010caltech} is another benchmark dataset for FSL, especially fine-grained FSL, including 200 species (classes) of birds. CUB is an unbalanced dataset with 58 images in average per class, also of size $84 \times 84 \times 3$. We split all classes into 100 base classes, 50 novel classes, and 50 validation classes, following previous works~\citep{chen2018a}.

(3) \noindent\textbf{tiered-ImageNet}~\citep{ren2018metalearning} is another subset of ILSVRC-12~\citep{russakovsky2015imagenet} but has more images, 779,165 images in total. All images are categorized into 351 base classes, 97 validation classes, and 160 novel classes. The number of images in each class is not always the same, 1281 in average. The image size is also $84 \times 84 \times 3$.

% ==================== CIVD ====================
\section{DeepVoro\texttt{-{}-}: Integrating Parametric and Nonparametric Methods via CIVD} \label{supp:civd}
%Please add the following packages if necessary:
%\usepackage{booktabs, multirow} % for borders and merged ranges
%\usepackage{soul}% for underlines
%\usepackage[table]{xcolor} % for cell colors
%\usepackage{changepage,threeparttable} % for wide tables
%If the table is too wide, replace \begin{table}[!htp]...\end{table} with
%\begin{adjustwidth}{-2.5 cm}{-2.5 cm}\centering\begin{threeparttable}[!htb]...\end{threeparttable}\end{adjustwidth}
\setlength\intextsep{0pt} % change this value
\begin{table}[!hp]\centering
    \caption{Cluster-induced Voronoi Diagram (CIVD) for the integration of parametric Logistic Regression (LR) and nonparametric nearest neighbor (i.e. Voronoi Diagram, VD) methods. The results from S2M2\_R and DC are also included in this table but excluded for comparison. Best result is marked in bold.}\label{tab:joint}
    \resizebox{\textwidth}{!}{%
    \small
    \begin{tabular}{lrrrrrrr}\toprule
        \textbf{Methods} &\multicolumn{2}{c}{\textbf{mini-Imagenet}} &\multicolumn{2}{c}{\textbf{CUB}} &\multicolumn{2}{c}{\textbf{tiered-ImageNet}} \\\cmidrule{1-7}
        &\textbf{5-way 1-shot} &\textbf{5-way 5-shot} &\textbf{5-way 1-shot} &\textbf{5-way 5-shot} &\textbf{5-way 1-shot} &\textbf{5-way 5-shot} \\\midrule    S2M2\_R &64.65 $\pm$ 0.45 &83.20 $\pm$ 0.30 &80.14 $\pm$ 0.45 &90.99 $\pm$ 0.23 &68.12 $\pm$ 0.52 &86.71 $\pm$ 0.34 \\
    DC &67.79 $\pm$ 0.45 &83.69 $\pm$ 0.31 &79.93 $\pm$ 0.46 &90.77 $\pm$ 0.24 &74.24 $\pm$ 0.50 &88.38 $\pm$ 0.31 \\
    \cellcolor[HTML]{f2f2f2}Power-LR &\cellcolor[HTML]{f2f2f2}65.45 $\pm$ 0.44 &\cellcolor[HTML]{f2f2f2}84.47 $\pm$ 0.29 &\cellcolor[HTML]{f2f2f2}\textbf{79.66 $\pm$ 0.44} &\cellcolor[HTML]{f2f2f2}\textbf{91.62 $\pm$ 0.22} &\cellcolor[HTML]{f2f2f2}73.57 $\pm$ 0.48 &\cellcolor[HTML]{f2f2f2}89.07 $\pm$ 0.29 \\
    \cellcolor[HTML]{f2f2f2}Voronoi-LR &\cellcolor[HTML]{f2f2f2}65.58 $\pm$ 0.44 &\cellcolor[HTML]{f2f2f2}84.51 $\pm$ 0.29 &\cellcolor[HTML]{f2f2f2}79.63 $\pm$ 0.44 &\cellcolor[HTML]{f2f2f2}91.61 $\pm$ 0.22 &\cellcolor[HTML]{f2f2f2}73.65 $\pm$ 0.48 &\cellcolor[HTML]{f2f2f2}\textbf{89.15 $\pm$ 0.29} \\
    \cellcolor[HTML]{f2f2f2}VD &\cellcolor[HTML]{f2f2f2}65.37 $\pm$ 0.44 &\cellcolor[HTML]{f2f2f2}84.37 $\pm$ 0.29 &\cellcolor[HTML]{f2f2f2}78.57 $\pm$ 0.44 &\cellcolor[HTML]{f2f2f2}91.31 $\pm$ 0.23 &\cellcolor[HTML]{f2f2f2}72.83 $\pm$ 0.49 &\cellcolor[HTML]{f2f2f2}88.58 $\pm$ 0.29 \\\midrule
    \multicolumn{7}{c}{\textbf{CIVD-based DeepVoro\texttt{-{}-}}} \\\midrule
    \cellcolor[HTML]{f2f2f2}VD + Power-LR &\cellcolor[HTML]{f2f2f2}65.63 $\pm$ 0.44 &\cellcolor[HTML]{f2f2f2}84.25 $\pm$ 0.30 &\cellcolor[HTML]{f2f2f2}79.52 $\pm$ 0.43 &\cellcolor[HTML]{f2f2f2}91.52 $\pm$ 0.22 &\cellcolor[HTML]{f2f2f2}73.68 $\pm$ 0.48 &\cellcolor[HTML]{f2f2f2}88.71 $\pm$ 0.29 \\
    \cellcolor[HTML]{f2f2f2}VD + Voronoi-LR &\cellcolor[HTML]{f2f2f2}\textbf{65.85 $\pm$ 0.43} &\cellcolor[HTML]{f2f2f2}\textbf{84.66 $\pm$ 0.29} &\cellcolor[HTML]{f2f2f2}79.40 $\pm$ 0.44 &\cellcolor[HTML]{f2f2f2}91.57 $\pm$ 0.22 &\cellcolor[HTML]{f2f2f2}\textbf{73.78 $\pm$ 0.48} &\cellcolor[HTML]{f2f2f2}89.02 $\pm$ 0.29 \\
    \bottomrule
    \end{tabular}}
\end{table}

\subsection{Experimental Setup and Implementation Details}
In this section, we first establish three few-shot classification models with different underlying geometric structures, two logistic regression (LR) models and one nearest neighbor model: (1) Power Diagram-based LR (Power-LR), (2) Voronoi Diagram-based LR (Voronoi-LR), and (3) Voronoi Diagram (VD). Then, the main purposes of our analysis are (1) to examine how the performance is affected by the proposed Voronoi Reduction method in Sec. \ref{sec:civd}, and (2) to inspect whether VD can be integrated with Power/Voronoi Diagram-based LRs.

The feature transformation used throughout this section is $P_{w,b,\lambda}$ with $w = 1.0, b = 0.0, \lambda = 0.5$. For Power-LR, we train it directly on the transformed $K$-way $N$-shot support samples using PyTorch library with an Adam optimizer with batch size at 64 and learning rate at 0.01. For Voronoi-LR, the vanilla LR is retrofitted as shown in Algorithm \ref{alg:power}, in which the bias is given by Theorem \ref{thm:thm} to make sure that the parameters induce a VD in each iteration. 

In our CIVD model in Definition \ref{def:civd}, we use a \emph{cluster} instead of a single \emph{prototype} to stand for a novel class. Here this cluster contains two points, i.e. $\gC_k = \{ \vc_k, \tilde{\vc}_k \}$, in which $\vc_k$ is obtained from VD, and $\tilde{\vc}_k$ is acquired from Power-LR or Voronoi-LR. The question we intend to answer here is that whether Power-LR or Voronoi-LR is the suitable model for the integration.
\IncMargin{1em}
\begin{algorithm}
  \SetAlgoNoLine
  \KwData{Support Set $\gS$}
  \KwResult{$\mW$} 
  Initialize $\mW \gets \mW^{(0)}$\;
  \For(){$\textrm{epoch} \gets 1,...,\textrm{\#epoch}$}{
    $\evb_k \gets -\frac{1}{4} ||\mW_k||_2^2, \forall k = 1,...,K$ \Comment*[r]{Apply Theorem \ref{thm:thm}}
    Compute loss $\gL(\mW, \vb)$ \Comment*[r]{forward propagation}
    Update $\mW$ \Comment*[r]{backward propagation}    
  }      
  \Return{$\mW$}    
  \caption{Voronoi Diagram-based Logistic Regression.}\label{alg:power}
  \end{algorithm}
\DecMargin{1em}

\subsection{Results}
The results are shown in Table \ref{tab:joint}. Interestingly, when integrated with VD, Power-LR never reaches the best result, suggesting that VD and LR are intrinsic different geometric models, and cannot be simply integrated together without additional effort. On mini-ImageNet and tiered-ImageNet datasets, the best results are achieved by either Voronoi-LR or VD+Voronoi-LR, showing that CIVD coupled with the proposed Voronoi reduction can ideally integrate parametric and nonparametric few-shot models. Notably, on these two datasets, when Power-LR is reduced to Voronoi-LR, although the number of parameters is decreased ($\vb$ is directly given by Theorem \ref{thm:thm}, not involved in the optimization), the performance is always better, for example, increases from $65.45\%$ to $65.58\%$ on 5-way 1-shot mini-ImageNet data. On CUB dataset, results of different models are similar, probably because CUB is a fine-grained dataset and all classes are similar to each other (all birds).

% ==================== Ensemble ====================
\section{DeepVoro: Improving FSL via Hierarchical Heterogeneities} \label{supp:ensemble}
\begin{figure}[!tp]
    \centering
    \includegraphics[width=\linewidth]{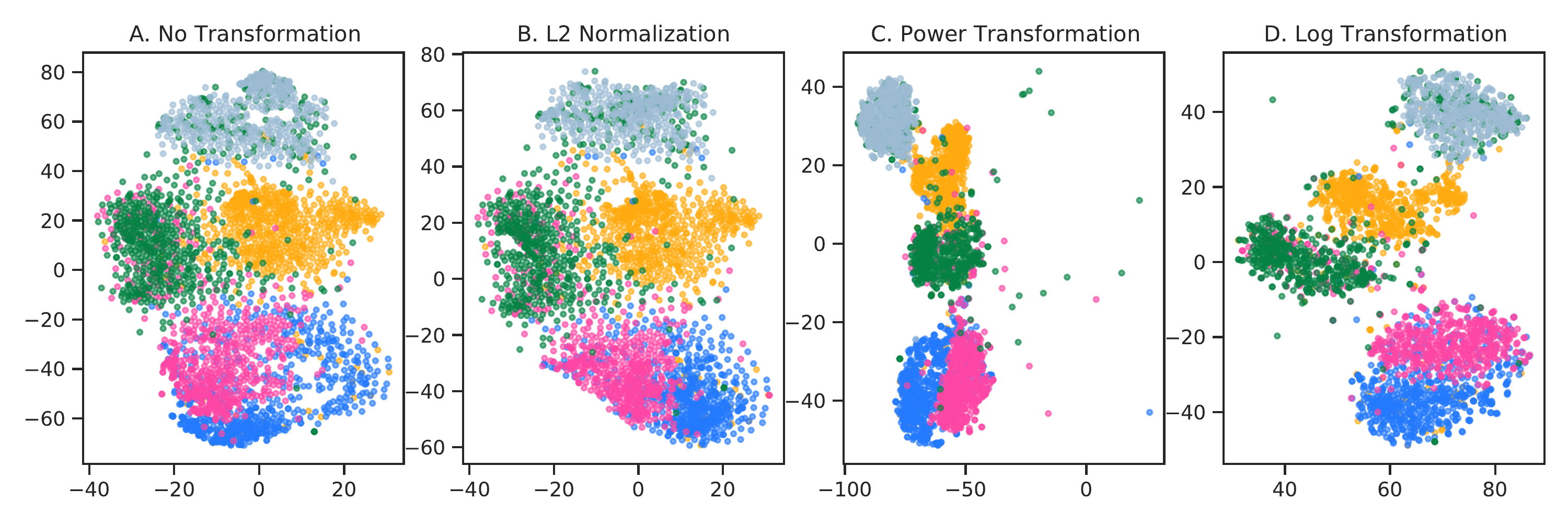}
    \caption{The t-SNE visualizations of (A) original features, (B) L2 normalization, (C) Tukey’s Ladder of Powers Transformation with $\lambda = 0.5$, and (D) compositional transformation with $\lambda = 0, w = 1, b = 0.04$ of 5 novel classes from mini-ImageNet dataset.} \label{fig:tsne}
\end{figure}

\subsection{Experimental Setup and Implementation Details}
In this section we describe feature-level and transformation-level heterogeneities that are used for ensemble in order to improve FSL. See the next section for geometry-level heterogeneity.

\noindent\textbf{Feature-level heterogeneity.} Considering the reproducibility of the methodology, we only employ deterministic data augmentation upon the images without randomness involved. Specifically, three kinds of data augmentation techniques are used. (1) Rotation is an important augmentation method widely used in self-supervised learning~\citep{charting2020}. Rotating the original images by $0$\textdegree, $90$\textdegree, $180$\textdegree, and $270$\textdegree gives us four ways of augmentation. (2) After rotation, we can flip the images horizontally, giving rise to additional two choices after each rotation degree. (3) Central cropping after scaling can alter the resolution and focus area of the image. Scaling the original images to $(84+B) \times (84+B)$, $B$ increasing from $0$ to $70$ with step $10$, bringing us eight ways of augmentation. Finally, different combinations of the three types result in 64 kinds of augmentation methods (i.e. $|\{T\}|=64$). 

\noindent\textbf{Transformation-level heterogeneity.} In our compositional transformation, the function $(h_\lambda \circ g_{w,b} \circ f)(\vz)$ is parameterized by ${w,b,\lambda}$. Since $g$ is appended after the L2 normalization $f$, the vector comes into $g$ is always a unit vector, so we simply set $w = 1$. For the different combinations of $\lambda$ and $b$, we test different values with either $\lambda = 0$ or $\lambda \neq 0$ on the hold-out validation set (as shown in Figure \ref{fig:beta} and \ref{fig:beta2}), and pick top-8 combinations with the best performance on the validation set.

\noindent\textbf{Ensemble Schemes.} Now, in our configuration pool $\{ T \} \times \{ P_{w,b,\lambda} \}$, there are $512$ possible configurations $\{ \rho^{(i)} \}_{i=1}^{512}$. For each $\rho$, we apply it on both the testing and the validation sets. With this large pool of ensemble candidates, how and whether to select a subset $\{ \rho^{(i)} \}_{i=1}^{L'} \subseteq \{ \rho^{(i)} \}_{i=1}^{512}$ is still a nontrivial problem. Here we explore three different schemes. (1) \emph{Full (vanilla) ensemble}. All candidates in $\{ \rho^{(i)} \}_{i=1}^{512}$ are taken into consideration and then plugged into Definition \ref{def:ccvd} to build the CIVD for space partition. (2) \emph{Random ensemble}. A randomly selected subset with size $L' < L$ is used for ensemble. (3) \emph{Guided ensemble}. We expect the performance for $\{ \rho^{(i)} \}_{i=1}^{512}$ on the validation set can be used to guide the selection of $\{ \rho^{(i)} \}_{i=1}^{L'}$ from the testing set, provided that there is good correlation between the testing set and the validation set. Specifically, we rank the configurations in the validation set with regard to their performance, and add them sequentially into $\{ \rho^{(i)} \}_{i=1}^{L'}$ until a maximum ensemble performance is reached on the validation set, then we use this configuration set for the final ensemble. Since VD is nonparametric and fast, we adopt VD as the building block and only use VD for each $\rho$ for the remaining part of the paper. The $\alpha$ value in the influence function (Definition \ref{def:influ}) is set at 1 throughout the paper, for the simplicity of computation.

For a fair comparison, we downloaded the trained models\footnote{downloaded from \url{https://github.com/nupurkmr9/S2M2_fewshot}} used by \citet{charting2020} and \citet{free2021}. The performance of FSL algorithms is typically evaluated by a sequence of independent episodes, so the data split and random seed for the selection of novel classes as well as support/query set in each episode will all lead to different result. To ensure the fairness of our evaluation, DC~\citep{free2021}, and S2M2\_R~\citep{charting2020} are reevaluated with the same data split and random seed as DeepVoro. The results are obtained by running 2000 episodes and the average accuracy as well as $95\%$ confidence intervals are reported.

\subsection{Results}
%Please add the following packages if necessary:
%\usepackage{booktabs, multirow} % for borders and merged ranges
%\usepackage{soul}% for underlines
%\usepackage[table]{xcolor} % for cell colors
%\usepackage{changepage,threeparttable} % for wide tables
%If the table is too wide, replace \begin{table}[!htp]...\end{table} with
%\begin{adjustwidth}{-2.5 cm}{-2.5 cm}\centering\begin{threeparttable}[!htb]...\end{threeparttable}\end{adjustwidth}
\begin{table}[!hp]\centering
    \caption{Ablation study of DeepVoro's performance with different levels of ensemble. The number of ensemble members are given in parentheses.}\label{tab:no_dist}
    \resizebox{\textwidth}{!}{%
    \small
    \begin{tabular}{lccccccccr}\toprule
    \textbf{Methods} &\textbf{Feature-level} &\textbf{Transformation-level} &\multicolumn{2}{c}{\textbf{mini-ImageNet}} &\multicolumn{2}{c}{\textbf{CUB}} &\multicolumn{2}{c}{\textbf{tiered-ImageNet}} \\\cmidrule{1-9}
    & & &1-shot &5-shot &1-shot &5-shot &1-shot &5-shot \\\midrule
    \cellcolor[HTML]{f2f2f2}No Ensemble &\cellcolor[HTML]{f2f2f2}{\xmark} &\cellcolor[HTML]{f2f2f2}{\xmark} &\cellcolor[HTML]{f2f2f2}65.37 $\pm$ 0.44 &\cellcolor[HTML]{f2f2f2}84.37 $\pm$ 0.29 &\cellcolor[HTML]{f2f2f2}78.57 $\pm$ 0.44 &\cellcolor[HTML]{f2f2f2}91.31 $\pm$ 0.23 &\cellcolor[HTML]{f2f2f2}72.83 $\pm$ 0.49 &\cellcolor[HTML]{f2f2f2}88.58 $\pm$ 0.29 \\
    Vanilla Ensemble (8) &{\xmark} &{\cmark} &66.45 $\pm$ 0.44 &84.55 $\pm$ 0.29 &80.98 $\pm$ 0.44 &91.47 $\pm$ 0.22 &74.02 $\pm$ 0.49 &88.90 $\pm$ 0.29 \\
    Vanilla Ensemble (64) &{\cmark} &{\xmark} &67.88 $\pm$ 0.45 &86.39 $\pm$ 0.29 &77.30 $\pm$ 0.43 &91.26 $\pm$ 0.23 &73.74 $\pm$ 0.49 &88.67 $\pm$ 0.29 \\
    Vanilla Ensemble (512) &{\cmark} &{\cmark} &69.23 $\pm$ 0.45 &86.70 $\pm$ 0.28 &79.90 $\pm$ 0.43 &91.70 $\pm$ 0.22 &74.51 $\pm$ 0.48 &89.11 $\pm$ 0.29 \\
    Random Ensemble (512) &{\cmark} &{\cmark} &69.30 $\pm$ 0.45 &86.74 $\pm$ 0.28 &80.40 $\pm$ 0.43 &91.94 $\pm$ 0.22 &74.64 $\pm$ 0.48 &89.15 $\pm$ 0.29 \\
    Guided Ensemble (512) &{\cmark} &{\cmark} &\textbf{69.48 $\pm$ 0.45} &\textbf{86.75 $\pm$ 0.28} &\textbf{82.99 $\pm$ 0.43} &\textbf{92.62 $\pm$ 0.22} &\textbf{74.98 $\pm$ 0.48} &\textbf{89.40 $\pm$ 0.29} \\
    \bottomrule
    \end{tabular}}
\end{table}
Our proposed compositional transformation enlarges the expressivity of the transformation function. When the Tukey’s ladder of powers transformation is used individually, as reported in \citet{free2021}, the optimal $\lambda$ is not $0$, but if an additional linear transformation $g$ is inserted between $f$ and $h$, $\lambda = 0$ coupled with a proper $b$ can give even better result, as shown in Figure \ref{fig:beta} and \ref{fig:beta2}. Importantly, from Figure \ref{fig:beta}, a combination of $\lambda$ and $b$ with good performance on the validation set can also produce satisfactory result on the testing set, suggesting that it is possible to optimize the hyperparameters on the validation set and generalize well on the testing set. In terms of the polymorphism induced by various transformations in the feature space, Figure \ref{fig:tsne} exhibits the t-SNE visualizations of the original features and the features after three different kinds of transformations, showing that the relative positions of different novel classes is largely changes especially after compositional transformation (as shown in D). Besides commonly used data augmentation, this transformation provides another level of diversity that may be beneficial to the subsequent ensemble.

The results for different levels of ensemble are shown in Table \ref{tab:no_dist}, in which the number of ensemble member are also indicated. Although transformation ensemble does not involve any change to the feature, it can largely improve the results for 1-shot FSL, from $65.37\%$ to $66.45\%$ on mini-ImageNet, from $78.57\%$ to $80.98\%$ on CUB, and from $72.83\%$ to $74.02\%$ on tiered-ImageNet, respectively, probably because 1-shot FSL is more prone to overfitting due to its severe data deficiency. Feature-level ensemble, on the other hand, is more important for 5-shot FSL, especially for mini-ImageNet. When combining the two levels together, the number of ensemble members increases to $512$ and the performance significantly surpasses each individual level. On all three datasets, the guided ensemble scheme always achieves the best result for both single-shot and multi-shot cases, showing that the validation set can indeed be used for the guidance of subset selection and our method is robust cross classes in the same domain. When there is no such validation set available, the full ensemble and random ensemble schemes can also give comparable result.

To inspect how performance changes with different number of ensemble members, we exhibit the distribution of accuracy at three ensemble levels for mini-ImageNet in Figure \ref{fig:ens-mini-5} and \ref{fig:ens-mini-1} , for CUB in Figure \ref{fig:ens-cub-5} and \ref{fig:ens-cub-1}, and for tiered-ImageNet in Figure \ref{fig:ens-tie-5} and \ref{fig:ens-tie-1}. Figure (b) in each of them also exhibits the correlation between the testing and validation sets for all $512$ configurations. Clearly, better result is often reached when there are more configurations for the ensemble, validating the efficacy of our method for improving the performance and robustness for better FSL.
\IncMargin{1em}
\begin{algorithm}
  \KwData{Base classes $\gD$, Support Set $\gS = \{(\vx_i, y_i)\}_{i = 1}^{K \times N}, y_i \in \gC^{\gT}$, query sample $\vx$}
  \KwResult{$\tilde{\vd}$} 
  $\gD' \gets (P_{w,b,\lambda} \circ \phi \circ T)(\gD)$ \Comment*[r]{Extract and transform feature}
  $\gS' \gets (P_{w,b,\lambda} \circ \phi \circ T)(\gS)$\;
  $\vz \gets (P_{w,b,\lambda} \circ \phi \circ T)(\vx)$\;
  \For(){$t \gets 1,...,|\gC^{\text{base}}|$\Comment*[r]{Compute prototypes of base classes}}{ 
    $\vc'_t \gets \frac{1}{ |\{(\vz',y)|\vz' \in \gD', y=t \}| } {\textstyle\sum}_{\vz' \in \gD', y=t} \vz'$
  }  
  \For(){$k \gets 1,...,K$\Comment*[r]{Compute prototypes from support samples}}{ 
    $\vc_k \gets \frac{1}{N} \sum_{\vz' \in \gS', y = k} \vz'$\;
    $\evd_k \gets d(\vz, \vc_k)$
  }
  $\gC^{\text{surrogate}} \gets \emptyset$\;
  \For(){$k \gets 1,...,K$\Comment*[r]{Find surrogate classes}}{ 
    $\gC^{\text{surrogate}} \gets \gC^{\text{surrogate}} \bigcup \underset{t \in \{1,...,|\gC^{\text{base}}|\}}{\text{Top-}R} d(\vc_k, \vc'_t)$
  }  
  $\tilde{R} \gets |\gC^{\text{surrogate}}|$\;
  $\vd' \gets (d(\vz, \vc'_1), ..., d(\vz, \vc'_{\tilde{R}})) $ \Comment*[r]{Compute surrogate representation for query sample}
  \For(){$k \gets 1,...,K$\Comment*[r]{Compute surrogate representations for support samples}}{ 
    $\vd'_k \gets (d(\vc_k, \vc'_1), ..., d(\vc_k, \vc'_{\tilde{R}}))$\;
    $\evd''_k \gets d(\vd', \vd'_k)$
  }
  $\tilde{\vd} \gets \beta \frac{\vd}{||\vd||_1} + \gamma \frac{\vd''}{||\vd''||_1}$ \Comment*[r]{Compute final criterion}
  \Return{$\tilde{\vd}$}    
  \caption{VD with Surrogate Representation for Episode $\gT$.}\label{alg:surrogate}
  \end{algorithm}
\DecMargin{1em}

% ==================== mini-5-shot
\begin{figure}[!htp]
    \centering
    \subfloat[Transformation-level Ensemble]{\includegraphics[width= 2.8in]{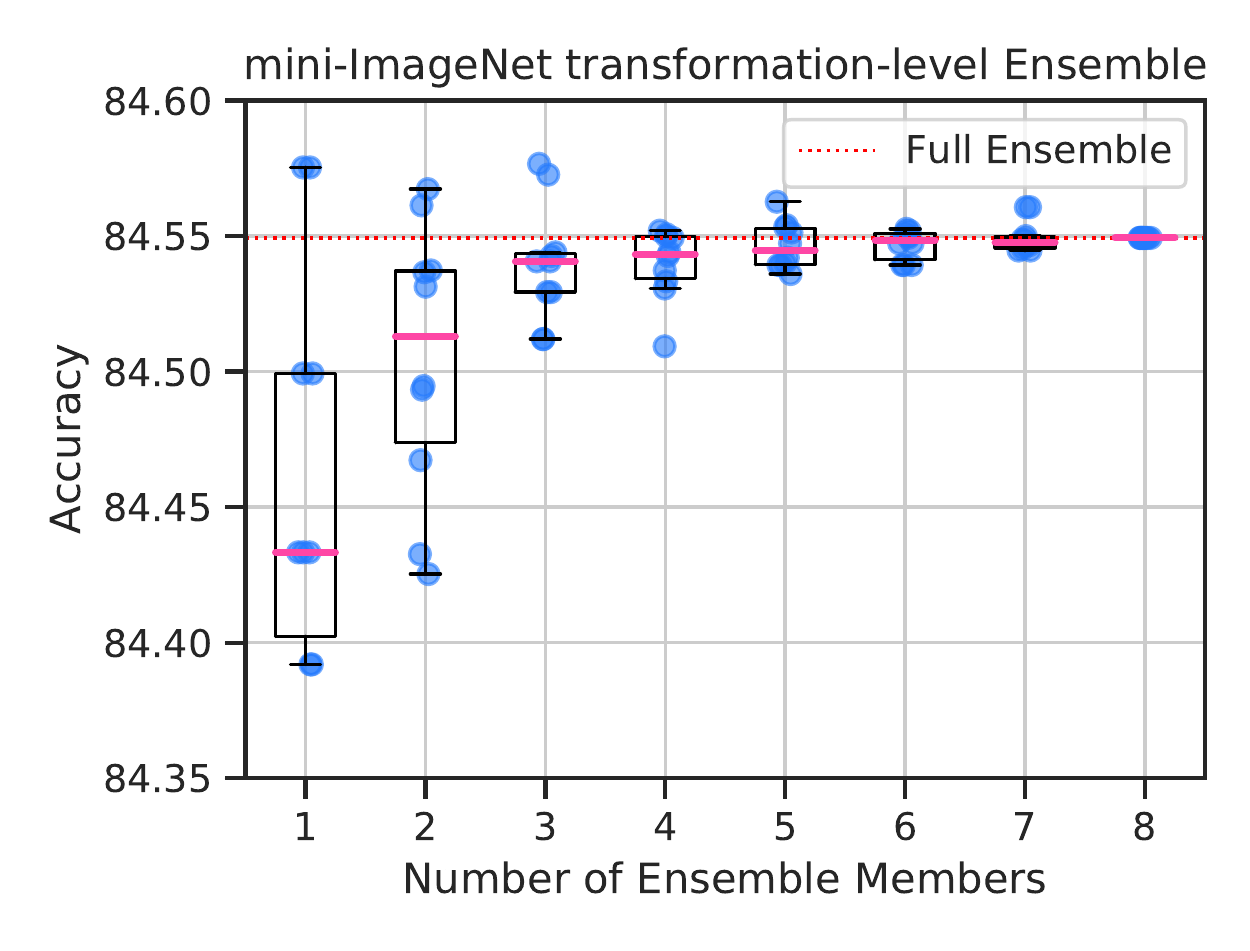}} 
    \subfloat[Testing/Validation Sets Correlation]{\includegraphics[width= 2.7in]{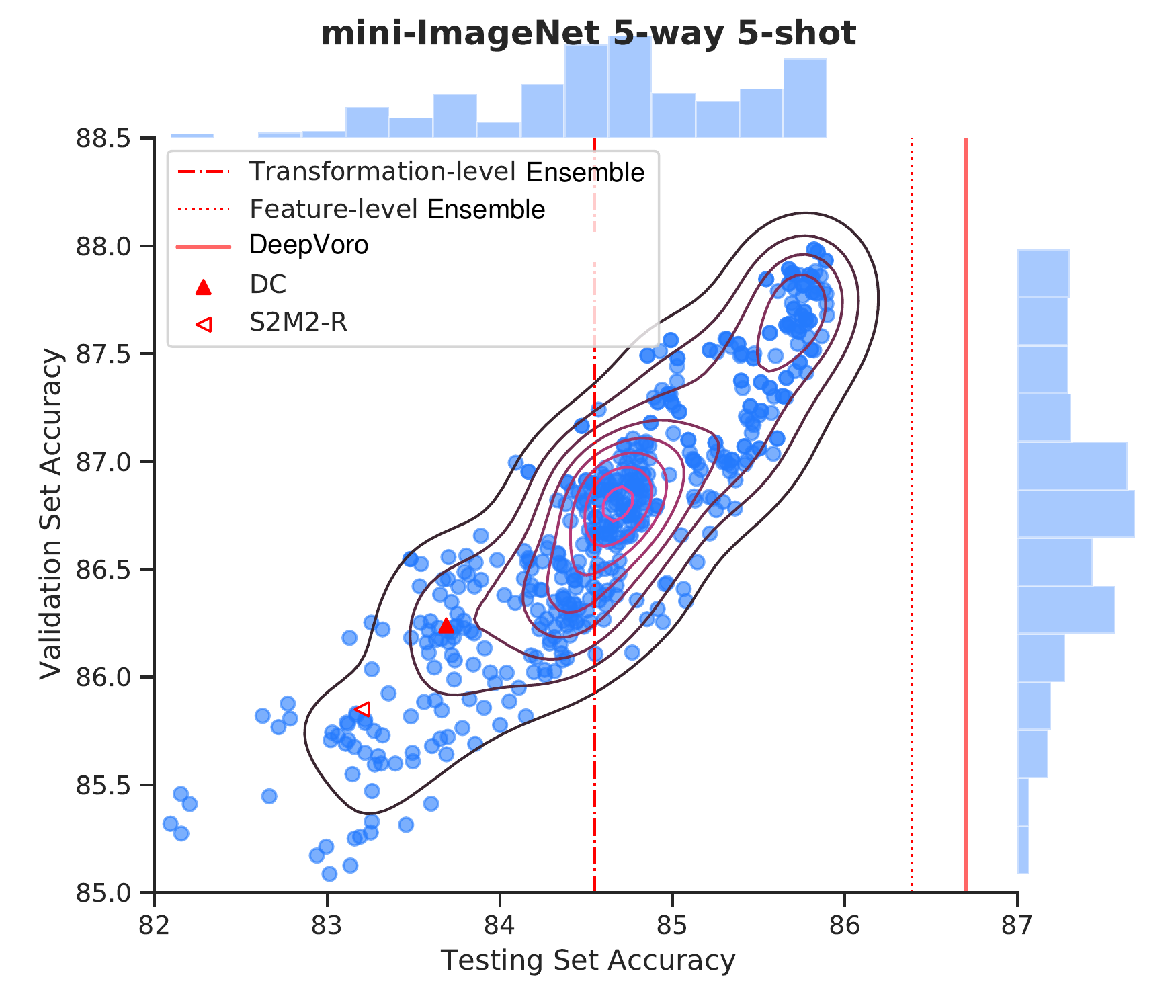}} \\ [-0.0ex]
    \subfloat[Feature-level Ensemble on 5-way 5-shot mini-ImageNet Dataset]{\includegraphics[width= \linewidth]{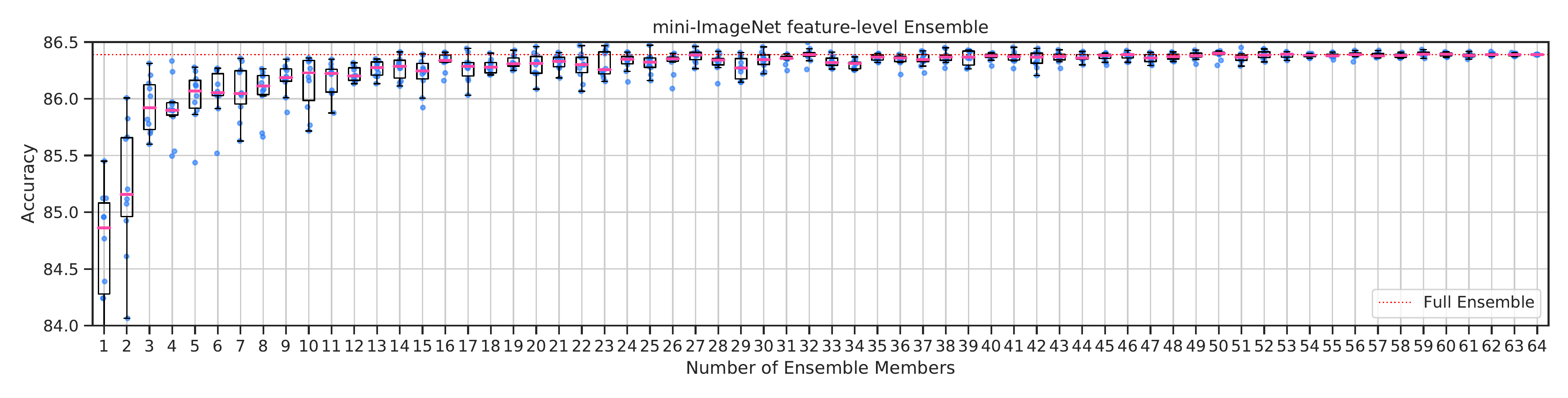}} \\ [-0.0ex]
    \subfloat[DeepVoro on 5-way 5-shot mini-ImageNet Dataset]{\includegraphics[width= \linewidth]{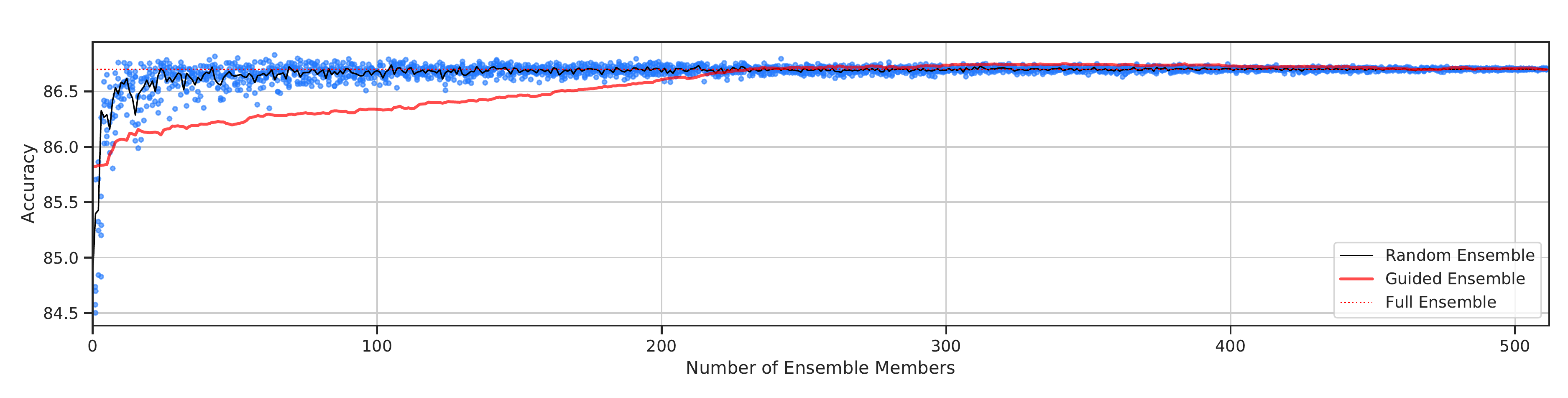}} 
    \caption{Three levels of ensemble and the correlation between testing and validation sets with different configurations in the configuration pool.}\label{fig:ens-mini-5}
\end{figure}
% ==================== mini-1-shot
\begin{figure}[!htp]
    \centering
    \subfloat[Transformation-level Ensemble]{\includegraphics[width= 2.8in]{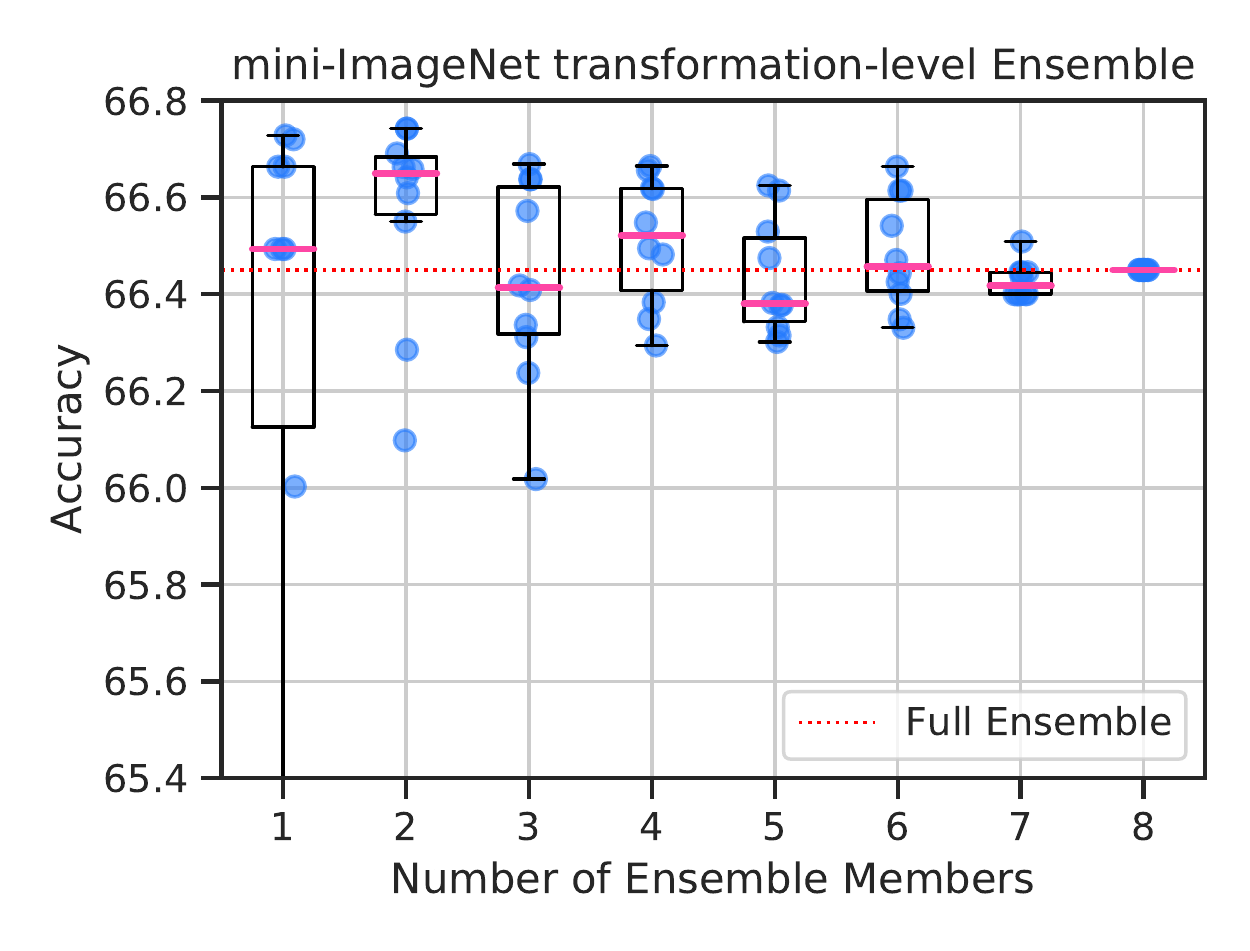}} 
    \subfloat[Testing/Validation Sets Correlation]{\includegraphics[width= 2.7in]{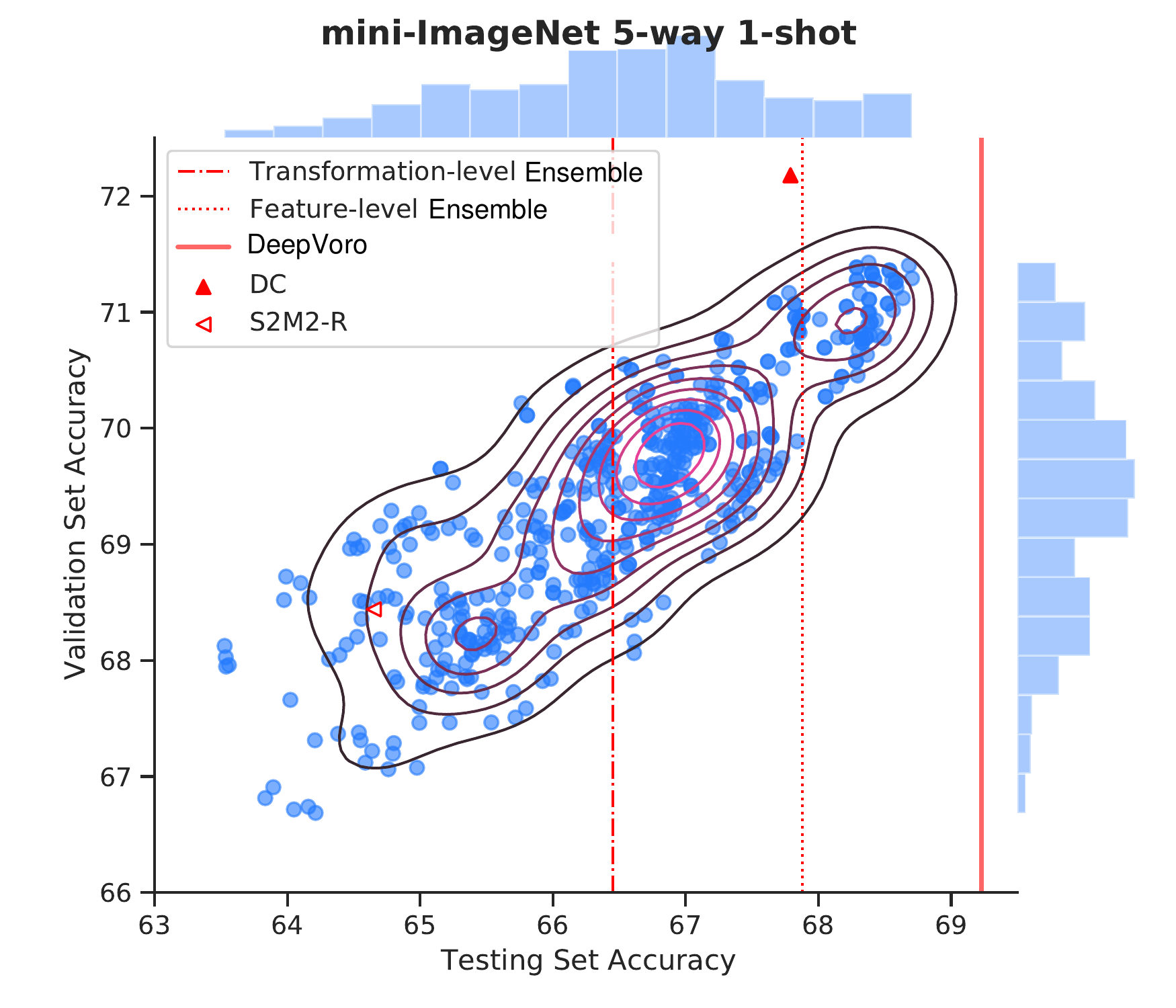}} \\ [-0.0ex]
    \subfloat[Feature-level Ensemble on 5-way 1-shot mini-ImageNet Dataset]{\includegraphics[width= \linewidth]{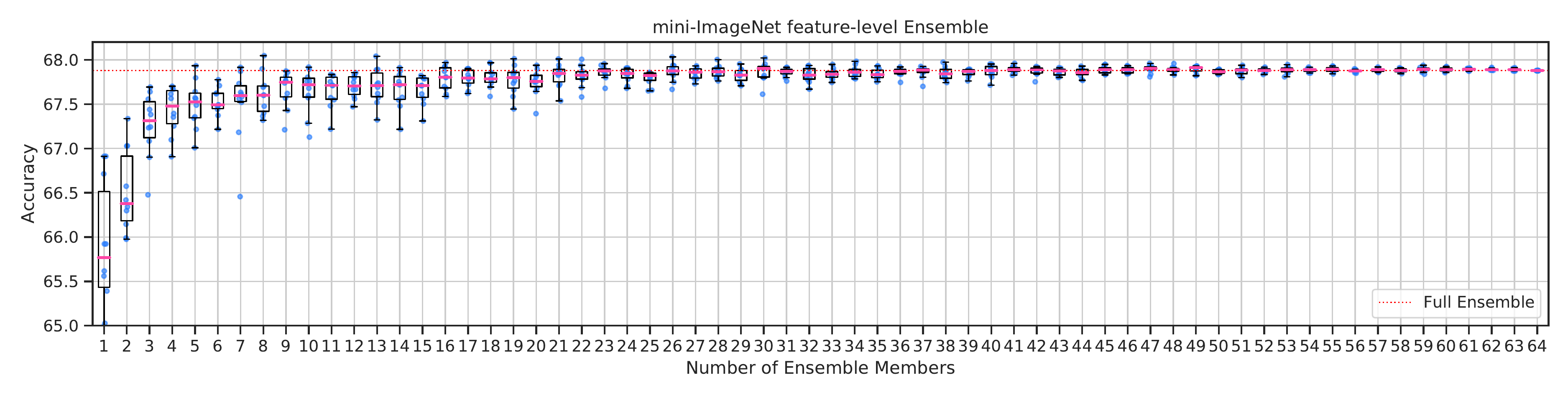}} \\ [-0.0ex]
    \subfloat[DeepVoro on 5-way 1-shot mini-ImageNet Dataset]{\includegraphics[width= \linewidth]{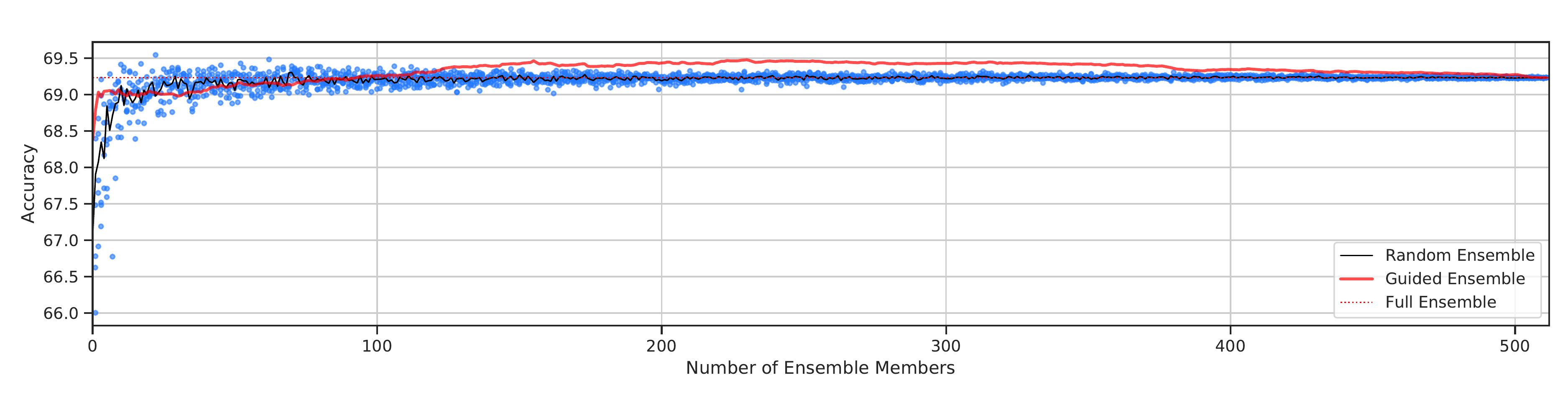}} 
    \caption{Three levels of ensemble and the correlation between testing and validation sets with different configurations in the configuration pool.}\label{fig:ens-mini-1}
\end{figure}

% ==================== cub-5-shot
\begin{figure}[!htp]
    \centering
    \subfloat[Transformation-level Ensemble]{\includegraphics[width= 2.8in]{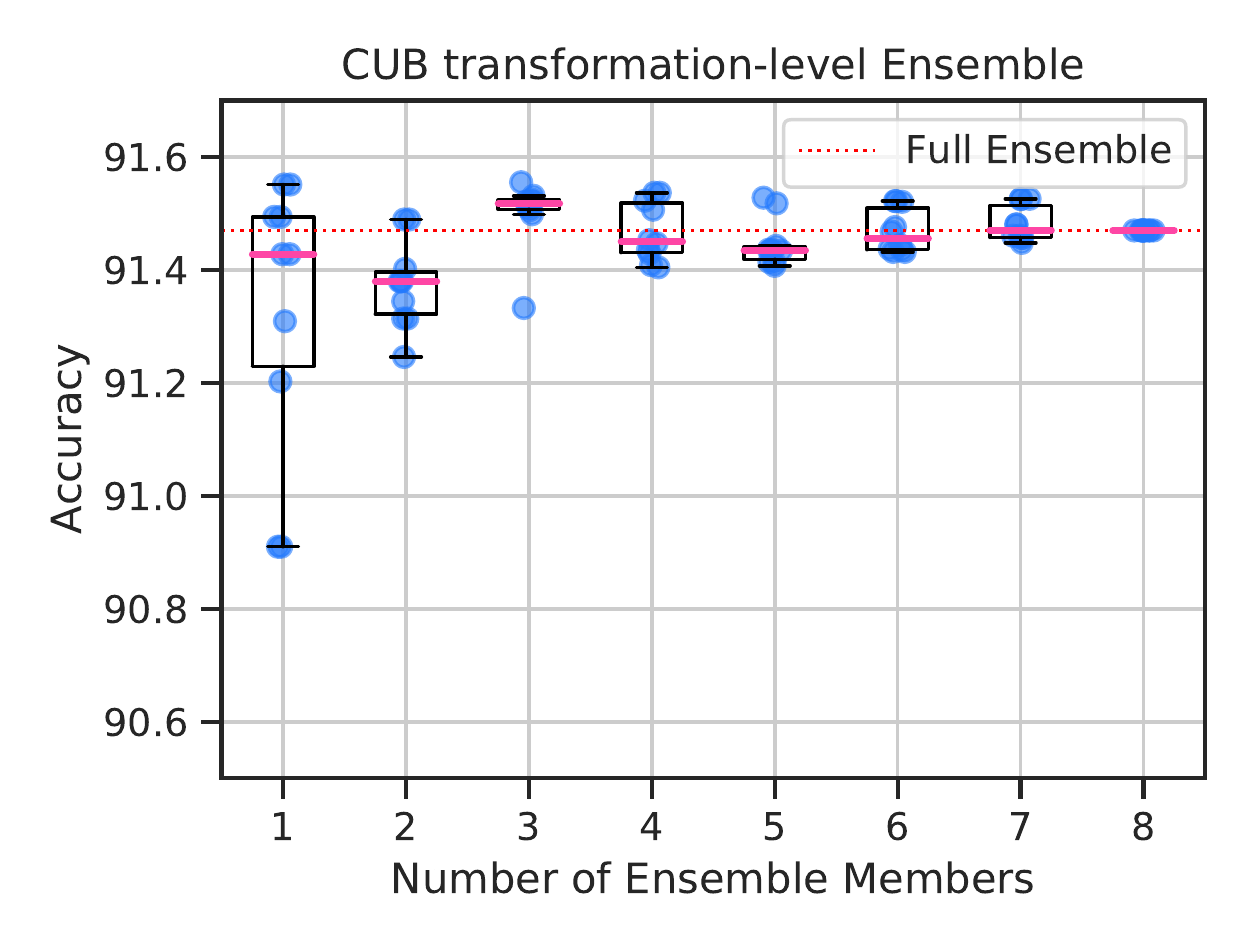}} 
    \subfloat[Testing/Validation Sets Correlation]{\includegraphics[width= 2.7in]{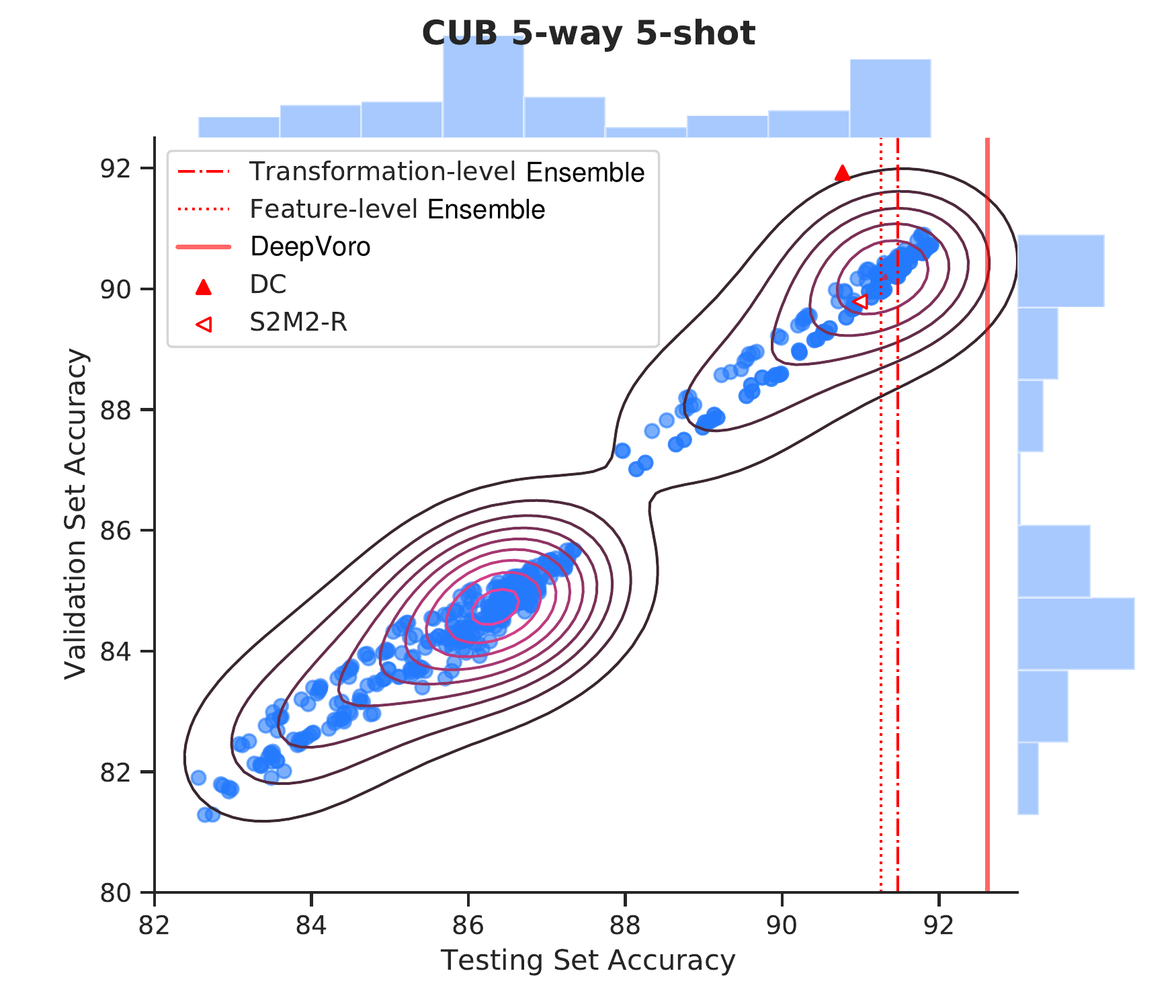}} \\ [-0.0ex]
    \subfloat[Feature-level Ensemble on 5-way 5-shot CUB Dataset]{\includegraphics[width= \linewidth]{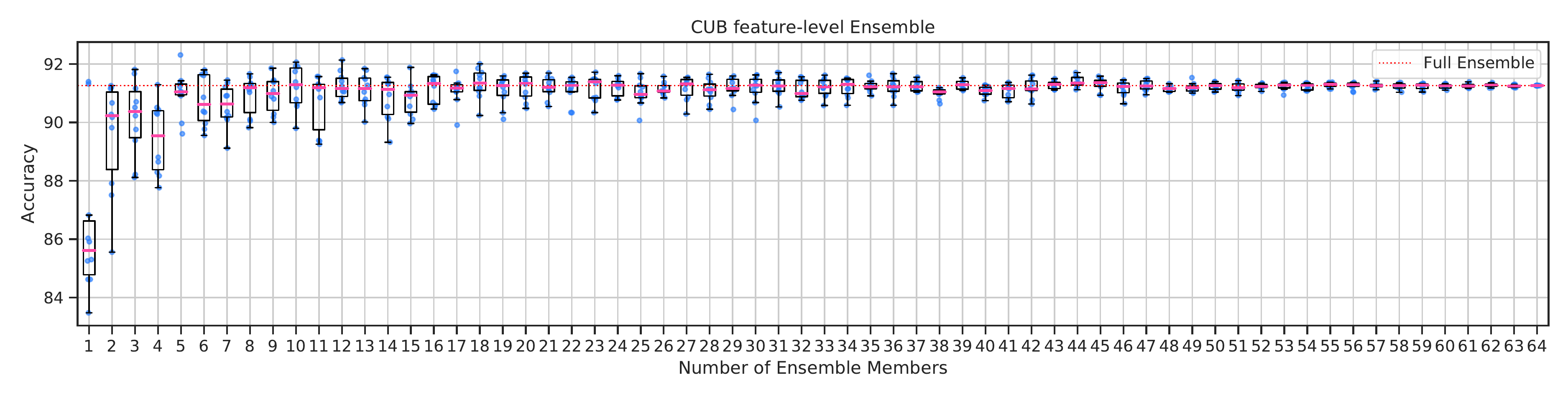}} \\ [-0.0ex]
    \subfloat[DeepVoro on 5-way 5-shot CUB Dataset]{\includegraphics[width= \linewidth]{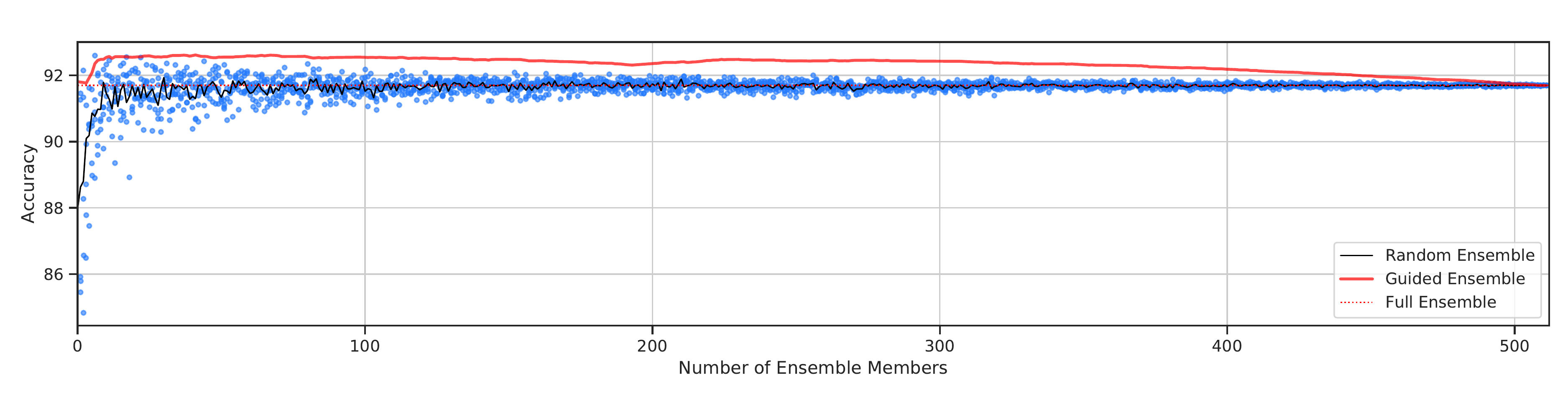}} 
    \caption{Three levels of ensemble and the correlation between testing and validation sets with different configurations in the configuration pool.}\label{fig:ens-cub-5}
\end{figure}
% ==================== cub-1-shot
\begin{figure}[!htp]
    \centering
    \subfloat[Transformation-level Ensemble]{\includegraphics[width= 2.8in]{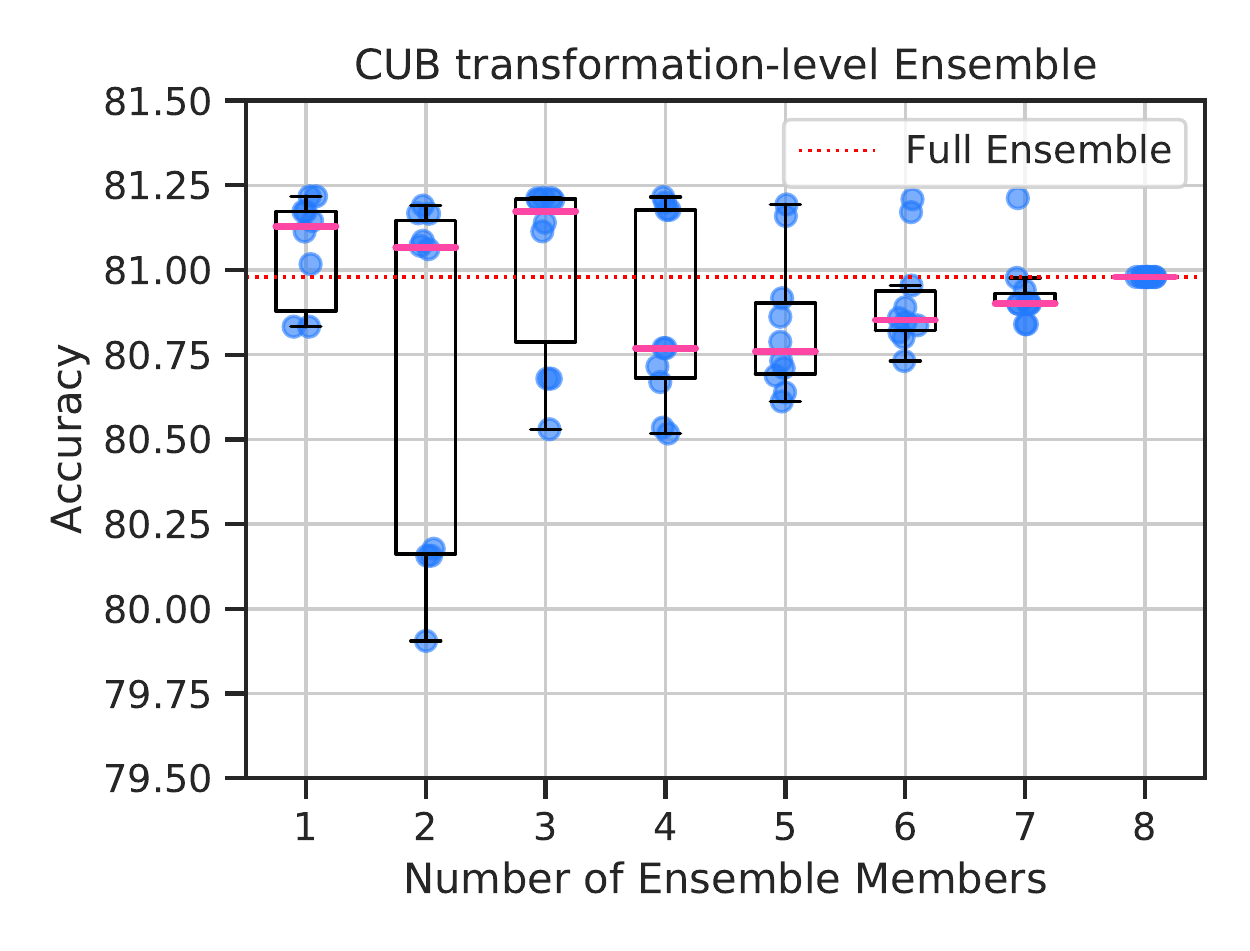}} 
    \subfloat[Testing/Validation Sets Correlation]{\includegraphics[width= 2.7in]{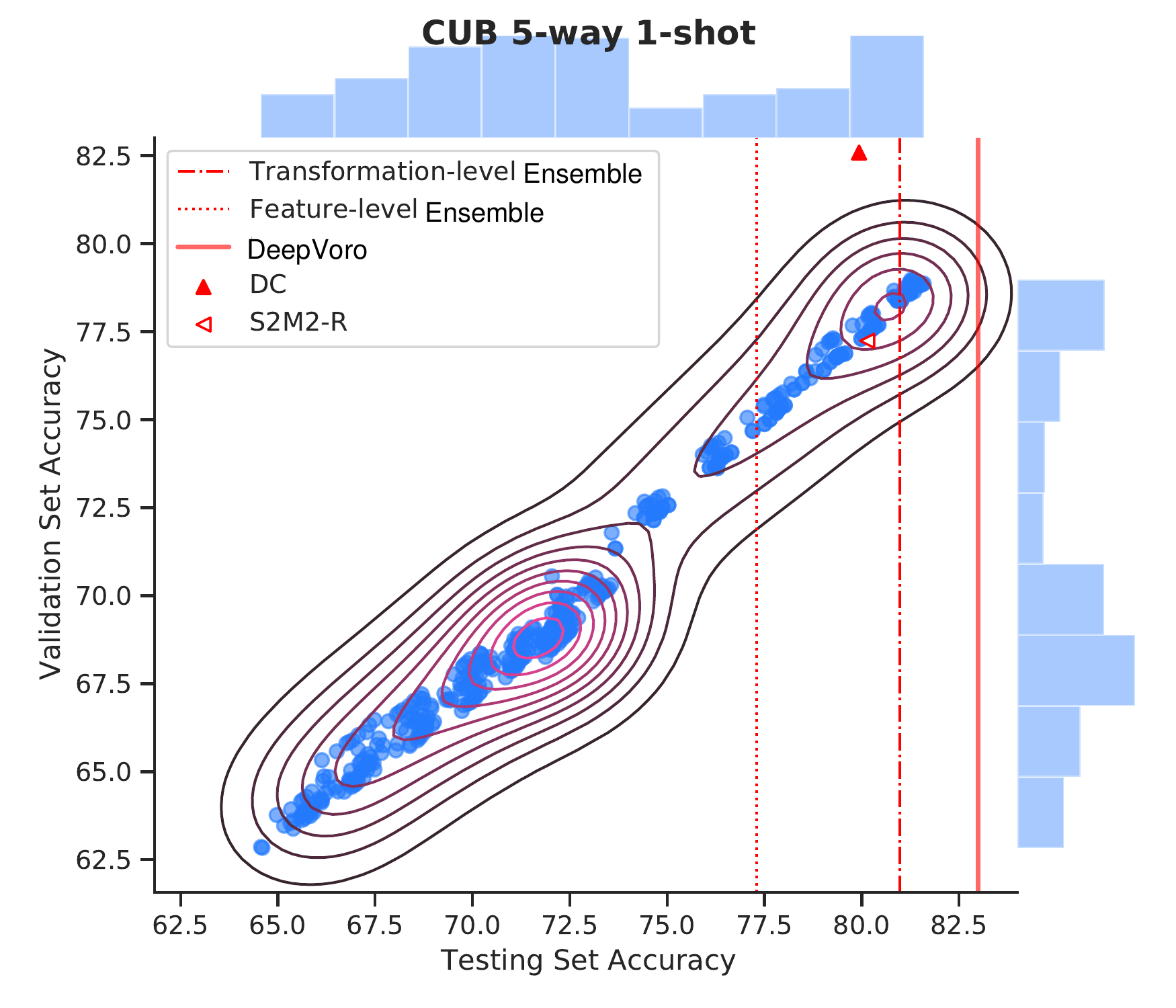}} \\ [-0.0ex]
    \subfloat[Feature-level Ensemble on 5-way 1-shot CUB Dataset]{\includegraphics[width= \linewidth]{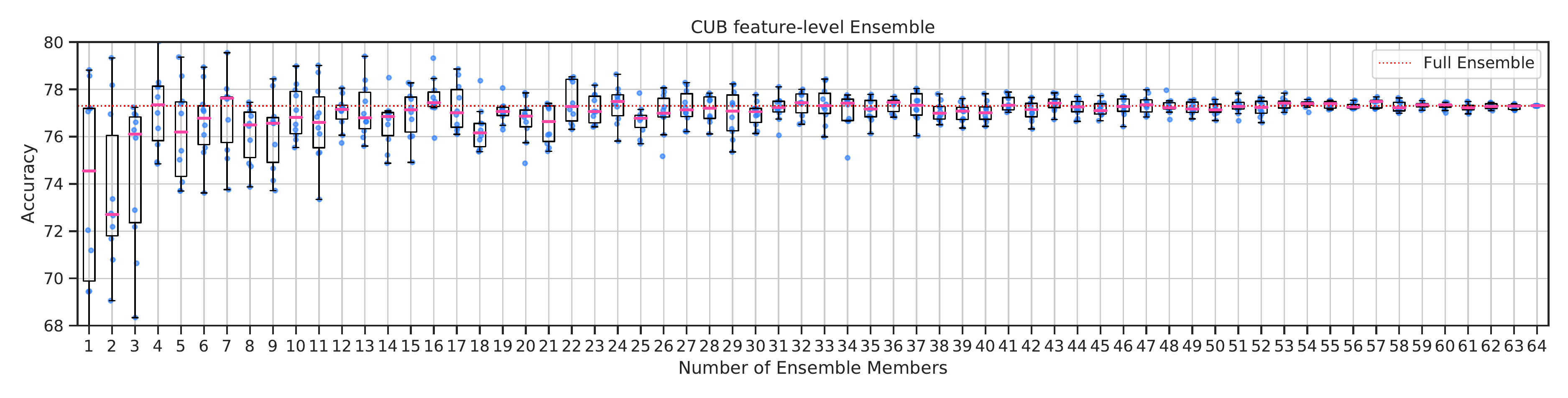}} \\ [-0.0ex]
    \subfloat[DeepVoro on 5-way 1-shot CUB Dataset]{\includegraphics[width= \linewidth]{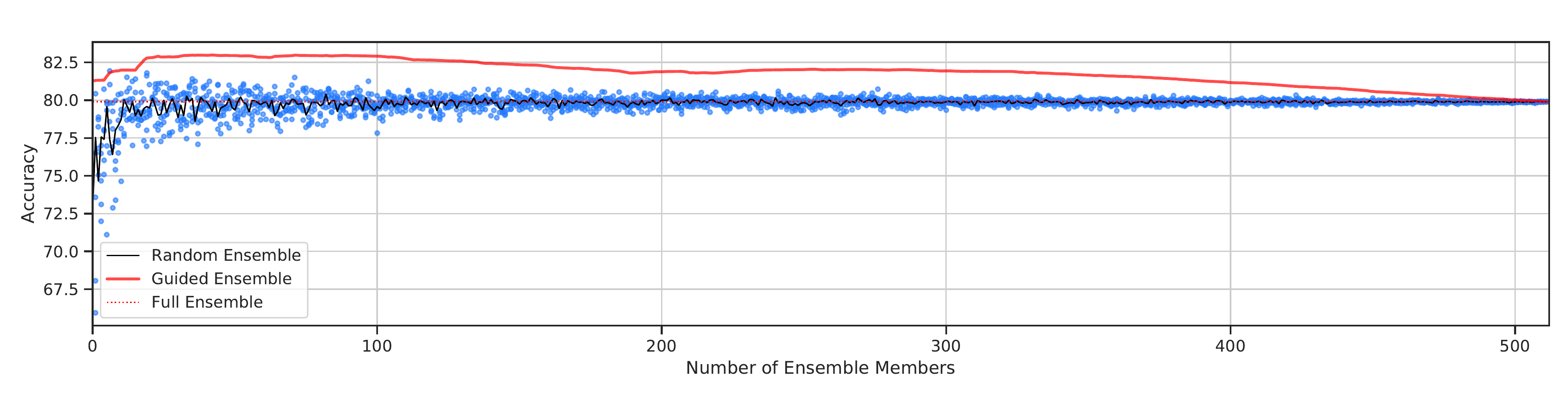}} 
    \caption{Three levels of ensemble and the correlation between testing and validation sets with different configurations in the configuration pool.}\label{fig:ens-cub-1}
\end{figure}

% ==================== tiered-5-shot
\begin{figure}[!htp]
    \centering
    \subfloat[Transformation-level Ensemble]{\includegraphics[width= 2.8in]{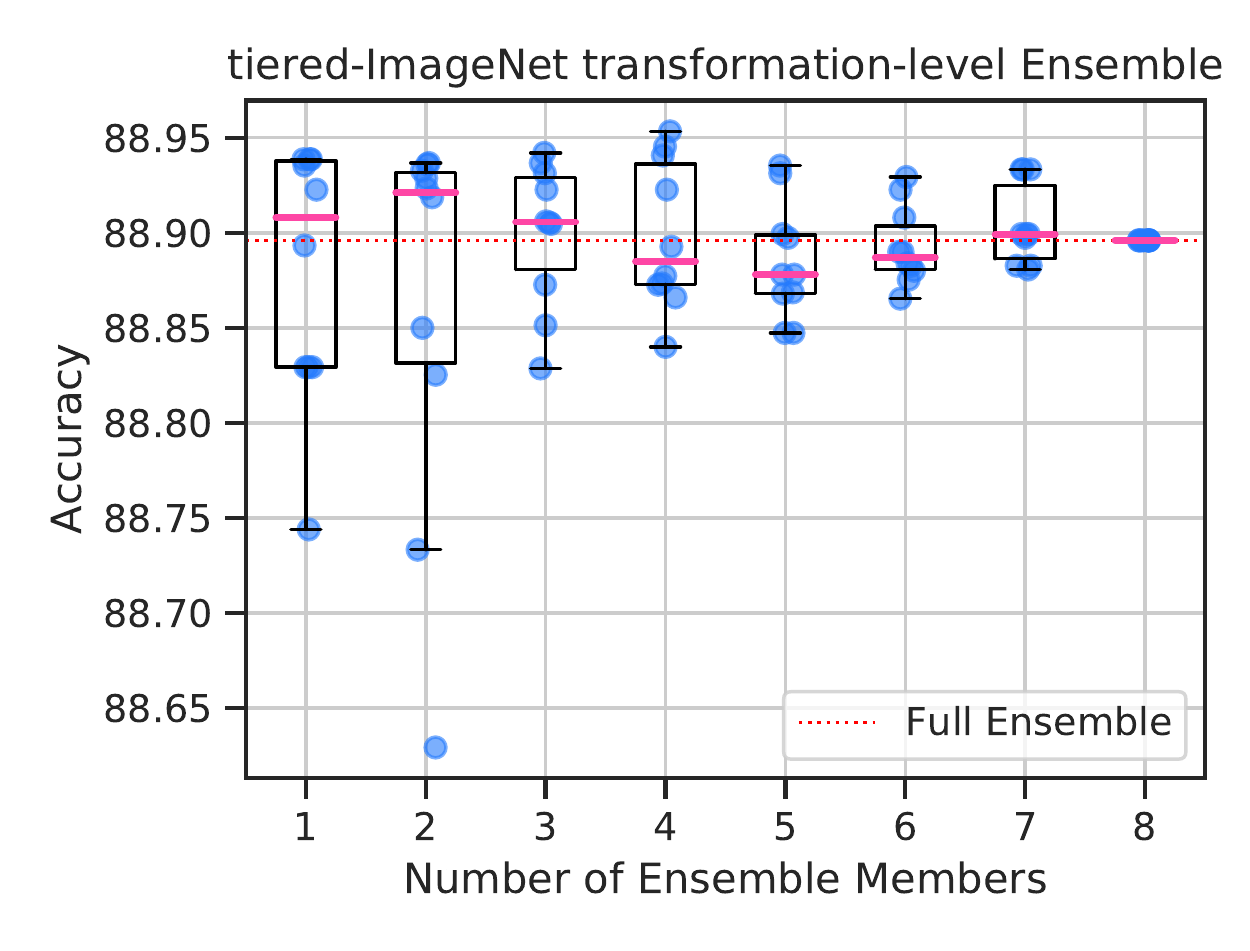}} 
    \subfloat[Testing/Validation Sets Correlation]{\includegraphics[width= 2.7in]{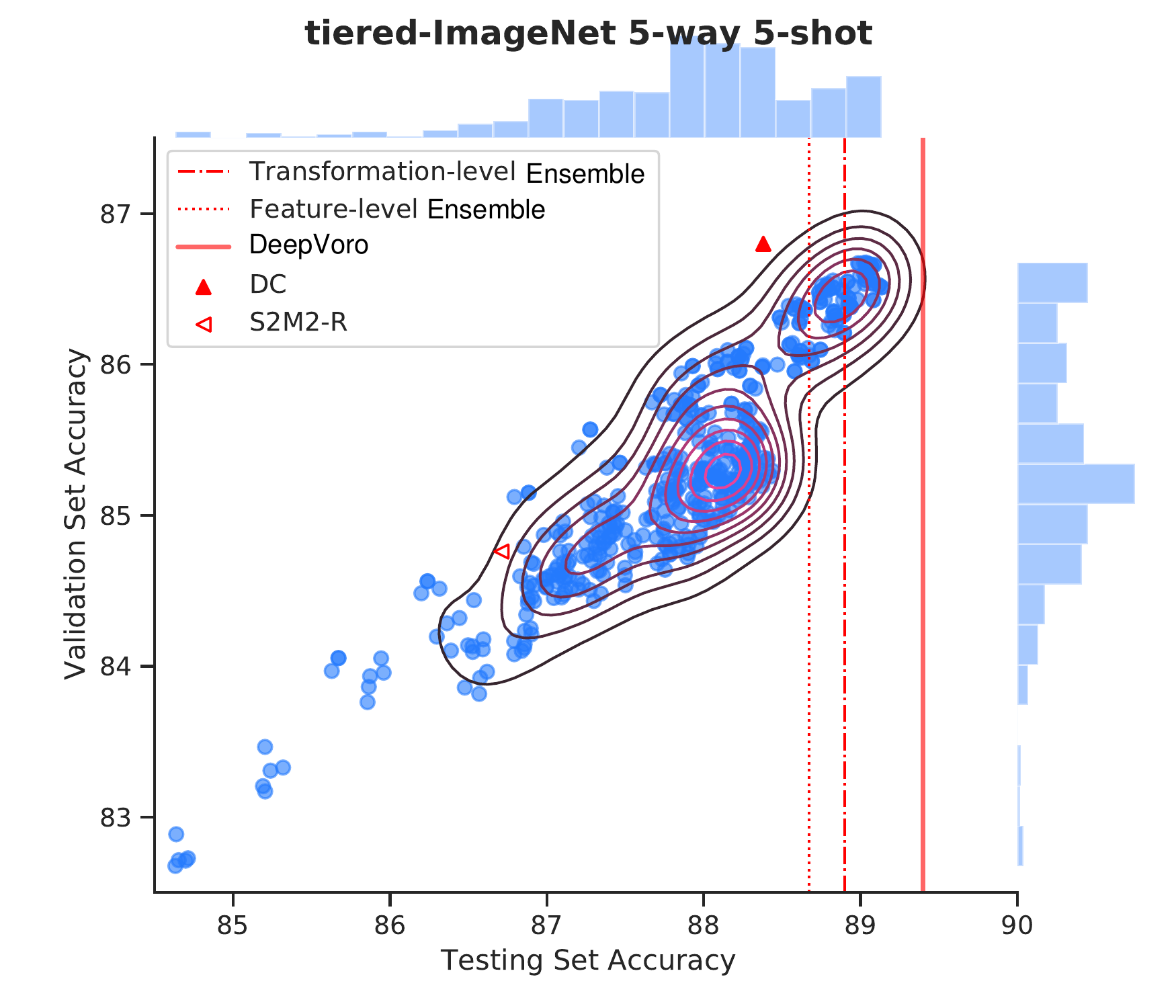}} \\ [-0.0ex]
    \subfloat[Feature-level Ensemble on 5-way 5-shot tiered-ImageNet Dataset]{\includegraphics[width= \linewidth]{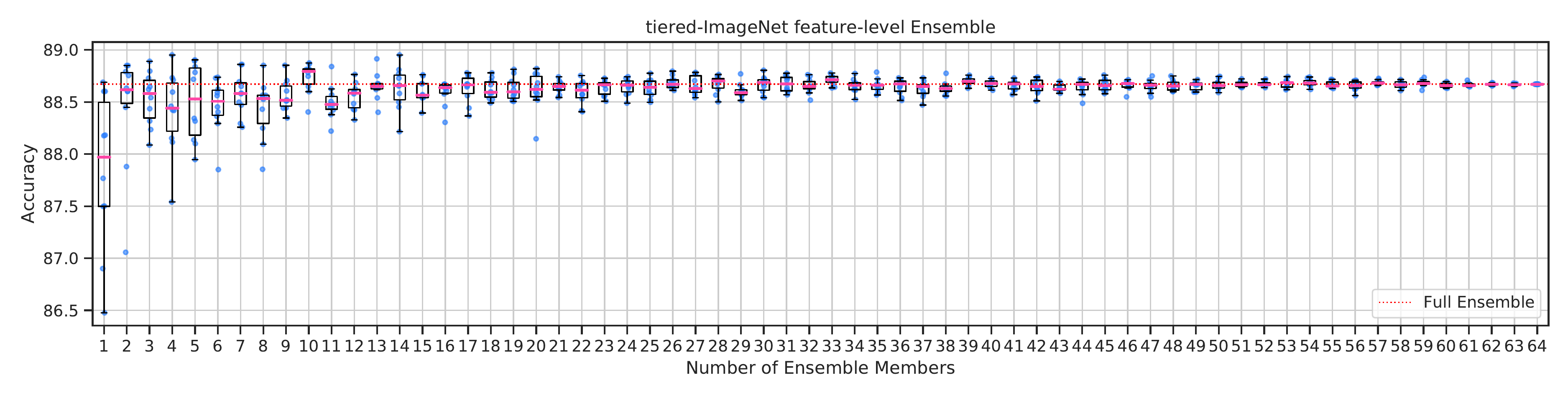}} \\ [-0.0ex]
    \subfloat[DeepVoro on 5-way 5-shot tiered-ImageNet Dataset]{\includegraphics[width= \linewidth]{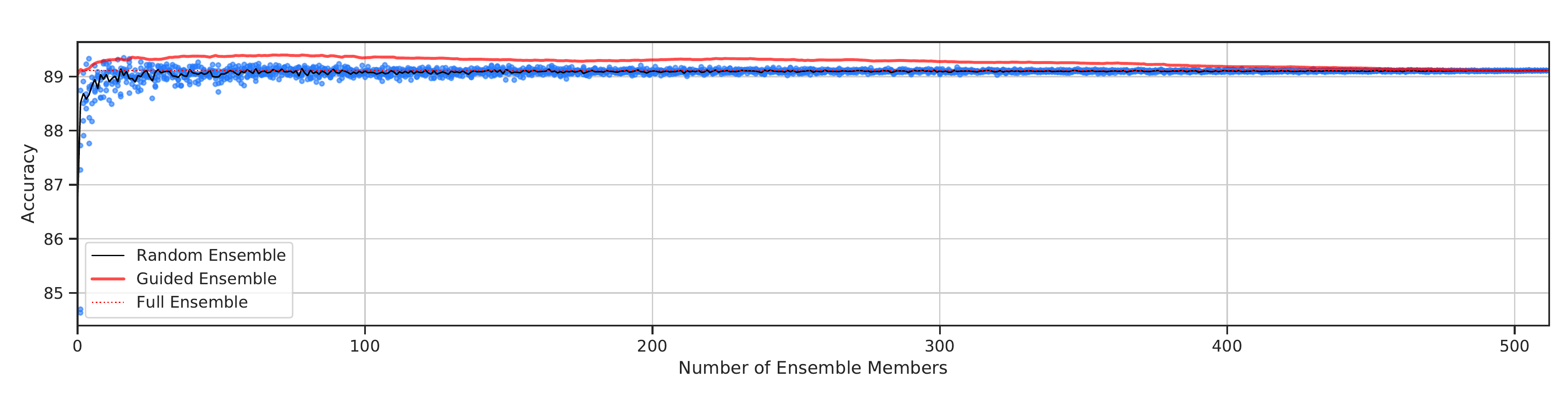}} 
    \caption{Three levels of ensemble and the correlation between testing and validation sets with different configurations in the configuration pool.}\label{fig:ens-tie-5}
\end{figure}
% ==================== tiered-1-shot
\begin{figure}[!htp]
    \centering
    \subfloat[Transformation-level Ensemble]{\includegraphics[width= 2.8in]{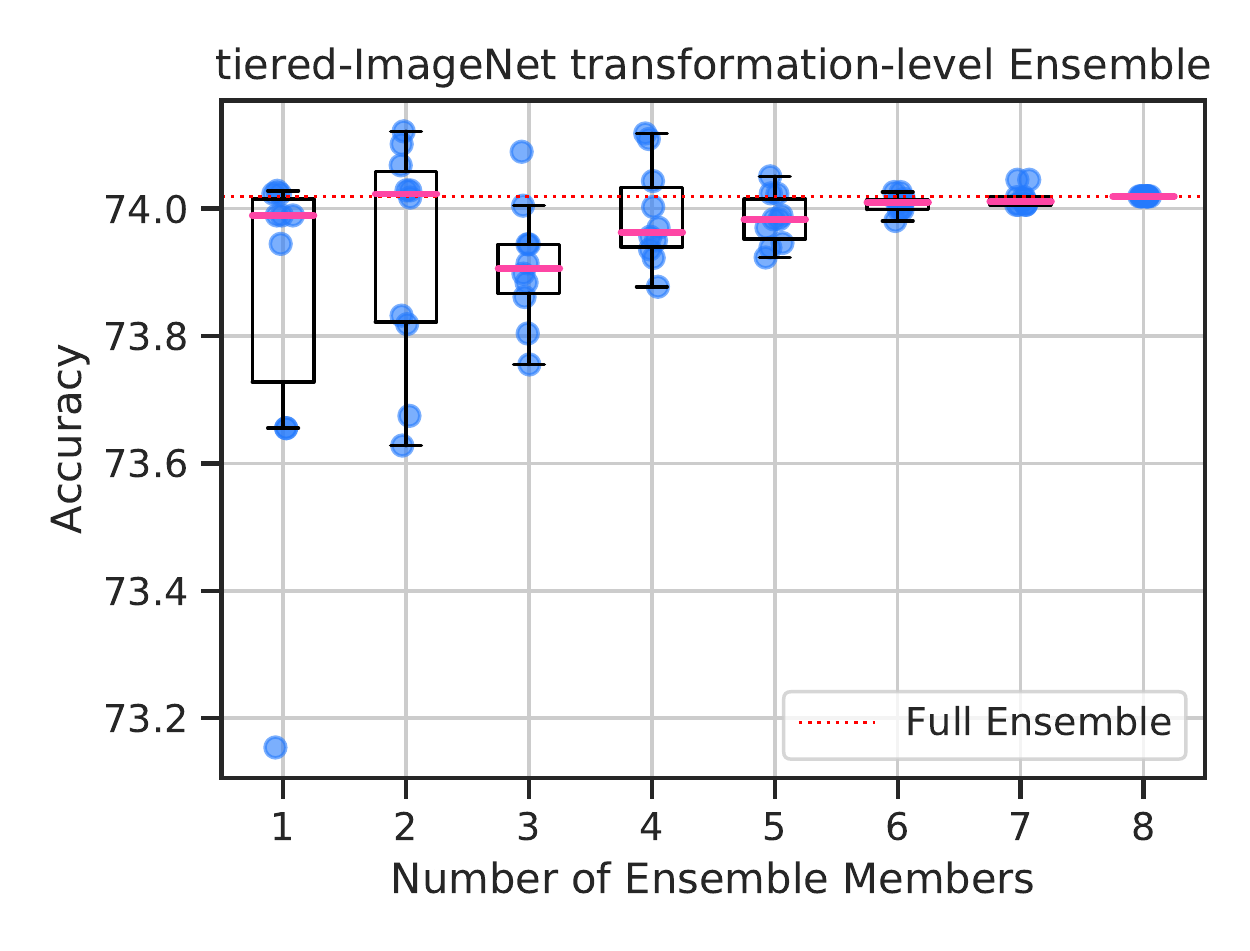}} 
    \subfloat[Testing/Validation Sets Correlation]{\includegraphics[width= 2.7in]{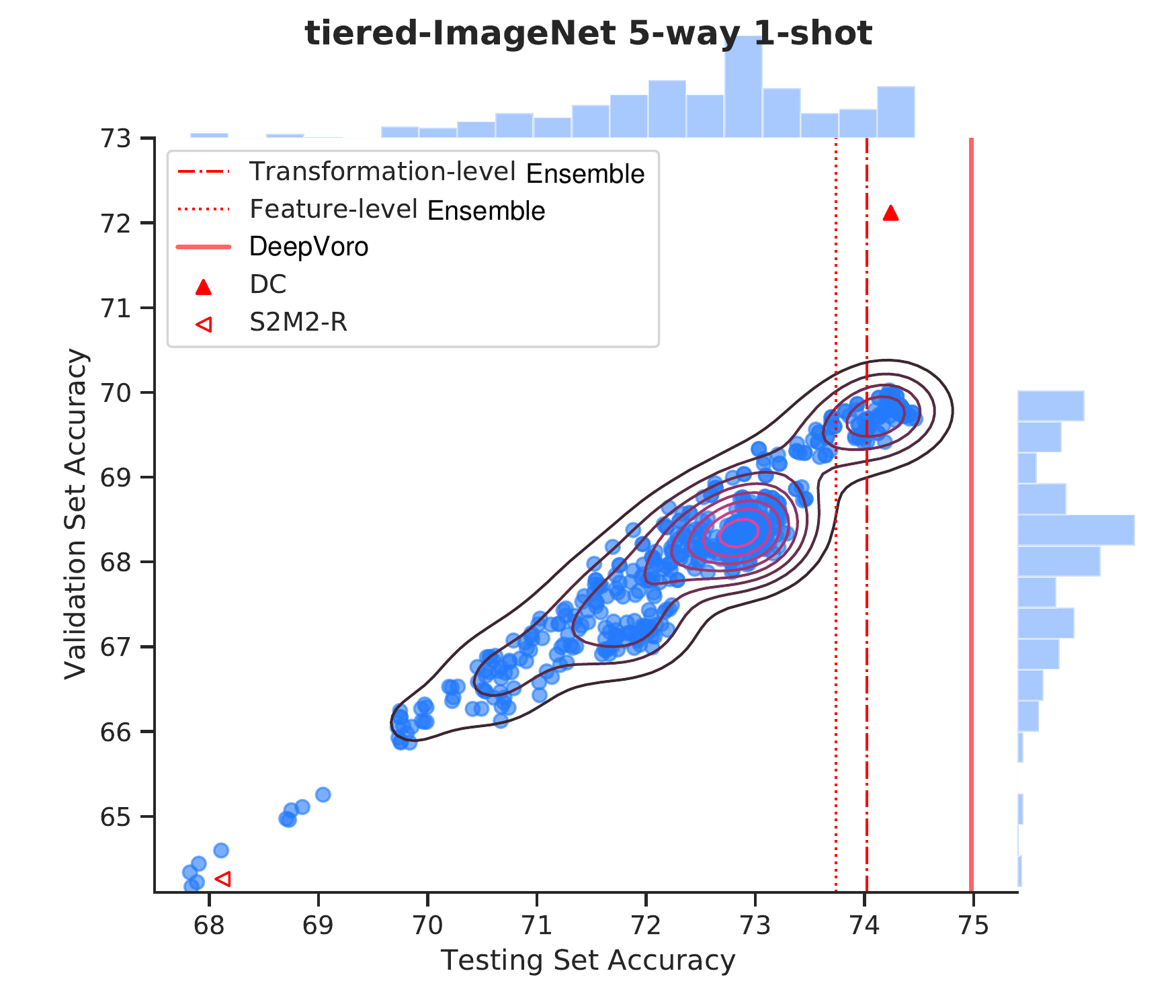}} \\ [-0.0ex]
    \subfloat[Feature-level Ensemble on 5-way 1-shot tiered-ImageNet Dataset]{\includegraphics[width= \linewidth]{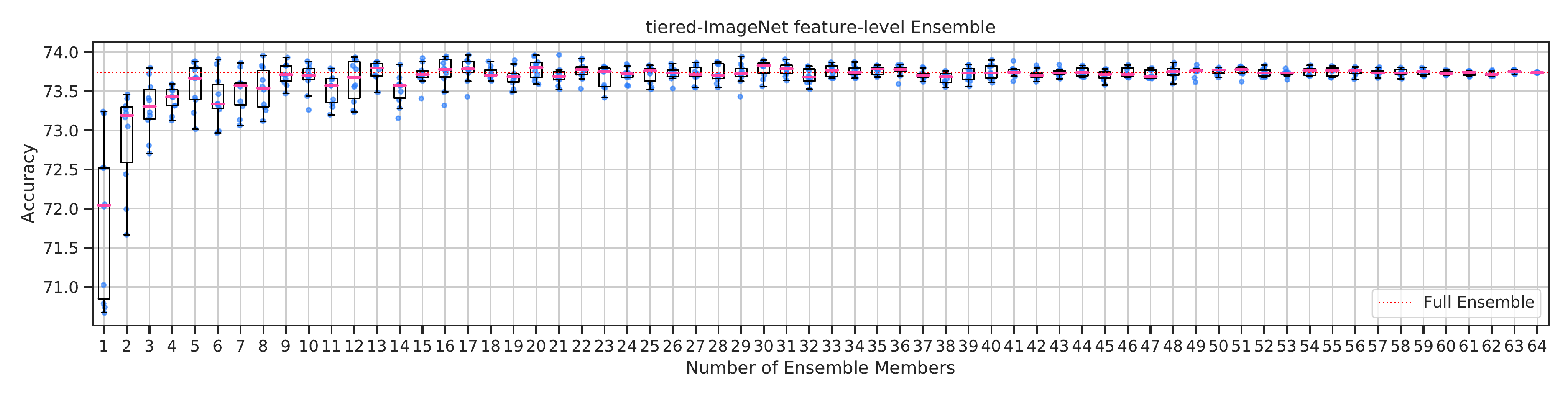}} \\ [-0.0ex]
    \subfloat[DeepVoro on 5-way 1-shot tiered-ImageNet Dataset]{\includegraphics[width= \linewidth]{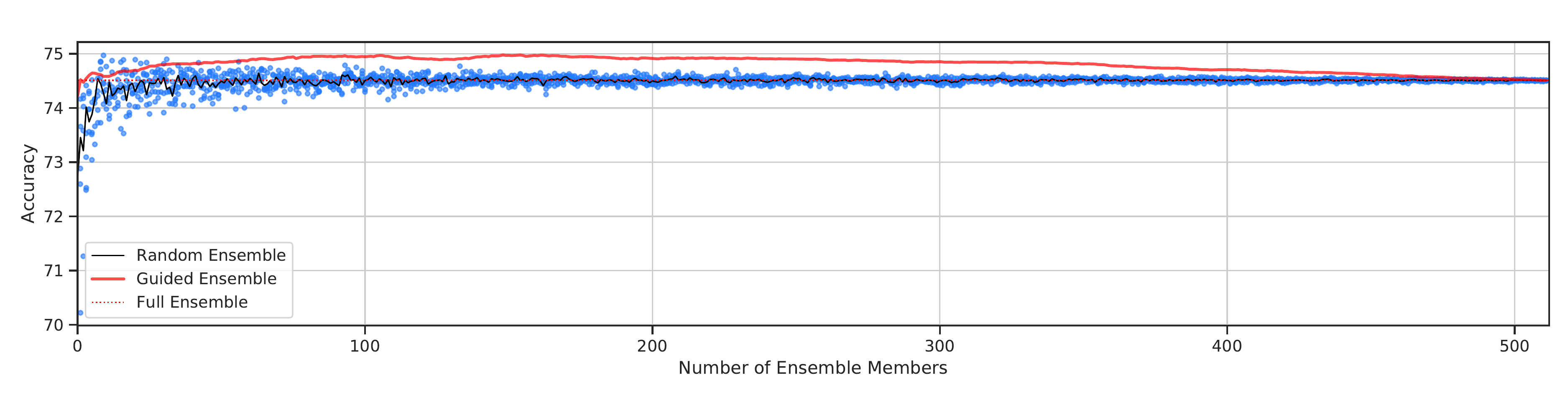}} 
    \caption{Three levels of ensemble and the correlation between testing and validation sets with different configurations in the configuration pool.}\label{fig:ens-tie-1}
\end{figure}

\clearpage
% ==================== Surrogate ====================
\section{DeepVoro++: Further Improvement of FSL via Surrogate Representation} \label{supp:surrogate}
\subsection{Experimental Setup and Implementation Details}

In this section, we introduce another layer of heterogeneity, that is, geometry-level, that exists in our surrogate representation. In Definition \ref{def:surrogate}, increasing $R$ will enlarge the degree of locality when searching for the top-$R$ surrogate classes. In equation (\ref{eq:final}), if we set $\gamma = 1$ then increasing $\beta$ will make the model rely more on the feature representation and less on the surrogate representation. In order to weigh up $R$ and $\beta$, we perform a grid search for different combinations of $R$ and $\beta$ on the validation set, as shown in Figure \ref{fig:heat_mini}, \ref{fig:heat_cub}, and \ref{fig:heat_tie}. For each $R$, we select the $\beta$ that gives rise to the best result on the validation set, and use this $(R, \beta)$ on the testing set, resulting in 10 such pairs in total. So there are 10 models in the geometry-level heterogeneity, standing for different degrees of locality. In conjunction with feature-level (64 kinds of augmentations) and transformation-level (here only the top-2 best transformations are used) heterogeneities, now there are 1280 different kinds of configurations in our configuration pool that will be used by the CCVD model. In conclusion, there are overall $512 + 1280 = 1792$ configurations for a few-shot episode. Generating ${\sim}1800$ ensemble candidates is nearly intractable for parametric methods like logistic regression or cosine classifier, which may consume e.g. months for thousands of episodes. However, the VD model is nonparametric and highly efficient, making it empirically possible to collect all the combinations and integrate them all together via CCVD. The complete algorithm for the computation of surrogate representation is shown in Algorithm \ref{alg:surrogate}.

\begin{figure}[t]
  \centering
  \includegraphics[width=0.75\linewidth]{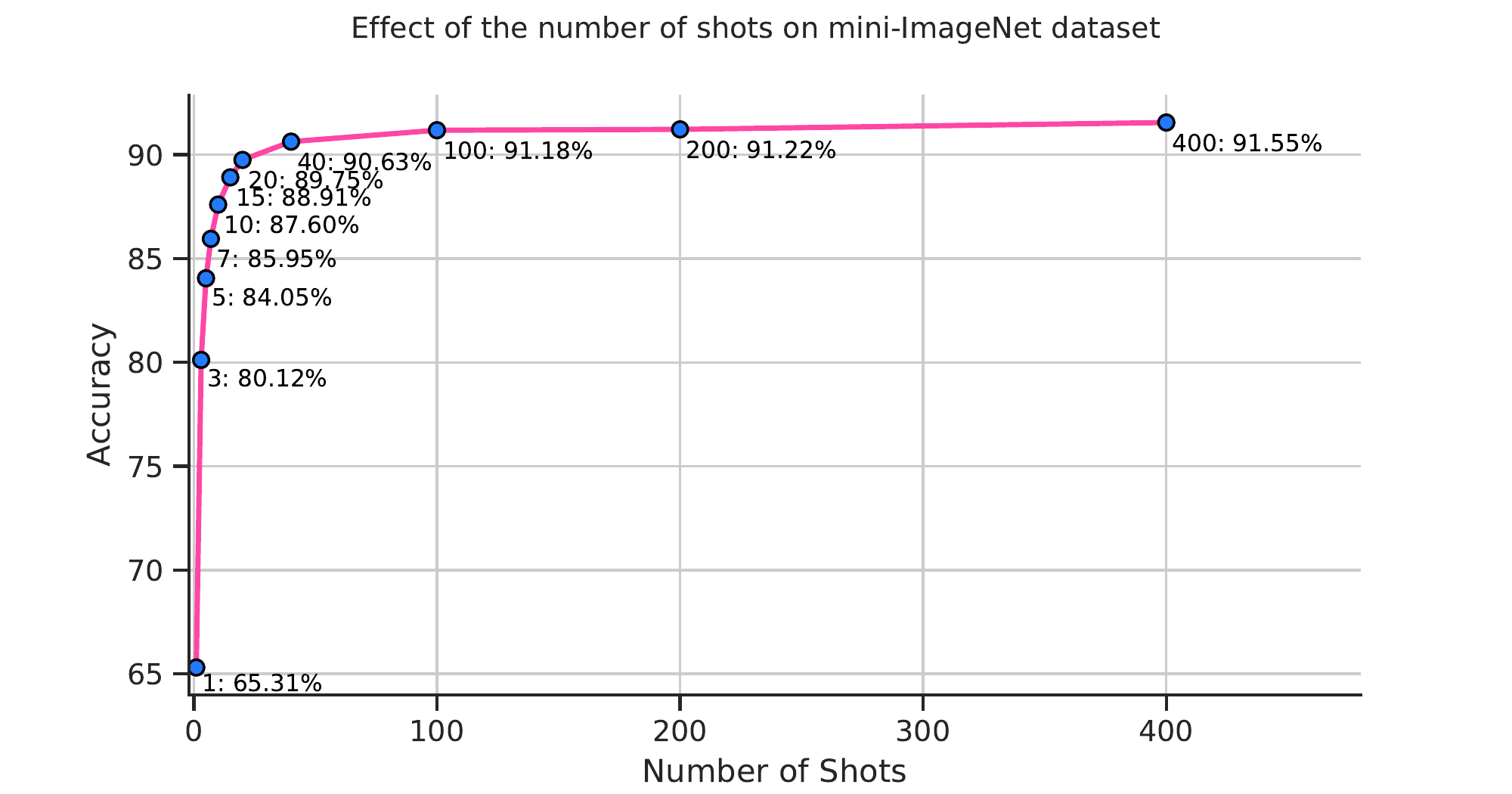}
  \caption{The accuracy of VD with increasing number of shots on mini-ImageNet dataset.}\label{fig:number}
\end{figure}

\subsection{Results}
The heatmaps for different $(R, \beta)$ pairs on testing/validation sets are shown in Figure \ref{fig:heat_mini} for mini-ImageNet, in Figure \ref{fig:heat_cub} for CUB, and in Figure \ref{fig:heat_tie} for tiered-ImageNet, respectively. Basically, the testing and validation set follow the same pattern. When $R$ is small, i.e. only a small number of base classes are used for surrogate, then a higher weight should be placed on feature representation. With a fixed $\beta$, increasing $R$ beyond a certain threshold will potentially cause a drop in accuracy, probably because the meaningful similarities is likely to be overwhelmed by the signals from the large volume of irrelevant base classes.

%Please add the following packages if necessary:
%\usepackage{booktabs, multirow} % for borders and merged ranges
%\usepackage{soul}% for underlines
%\usepackage[table]{xcolor} % for cell colors
%\usepackage{changepage,threeparttable} % for wide tables
%If the table is too wide, replace \begin{table}[!htp]...\end{table} with
%\begin{adjustwidth}{-2.5 cm}{-2.5 cm}\centering\begin{threeparttable}[!htb]...\end{threeparttable}\end{adjustwidth}
\begin{table}[!hp]\centering
    \caption{Ablation study of DeepVoro++'s performance with different levels of ensemble. The number of ensemble members are given in parentheses.}\label{tab:with_dist}
    \resizebox{\textwidth}{!}{%
    \small
    \begin{tabular}{lccccccr}\toprule
    \textbf{Methods} &\textbf{Feature-level} &\textbf{Transformation-level} &\textbf{Geometry-level} &\textbf{mini-ImageNet} &\textbf{CUB} &\textbf{tiered-ImageNet} \\\midrule
    \cellcolor[HTML]{f2f2f2}No Ensemble &\cellcolor[HTML]{f2f2f2}{\xmark} &\cellcolor[HTML]{f2f2f2}{\xmark} &\cellcolor[HTML]{f2f2f2}{\xmark} &\cellcolor[HTML]{f2f2f2}65.37 $\pm$ 0.44 &\cellcolor[HTML]{f2f2f2}78.57 $\pm$ 0.44  &\cellcolor[HTML]{f2f2f2}72.83 $\pm$ 0.49  \\
    Vanilla Ensemble (20) &{\xmark} &{\cmark} &{\cmark} &68.38 $\pm$ 0.46 &80.70 $\pm$ 0.45 &74.48 $\pm$ 0.50 \\
    Vanilla Ensemble (64) &{\cmark} &{\xmark} &{\xmark} &70.95 $\pm$ 0.46 &81.04 $\pm$ 0.44 &74.87 $\pm$ 0.49 \\
    Vanilla Ensemble (1280) &{\cmark} &{\cmark} &{\cmark} &71.24 $\pm$ 0.46 &81.18 $\pm$ 0.44 &74.75 $\pm$ 0.49 \\
    Random Ensemble (1280) &{\cmark} &{\cmark} &{\cmark} &\textbf{71.34 $\pm$ 0.46} &81.98 $\pm$ 0.43 &75.07 $\pm$ 0.48 \\
    Guided Ensemble (1280) &{\cmark} &{\cmark} &{\cmark} &71.30 $\pm$ 0.46 &\textbf{82.95 $\pm$ 0.43} &\textbf{75.38 $\pm$ 0.48} \\
    \bottomrule
    \end{tabular}}
\end{table}
As shown in Table \ref{tab:main} and \ref{tab:with_dist}, DeepVoro++ further improves upon DeepVoro for 5-way 1-shot FSL by $1.82\%$ and $0.4\%$ on mini-ImageNet and tiered-ImageNet, respectively, and is comparable with DeepVoro on CUB dataset ($82.95\%$ vs. $82.99\%$). Notably, on 5-shot FSL, DeepVoro++ usually causes a drop of accuracy from DeepVoro. To inspect the underlying reason for this behavior, we apply VD on 5-way $K$-shot FSL with $K$ increasing from 1 to 400 and report the average accuracy in Figure \ref{fig:number}. It can be observed that, in extreme low-shot learning, i.e. $K \in [1,5]$, simply adding one shot makes more prominent contribution to the accuracy, suggesting that the centers obtained from 5-shot samples are much better that those from only 1 sample, so there is no necessity to resort to surrogate representation for multi-shot FSL and we only adopt DeepVoro for 5-shot episodes in the remaining part of this paper.

Ablation study of DeepVoro++ with different levels of ensemble is shown in Table \ref{tab:with_dist}, Figure \ref{fig:ens-mini-dist-1}, \ref{fig:ens-cub-dist-1}, and \ref{fig:ens-tie-dist-1}. All three layers of heterogeneities collaboratively contribute towards the final result. The fully-fledged DeepVoro++ establishes new state-of-the-art performance on all three datasets for 1-shot FSL.

% ==================== mini-1-shot
\begin{figure}[!htp]
    \centering
    \subfloat[Transformation-level Ensemble]{\includegraphics[width= 2.8in]{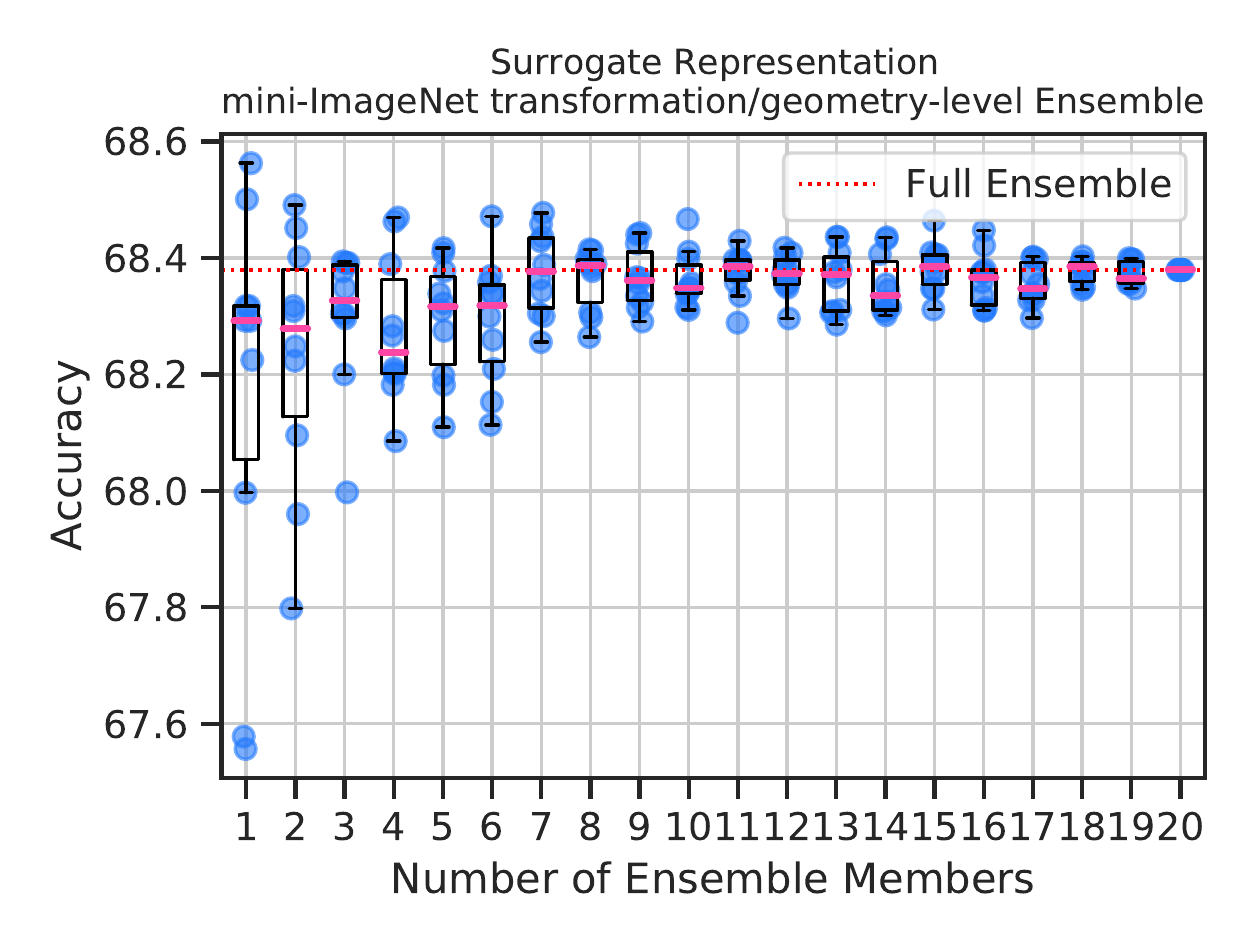}} 
    \subfloat[Testing/Validation Sets Correlation]{\includegraphics[width= 2.7in]{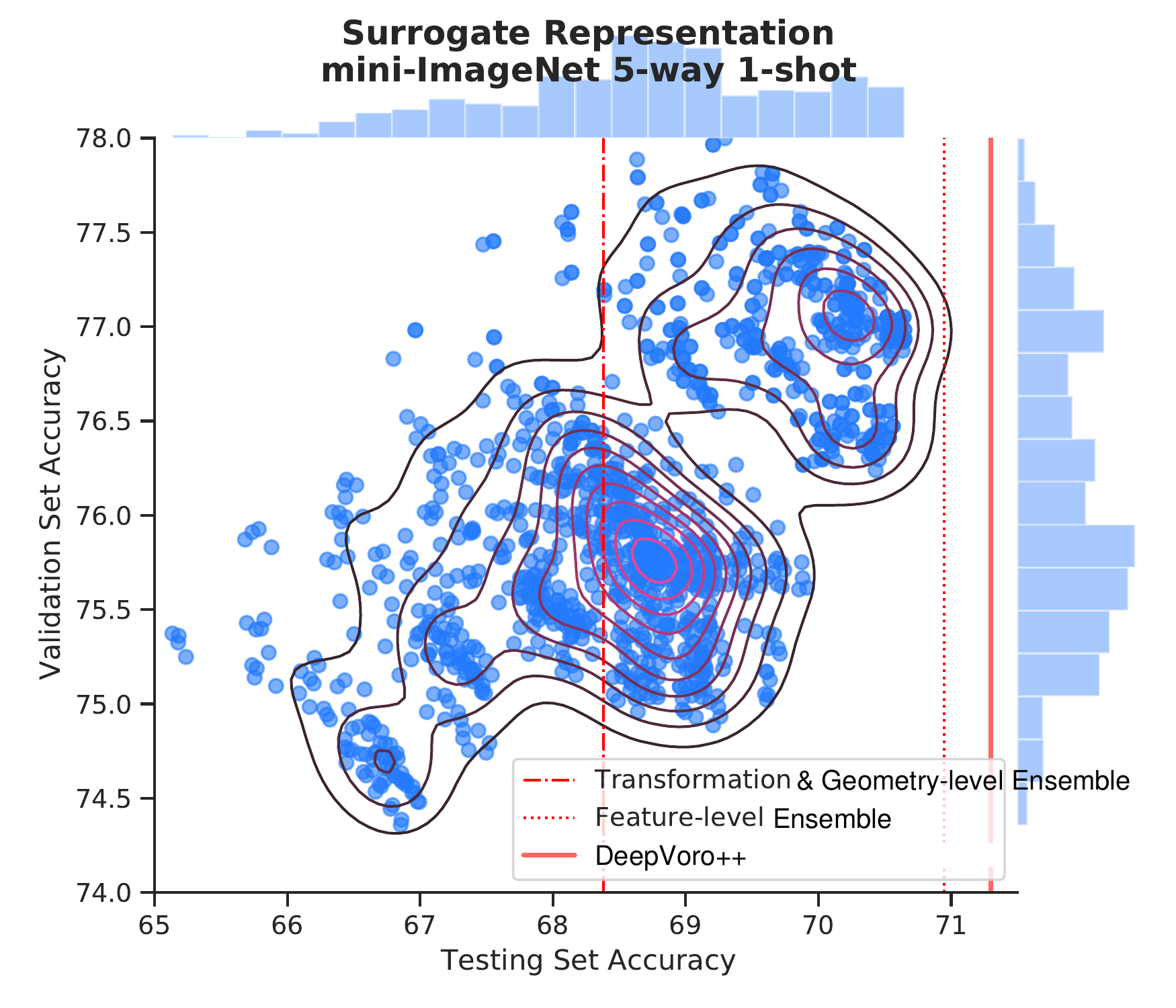}} \\ [-0.0ex]
    \subfloat[Feature-level Ensemble on 5-way 1-shot mini-ImageNet Dataset]{\includegraphics[width= \linewidth]{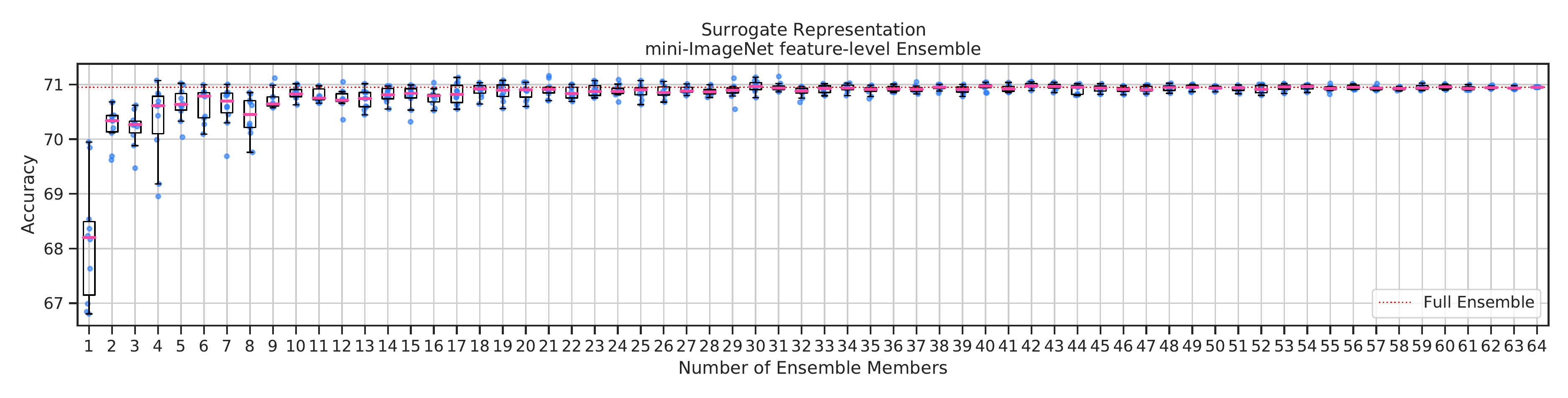}} \\ [-0.0ex]
    \subfloat[DeepVoro++ on 5-way 1-shot mini-ImageNet Dataset]{\includegraphics[width= \linewidth]{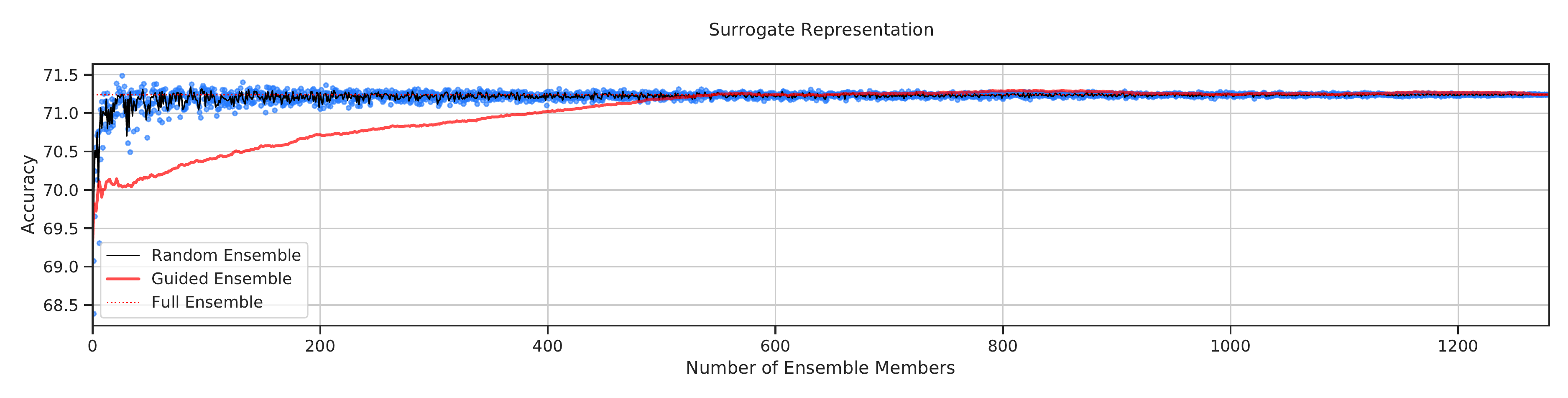}} 
    \caption{Three levels of ensemble and the correlation between testing and validation sets with different configurations in the configuration pool.}\label{fig:ens-mini-dist-1}
\end{figure}

% ==================== cub-1-shot
\begin{figure}[!htp]
    \centering
    \subfloat[Transformation-level Ensemble]{\includegraphics[width= 2.8in]{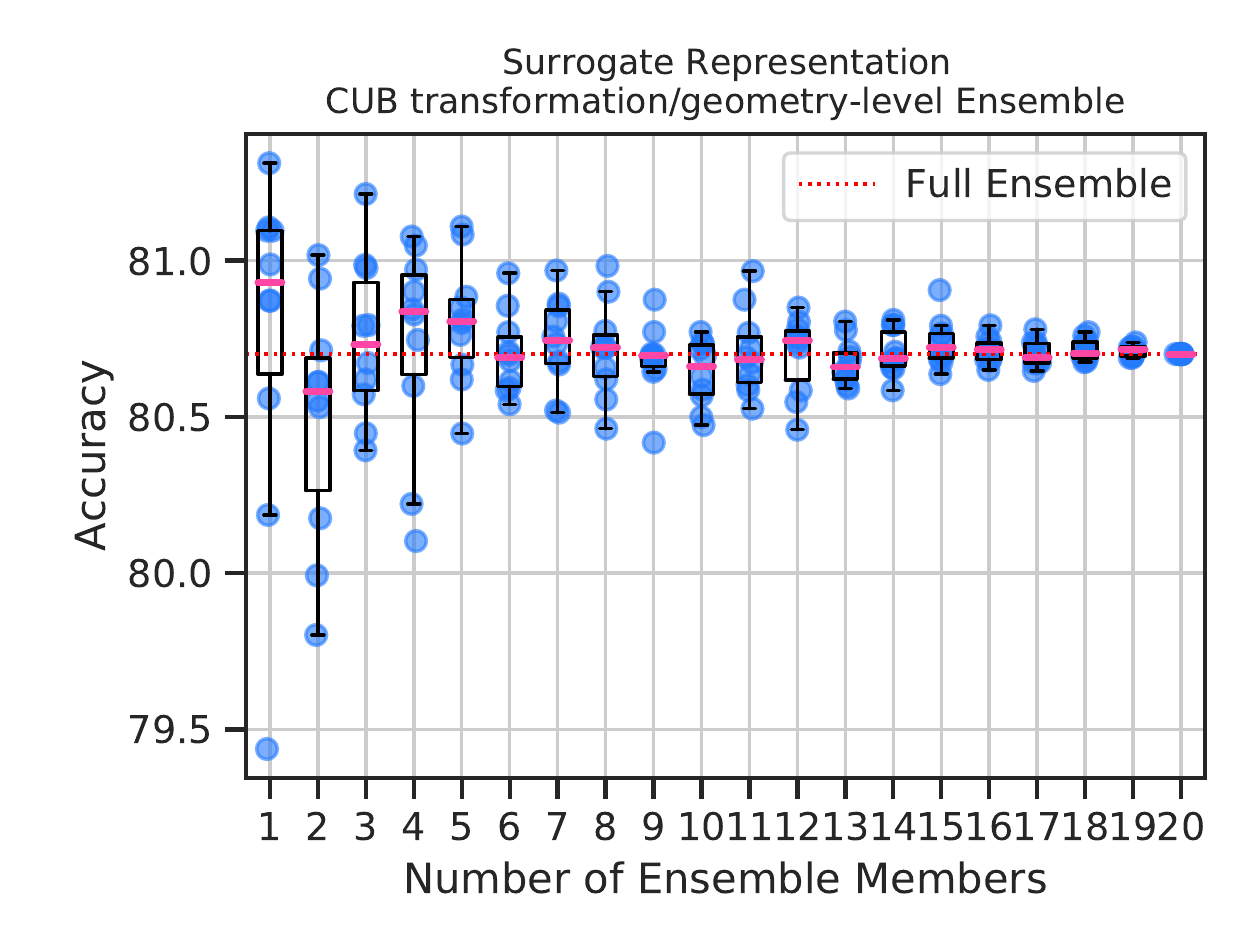}} 
    \subfloat[Testing/Validation Sets Correlation]{\includegraphics[width= 2.7in]{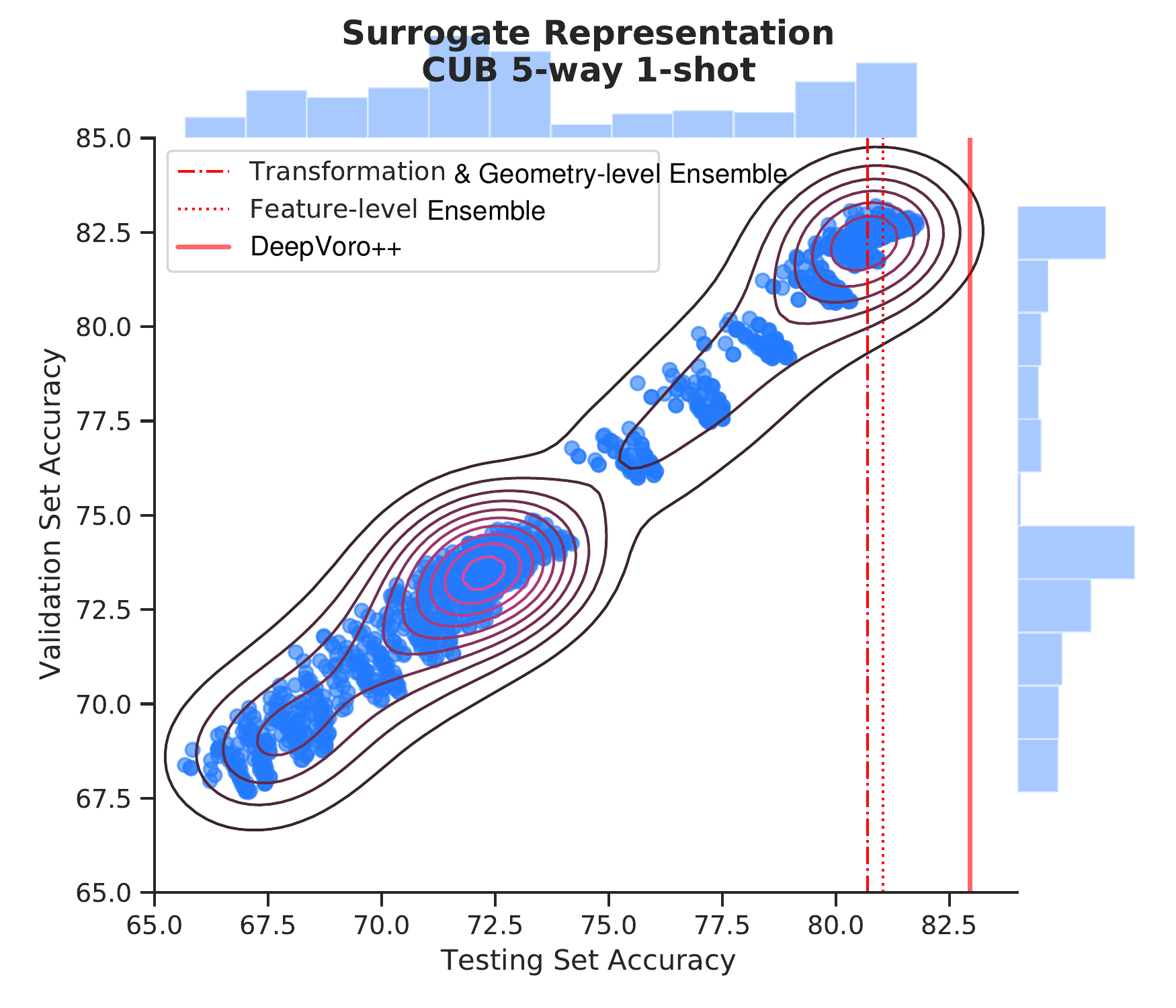}} \\ [-0.0ex]
    \subfloat[Feature-level Ensemble on 5-way 1-shot CUB Dataset]{\includegraphics[width= \linewidth]{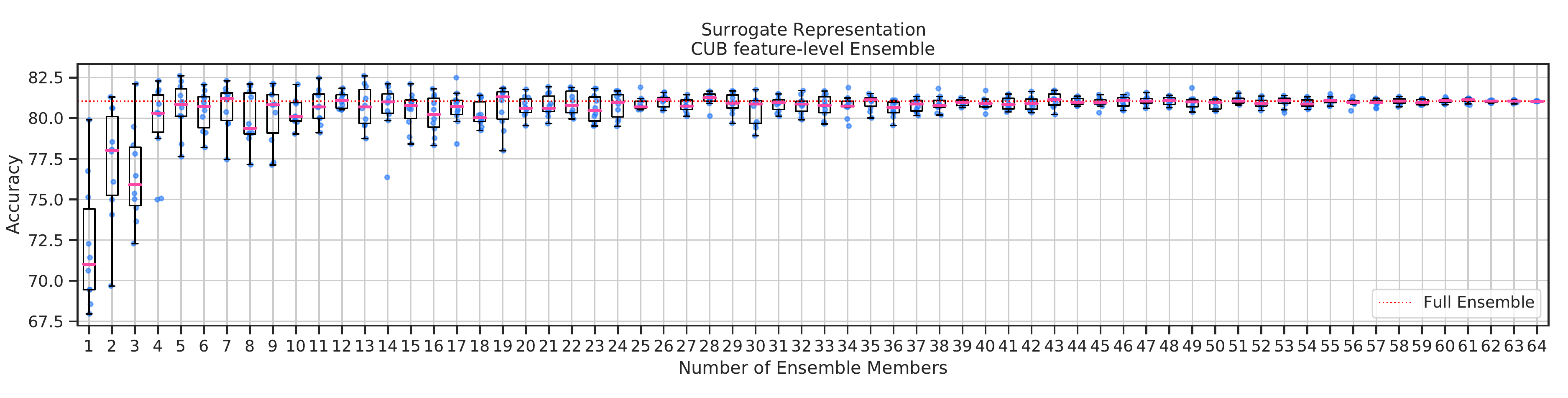}} \\ [-0.0ex]
    \subfloat[DeepVoro++ on 5-way 1-shot CUB Dataset]{\includegraphics[width= \linewidth]{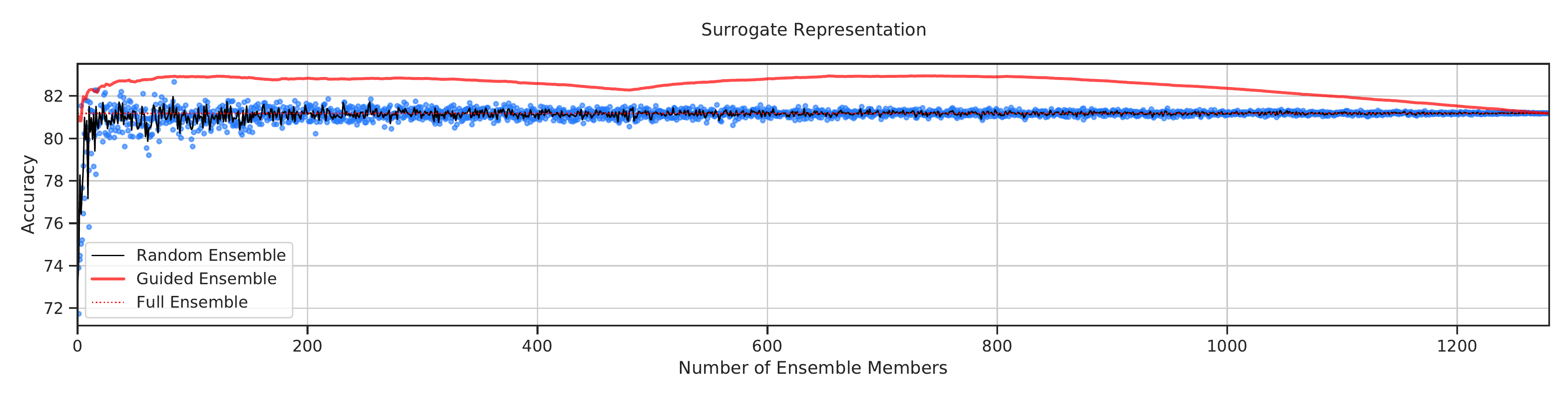}} 
    \caption{Three levels of ensemble and the correlation between testing and validation sets with different configurations in the configuration pool.}\label{fig:ens-cub-dist-1}
\end{figure}

% ==================== tiered-1-shot
\begin{figure}[!htp]
    \centering
    \subfloat[Transformation-level Ensemble]{\includegraphics[width= 2.8in]{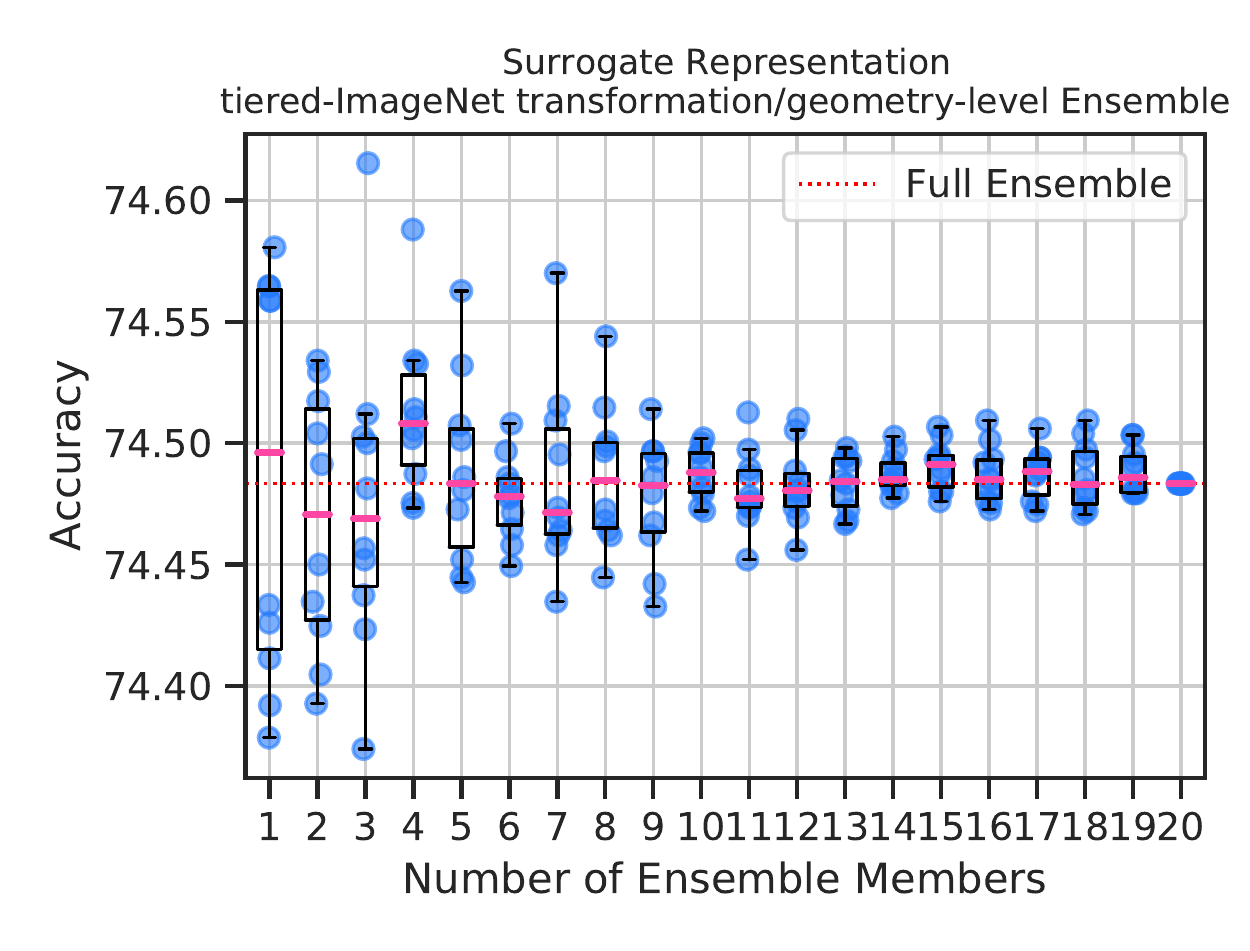}} 
    \subfloat[Testing/Validation Sets Correlation]{\includegraphics[width= 2.7in]{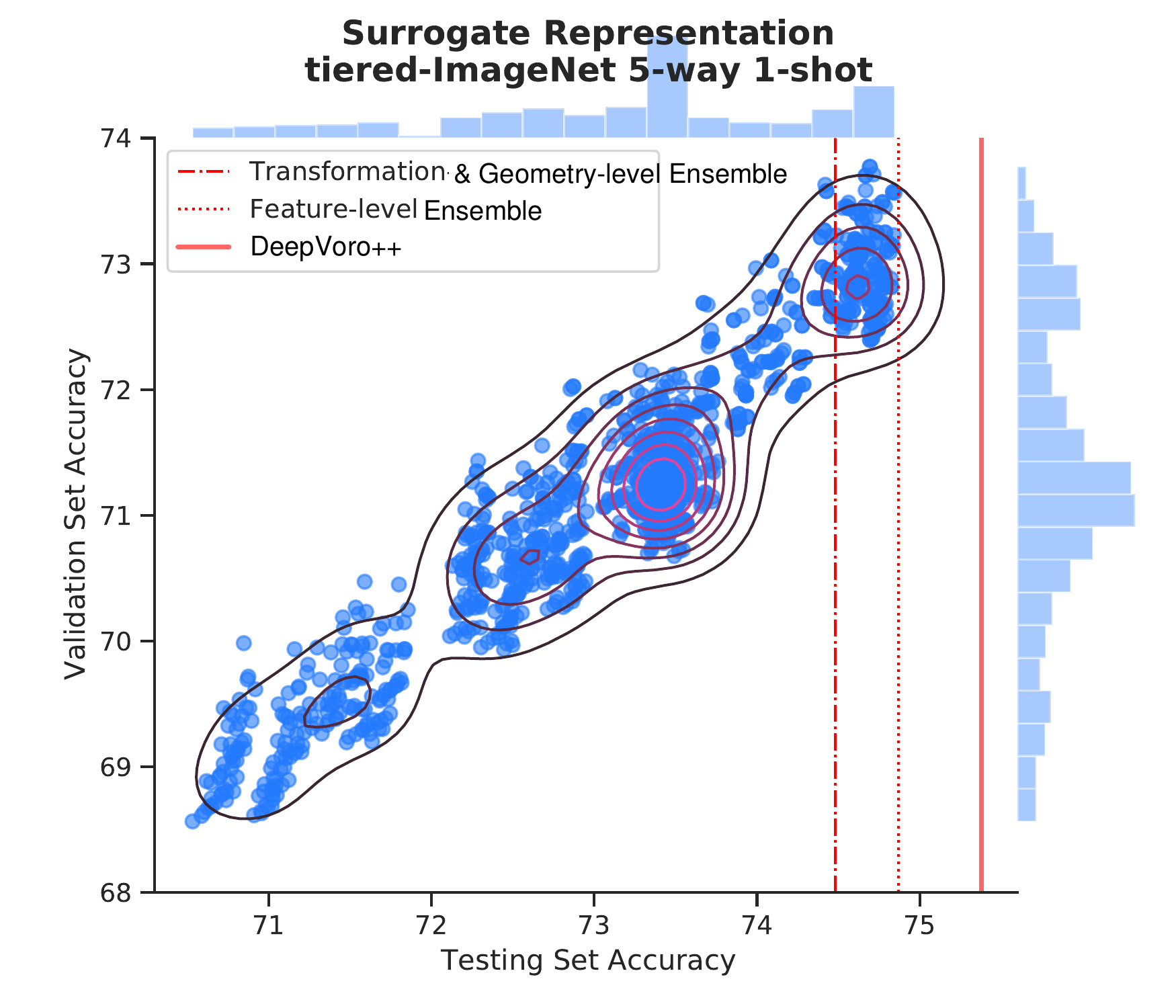}} \\ [-0.0ex]
    \subfloat[Feature-level Ensemble on 5-way 1-shot tiered-ImageNet Dataset]{\includegraphics[width= \linewidth]{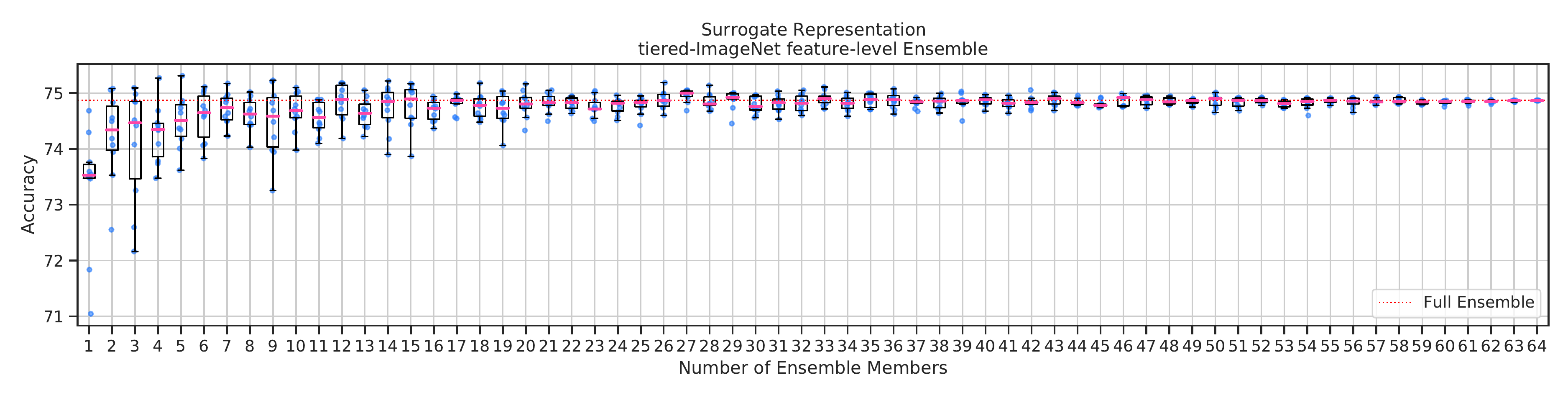}} \\ [-0.0ex]
    \subfloat[DeepVoro++ on 5-way 1-shot tiered-ImageNet Dataset]{\includegraphics[width= \linewidth]{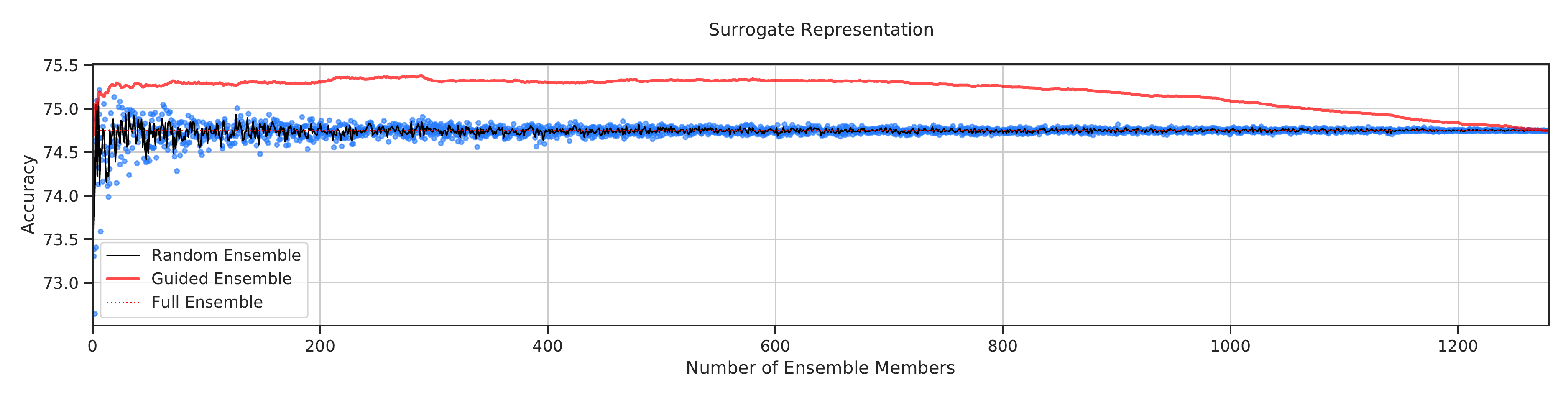}} 
    \caption{Three levels of ensemble and the correlation between testing and validation sets with different configurations in the configuration pool.}\label{fig:ens-tie-dist-1}
\end{figure}

\clearpage
% ==================== Backbones ====================
\section{Experiments with Different Backbones} \label{supp:backbone}
\subsection{Implementation Details}

In order to test the robustness of DeepVoro/DeeoVoro++ with various deep learning architectures, we downloaded the trained models\footnote{downloaded from \url{https://github.com/mileyan/simple_shot}} used by \citet{wang2019simpleshot}. We evaluated DC, S2M2\_R, DeepVoro, and DeepVoro++ using the same random seed. The results are obtained by running 500 episodes and the average accuracy as well as $95\%$ confidence intervals are reported.

\subsection{experimental results}
%Please add the following packages if necessary:
%\usepackage{booktabs, multirow} % for borders and merged ranges
%\usepackage{soul}% for underlines
%\usepackage[table]{xcolor} % for cell colors
%\usepackage{changepage,threeparttable} % for wide tables
%If the table is too wide, replace \begin{table}[!htp]...\end{table} with
%\begin{adjustwidth}{-2.5 cm}{-2.5 cm}\centering\begin{threeparttable}[!htb]...\end{threeparttable}\end{adjustwidth}
\begin{table}[!hp]\centering
    \caption{Comparison of FSL algorithms with different network architectures. WRN-28-10$\S$ was trained with rotation loss and MixUp loss~\citep{charting2020} instead of using ordinary softmax loss (WRN-28-10$\dag$).}\label{tab:backbone}
    \resizebox{.7\textwidth}{!}{%
    \small
    \begin{tabular}{lcc|ccr}\toprule
    \textbf{Methods} &\multicolumn{2}{c}{\textbf{WRN-28-10$\S$}} &\multicolumn{2}{c}{\textbf{WRN-28-10$\dag$}} \\\cmidrule{1-5}
    &1-shot &5-shot &1-shot &5-shot \\\midrule
    DC &67.79 $\pm$ 0.45 &83.69 $\pm$ 0.31 &62.09 $\pm$ 0.95 &78.47 $\pm$ 0.67 \\
    S2M2\_R &64.65 $\pm$ 0.45 &83.20 $\pm$ 0.30 &61.11 $\pm$ 0.92 &79.83 $\pm$ 0.64 \\
    \cellcolor[HTML]{f2f2f2}\textbf{DeepVoro} &\cellcolor[HTML]{f2f2f2}69.48 $\pm$ 0.45 &\cellcolor[HTML]{f2f2f2}\textbf{86.75 $\pm$ 0.28} &\cellcolor[HTML]{f2f2f2}62.26 $\pm$ 0.94 &\cellcolor[HTML]{f2f2f2}\textbf{82.02 $\pm$ 0.63} \\
    \cellcolor[HTML]{f2f2f2}\textbf{DeepVoro++} &\cellcolor[HTML]{f2f2f2}\textbf{71.30 $\pm$ 0.46} &\cellcolor[HTML]{f2f2f2}$-$ &\cellcolor[HTML]{f2f2f2}\textbf{65.01 $\pm$ 0.98} &\cellcolor[HTML]{f2f2f2}$-$ \\
    \midrule
    &\multicolumn{2}{c}{\textbf{DenseNet-121}} &\multicolumn{2}{c}{\textbf{ResNet-34}} \\ \midrule
    &1-shot &5-shot &1-shot &5-shot \\ \midrule
    DC &62.68 $\pm$ 0.96 &79.96 $\pm$ 0.60 &59.10 $\pm$ 0.90 &74.95 $\pm$ 0.67 \\
    S2M2\_R &60.33 $\pm$ 0.92 &80.33 $\pm$ 0.62 &58.92 $\pm$ 0.92 &77.99 $\pm$ 0.64 \\
    \cellcolor[HTML]{f2f2f2}\textbf{DeepVoro} &\cellcolor[HTML]{f2f2f2}60.66 $\pm$ 0.91 &\cellcolor[HTML]{f2f2f2}\textbf{82.25 $\pm$ 0.59} &\cellcolor[HTML]{f2f2f2}61.61 $\pm$ 0.92 &\cellcolor[HTML]{f2f2f2}\textbf{81.81 $\pm$ 0.60} \\
    \cellcolor[HTML]{f2f2f2}\textbf{DeepVoro++} &\cellcolor[HTML]{f2f2f2}\textbf{65.18 $\pm$ 0.95} &\cellcolor[HTML]{f2f2f2}$-$ &\cellcolor[HTML]{f2f2f2}\textbf{64.65 $\pm$ 0.96} &\cellcolor[HTML]{f2f2f2}$-$ \\
    \midrule
    &\multicolumn{2}{c}{\textbf{ResNet-18}} &\multicolumn{2}{c}{\textbf{ResNet-10}} \\ \midrule
    &1-shot &5-shot &1-shot &5-shot \\ \midrule
    DC &60.20 $\pm$ 0.96 &75.59 $\pm$ 0.69 &59.01 $\pm$ 0.92 &74.27 $\pm$ 0.69 \\
    S2M2\_R &59.57 $\pm$ 0.93 &78.69 $\pm$ 0.69 &57.59 $\pm$ 0.92 &77.10 $\pm$ 0.67 \\
    \cellcolor[HTML]{f2f2f2}\textbf{DeepVoro} &\cellcolor[HTML]{f2f2f2}61.50 $\pm$ 0.93 &\cellcolor[HTML]{f2f2f2}\textbf{81.58 $\pm$ 0.64} &\cellcolor[HTML]{f2f2f2}58.34 $\pm$ 0.93 &\cellcolor[HTML]{f2f2f2}\textbf{79.05 $\pm$ 0.63} \\
    \cellcolor[HTML]{f2f2f2}\textbf{DeepVoro++} &\cellcolor[HTML]{f2f2f2}\textbf{64.79 $\pm$ 0.97} &\cellcolor[HTML]{f2f2f2}$-$ &\cellcolor[HTML]{f2f2f2}\textbf{61.75 $\pm$ 0.95} &\cellcolor[HTML]{f2f2f2}$-$ \\
    \midrule
    &\multicolumn{2}{c}{\textbf{MobileNet}} &\multicolumn{2}{c}{\textbf{Conv-4}} \\ \midrule
    &1-shot &5-shot &1-shot &5-shot \\ \midrule
    DC &59.41 $\pm$ 0.91 &76.07 $\pm$ 0.66 &49.32 $\pm$ 0.87 &62.89 $\pm$ 0.71 \\
    S2M2\_R &58.36 $\pm$ 0.93 &76.75 $\pm$ 0.68 &45.19 $\pm$ 0.87 &64.56 $\pm$ 0.74 \\
    \cellcolor[HTML]{f2f2f2}\textbf{DeepVoro} &\cellcolor[HTML]{f2f2f2}60.91 $\pm$ 0.93 &\cellcolor[HTML]{f2f2f2}\textbf{80.14 $\pm$ 0.65} &\cellcolor[HTML]{f2f2f2}48.47 $\pm$ 0.86 &\cellcolor[HTML]{f2f2f2}\textbf{65.86 $\pm$ 0.73} \\
    \cellcolor[HTML]{f2f2f2}\textbf{DeepVoro++} &\cellcolor[HTML]{f2f2f2}\textbf{63.37 $\pm$ 0.95} &\cellcolor[HTML]{f2f2f2}$-$ &\cellcolor[HTML]{f2f2f2}\textbf{52.15 $\pm$ 0.98} &\cellcolor[HTML]{f2f2f2}$-$ \\
    \bottomrule
    \end{tabular}}
\end{table}

On Wide residual networks (WRN-28-10)~\citep{zagoruyko2016wide}, Residual networks (ResNet-10/18/34)~\citep{he2016deep}, Dense convolutional networks (DenseNet-121)~\citep{huang2017densely}, and MobileNet~\citep{howard2017mobilenets}, DeepVoro/DeepVoro++ shows a consistent improvement upon DC and S2M2\_R. Excluding DeepVoro/DeepVoro++, there is no such a method that is always better for both 5-shot and 1-shot FSL. Generally, DC is expert in 1-shot while S2M2\_2 favors 5-shot. According to Table \ref{tab:main}, we do not apply DeepVoro++ on 5-shot FSL since DeepVoro usually outperforms DeepVoro++ with more shots available.

% ==================== Precursor ====================
\section{Experiments with Different Training Procedures} \label{supp:precursor}
\subsection{Experimental Setup and Implementation Details}
Our geometric space partition model is built on top of a pretrained feature extractor, and the quality of the feature extractor will significantly affect the downstream FSL~\citep{charting2020}. Here we used another two feature extractors trained with different schemes. (1) Manifold Mixup training employs an additional Mixup loss that interpolates the data and the label simultaneously and can help deep neural network generalize better. (2) Rotation loss is widely used especially in self-supervised learning in which the network learns to predict the degree by which an image is rotated. We downloaded the corresponding pretrained models used by \citet{charting2020} and \citet{free2021} and evaluate the four methods by 500 episodes.
\subsection{Results}
%Please add the following packages if necessary:
%\usepackage{booktabs, multirow} % for borders and merged ranges
%\usepackage{soul}% for underlines
%\usepackage[table]{xcolor} % for cell colors
%\usepackage{changepage,threeparttable} % for wide tables
%If the table is too wide, replace \begin{table}[!htp]...\end{table} with
%\begin{adjustwidth}{-2.5 cm}{-2.5 cm}\centering\begin{threeparttable}[!htb]...\end{threeparttable}\end{adjustwidth}
\begin{table}[!hp]\centering
    \caption{Comparison of performance with different meta-training procedures.}\label{tab:precursor}
    \resizebox{.8\textwidth}{!}{%
    \small
    \begin{tabular}{lccccr}\toprule
    \textbf{Methods} &\multicolumn{2}{c}{\textbf{Self-supervision w/ Rotation Loss}} &\multicolumn{2}{c}{\textbf{Manifold Mixup}} \\\cmidrule{1-5}
    &1-shot &5-shot &1-shot &5-shot \\\midrule
    DC &66.43 $\pm$ 0.86 &82.61 $\pm$ 0.62 &62.61 $\pm$ 0.90 &78.62 $\pm$ 0.68 \\
    S2M2\_R &58.33 $\pm$ 0.96 &79.26 $\pm$ 0.66 &48.11 $\pm$ 0.96 &72.74 $\pm$ 0.74 \\
    \cellcolor[HTML]{f3f3f3}\textbf{DeepVoro} &\cellcolor[HTML]{f3f3f3}68.80 $\pm$ 0.86 &\cellcolor[HTML]{f3f3f3}\textbf{85.70 $\pm$ 0.58} &\cellcolor[HTML]{f3f3f3}65.00 $\pm$ 0.93 &\cellcolor[HTML]{f3f3f3}\textbf{83.19 $\pm$ 0.65} \\
    \cellcolor[HTML]{f3f3f3}\textbf{DeepVoro++} &\cellcolor[HTML]{f3f3f3}\textbf{69.23 $\pm$ 0.89} &\cellcolor[HTML]{f3f3f3}$-$ &\cellcolor[HTML]{f3f3f3}\textbf{65.25 $\pm$ 0.93} &\cellcolor[HTML]{f3f3f3}$-$ \\
    \bottomrule
    \end{tabular}}
\end{table}
As shown in Table \ref{tab:precursor}, DeepVoro/DeepVoro++ achieves best results for both rotation loss and Mixup loss. Interestingly, there is a substantial gap between the two training schemes when they are used out-of-the-box for downstream FSL (${\triangle}$accuracy $= 10.22\%$ for 1-shot and ${\triangle}$accuracy $= 6.52\%$ for 5-shot), but after DeepVoro/DeepVoro++, this gap becomes narrowed (${\triangle}$accuracy $= 3.98\%$ for 1-shot and ${\triangle}$accuracy $= 2.51\%$ for 5-shot), suggesting the strength of DeepVoro/DeepVoro++ to make the most of the pretrained models

% ==================== Cross Domain ====================
\section{Cross Domain Few-Shot Learning} \label{supp:cross}
\subsection{Experimental Setup and Implementation Details}
Cross-domain FSL is more challenging than FSL in which base classes and novel classes come from essentially distinct domains. To examine the ability of our method for cross-domain FSL, we apply the feature extractor trained on mini-ImageNet (CUB) on the few-shot data in CUB (mini-ImageNet) for coarse-to-fine (fine-to-coarse) domain shifting.
\subsection{Results}
%Please add the following packages if necessary:
%\usepackage{booktabs, multirow} % for borders and merged ranges
%\usepackage{soul}% for underlines
%\usepackage[table]{xcolor} % for cell colors
%\usepackage{changepage,threeparttable} % for wide tables
%If the table is too wide, replace \begin{table}[!htp]...\end{table} with
%\begin{adjustwidth}{-2.5 cm}{-2.5 cm}\centering\begin{threeparttable}[!htb]...\end{threeparttable}\end{adjustwidth}
\begin{table}[!hp]\centering
    \caption{Comparison of performance on cross-domain FSL.}\label{tab:cross}
    \resizebox{.7\textwidth}{!}{%
    \small
    \begin{tabular}{lccccr}\toprule
    \textbf{Methods} &\multicolumn{2}{c}{\textbf{CUB $\Rightarrow$ mini-ImageNet}} &\multicolumn{2}{c}{\textbf{mini-ImageNet $\Rightarrow$ CUB}} \\\cmidrule{1-5}
    &1-shot &5-shot &1-shot &5-shot \\\midrule
    DC &46.25 $\pm$ 0.93 &62.99 $\pm$ 0.81 &54.64 $\pm$ 0.87 &\textbf{72.83 $\pm$ 0.71} \\
    S2M2\_R &41.15 $\pm$ 0.84 &58.09 $\pm$ 0.79 &49.01 $\pm$ 0.88 &69.99 $\pm$ 0.71 \\
    \cellcolor[HTML]{f3f3f3}\textbf{DeepVoro} &\cellcolor[HTML]{f3f3f3}46.15 $\pm$ 0.90 &\cellcolor[HTML]{f3f3f3}\textbf{64.60 $\pm$ 0.80} &\cellcolor[HTML]{f3f3f3}49.03 $\pm$ 0.87 &\cellcolor[HTML]{f3f3f3}\textbf{72.30 $\pm$ 0.74} \\
    \cellcolor[HTML]{f3f3f3}\textbf{DeepVoro++} &\cellcolor[HTML]{f3f3f3}\textbf{47.83 $\pm$ 0.97} &\cellcolor[HTML]{f3f3f3}$-$ &\cellcolor[HTML]{f3f3f3}\textbf{54.88 $\pm$ 0.92} &\cellcolor[HTML]{f3f3f3}$-$ \\
    \bottomrule
    \end{tabular}}
\end{table}
Basically, DeepVoro/DeepVoro++ is more stable than the other two method with a shifting domain, especially on fine-to-coarse FSL (CUB to mini-ImageNet), with an improvement of $6.68\%$ for 1-shot and $6.51\%$ for 5-shot than S2M2\_R, and is comparable with DC on coarse-to-fine FSL (mini-ImageNet to CUB).

% ==================== More Figures ====================
\section{Additional Figures} \label{supp:more-figures}

\begin{figure}[t]
  \centering
  \includegraphics[width=0.55\linewidth]{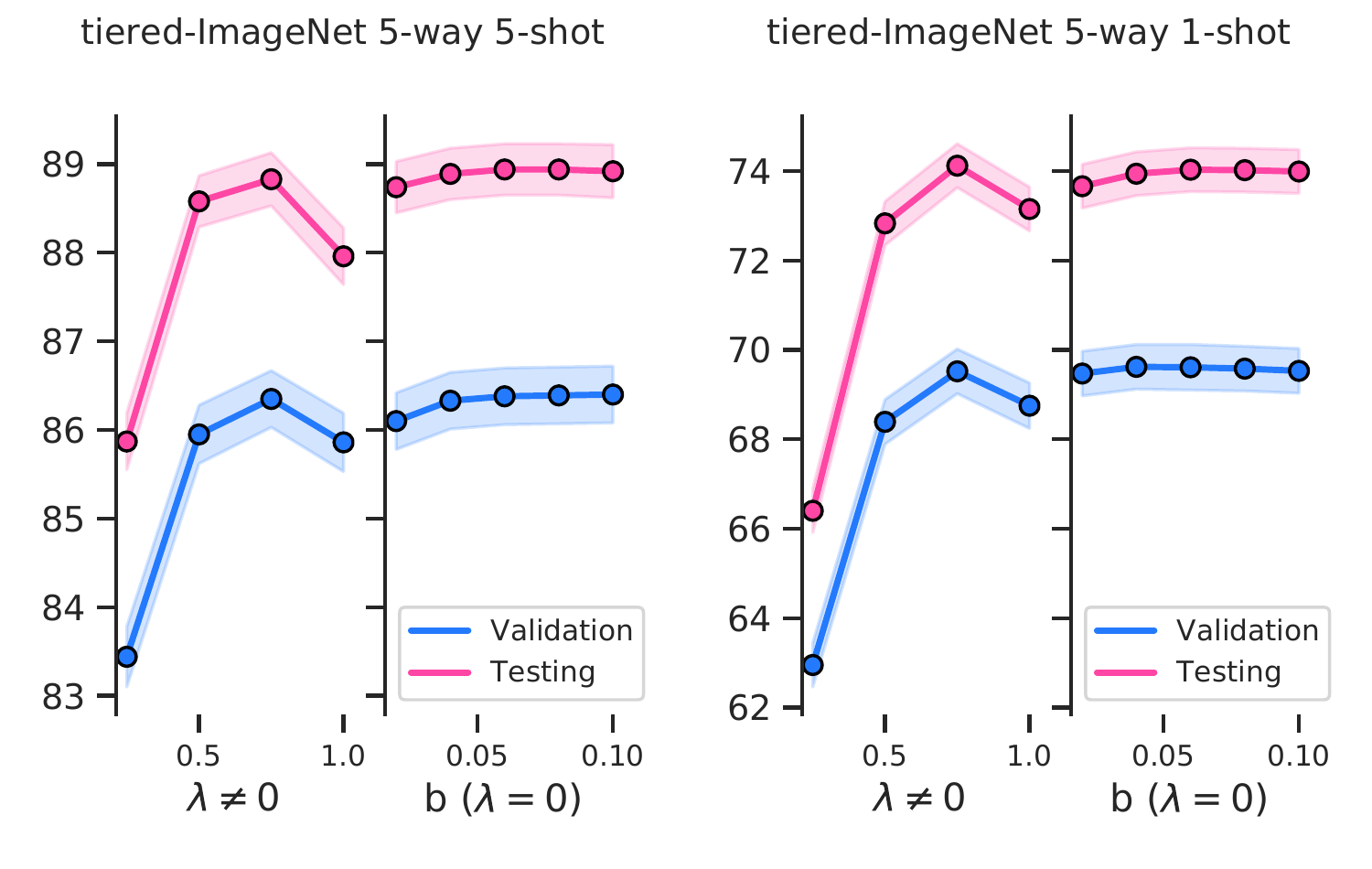}
  \caption{The 5-way few-shot accuracy of VD with different $\lambda$ and $b$ values on tiered-ImageNet datasets.}\label{fig:beta2}
\end{figure}

\begin{figure}[!hp]
    \centering
    \includegraphics[width=\linewidth]{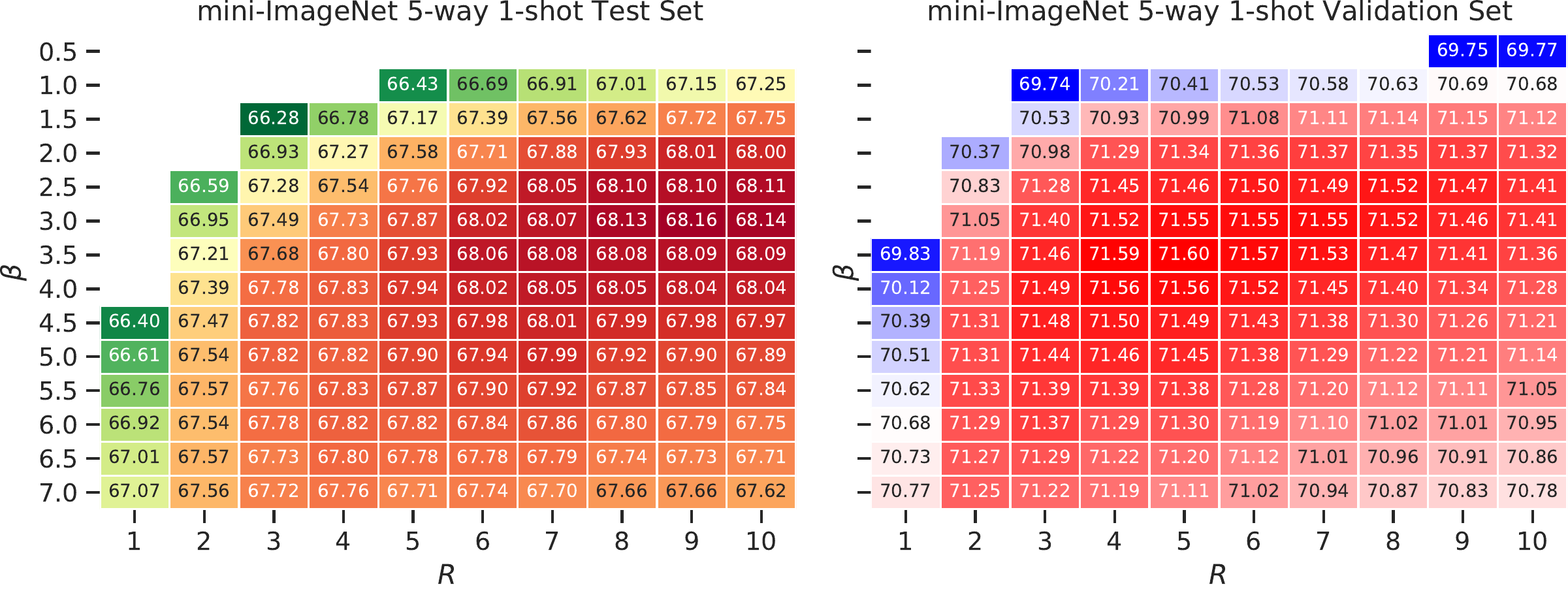}
    \caption{The 5-way 1-shot accuracy with different $\beta$ and $R$ values on mini-ImageNet testing (left) and validation (right) datasets.}\label{fig:heat_mini}
\end{figure}
% testing (left) and validation (right) sets

\begin{figure}[t]
  \centering
  \includegraphics[width=\linewidth]{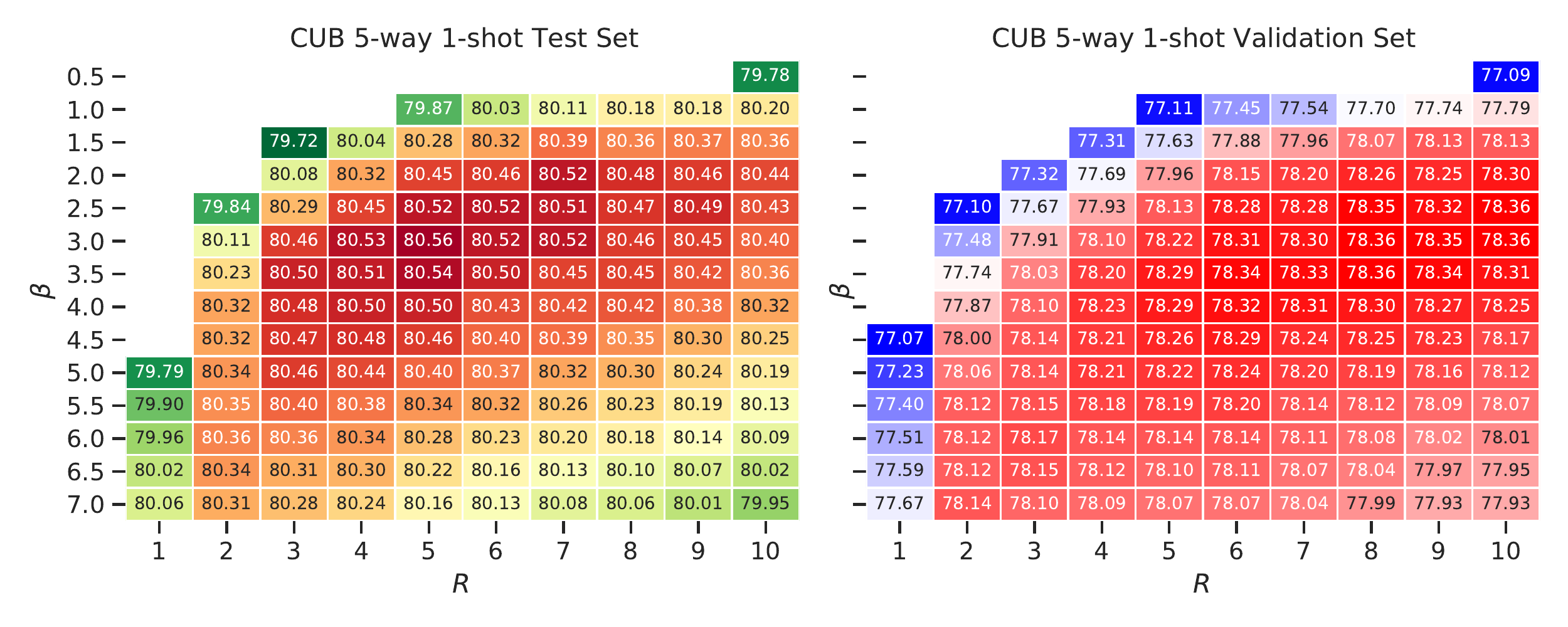}
  \caption{The 5-way 1-shot accuracy with different $\beta$ and $R$ values on CUB testing (left) and validation (right) datasets.}\label{fig:heat_cub}
\end{figure}

\begin{figure}[t]
  \centering
  \includegraphics[width=\linewidth]{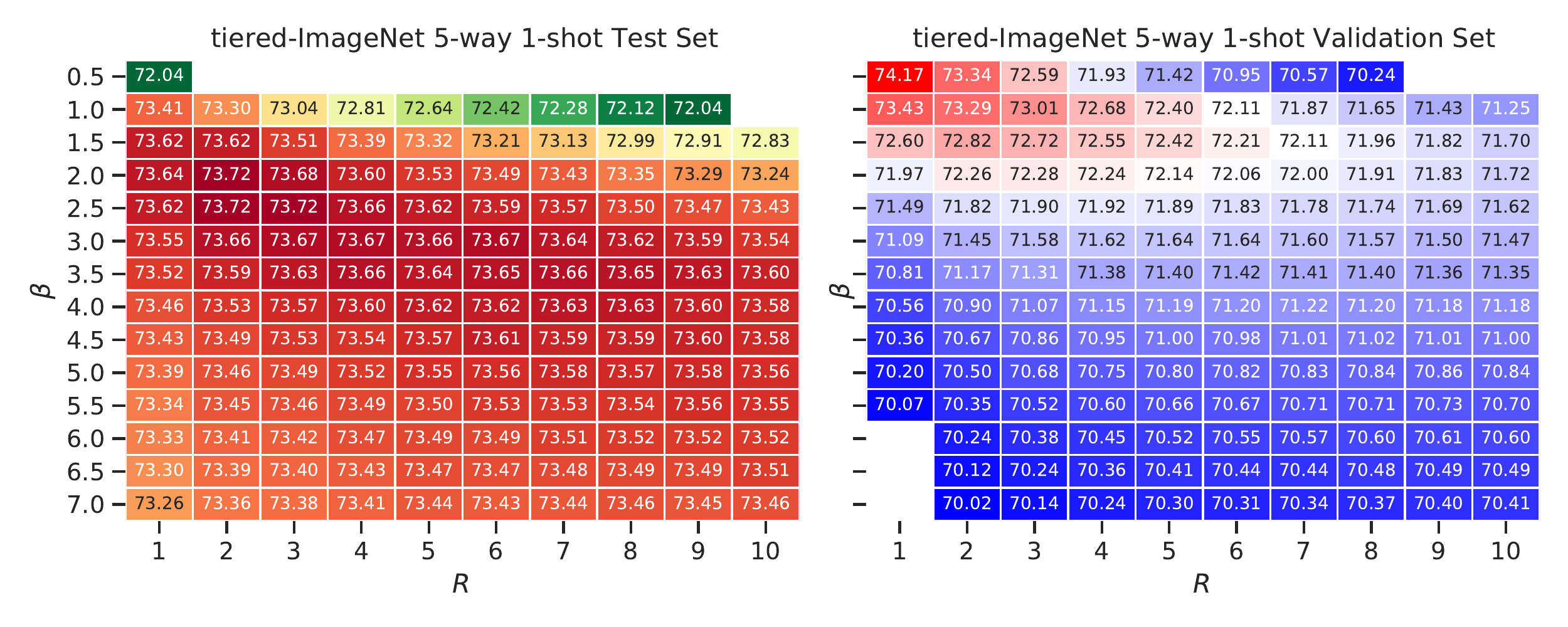}
  \caption{The 5-way 1-shot accuracy with different $\beta$ and $R$ values on tiered-ImageNet testing (left) and validation (right) datasets.}\label{fig:heat_tie}
\end{figure}

\begin{figure}[t]
  \centering
  \includegraphics[width=0.8\linewidth]{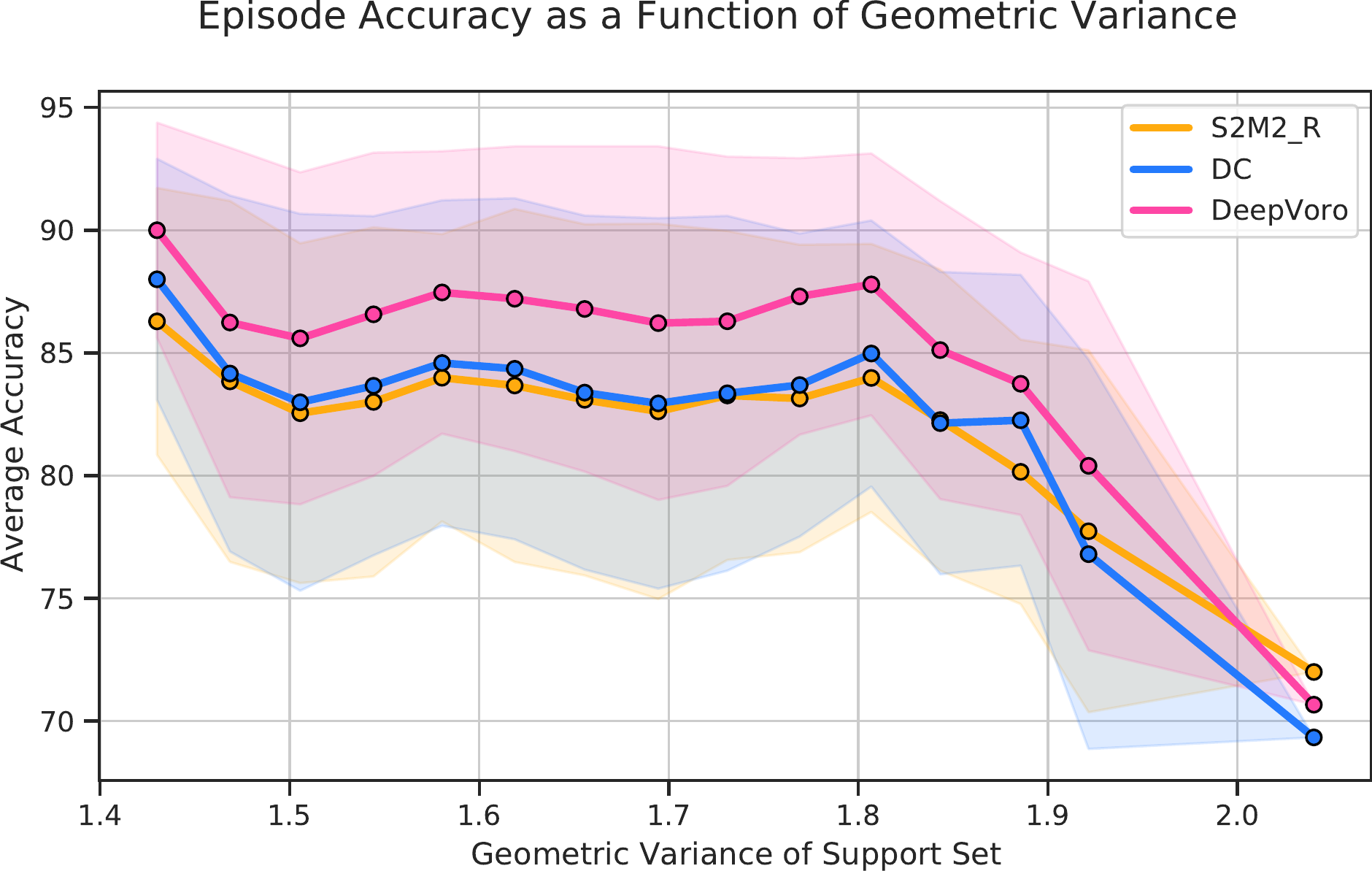}
  \caption{\textcolor{black}{Outlier Analysis. In order to investigate the resistance to outlier for various methods, we here define \textbf{Geometric Variance} ($GV$) as a reflection of the possibility that a support set contains an outlier, due to the difficulty of inferring out-of-distribution sample from merely 1 or 5 samples. Formally, for a support set $\gS = \{(\vz_i, y_i)\}_{i = 1}^{K \times N}$, its Geometric Variance is defined as $GV(\gS) = \frac{1}{K} \sum_{k=1}^{K} \frac{1}{\binom N2} \sum_{i \in \{1,...,N\}, j \in \{1,...,N\}} ||\vz_i - \vz_j||_2$, measuring the average point-to-point distance in this support set. The larger $GV$ is, with higher probability $\gS$ contains an outlier. For every episode in 2000 episodes from 5-way 5-shot mini-ImageNet data, $GV$ is computed as well as the episode accuracy. As shown in Figure \ref{fig:GeoV}, very high $GV$ causes a significant decrease of episode accuracy, but our method DeepVoro is more resistant to the presence of outliers.}}\label{fig:GeoV}
\end{figure}

\end{document}